\documentclass[a4paper,fleqn]{cas-sc}

\usepackage[authoryear,longnamesfirst]{natbib}
\usepackage{relsize}

\usepackage{xfrac}
\usepackage{amsfonts,amssymb}
\usepackage{bm}
\usepackage{caption}
\usepackage{subcaption}
\usepackage[ruled,noend,vlined]{algorithm2e} 
\usepackage{tikz,tikz-3dplot}

\newdefinition{definition}{Definition}
\newdefinition{example}{Example}
\newtheorem{lemma}{Lemma}
\newtheorem{assumption}{Assumption}
\newtheorem{theorem}{Theorem}
\newtheorem{corollary}{Corollary}
\newproof{proof}{Proof}

\def\tsc#1{\csdef{#1}{\textsc{\lowercase{#1}}\xspace}}
\tsc{WGM}
\tsc{QE}
\tsc{EP}
\tsc{PMS}
\tsc{BEC}
\tsc{DE}

\DeclareMathOperator*{\argmin}{argmin}

\renewcommand{\vec}[1]{\bm{#1}}

\newcommand{\ClassNP}{\ensuremath{\textsf{NP}}\xspace}

\newcommand{\ClassPP}{\ensuremath{\textsf{PP}}\xspace}

\newcommand{\ClassNPPP}{\ensuremath{\smash{\textsf{NP}^\textsf{PP}}\xspace}}
\newcommand{\ClassSigmaTwoP}{\ensuremath{\smash{\mathsf{\Sigma}_2^{\textsf{p}}}}\xspace}
\newcommand{\ClassWTwo}{\ensuremath{\mathsf{W}[2]}\xspace}

\newcommand{\MIPRelax}{MIP$_{\text{relax}}$}

\begin{document}
\let\WriteBookmarks\relax
\def\floatpagepagefraction{1}
\def\textpagefraction{.001}
\shorttitle{Probabilistic Linear Explanations}
\shortauthors{F. Koriche et~al.}

\title[mode = title]{Probabilistic Linear Explanations}                      
\tnotemark[1]
\tnotetext[1]{This manuscript is an extended version of a paper presented at the 41st Conference on Uncertainty in Artificial Intelligence (UAI 2025), which received a Best Paper Award.}

\author[1]{Frédéric Koriche}[]
\cormark[1]
\fnmark[1]
\ead{koriche@cril.fr}

\affiliation[1]{
  organization={Computer science Research Institute of Lens (CRIL), UMR CNRS 8188, University of Artois},
  addressline={Rue Jean Souvraz SP 18}, 
  city={Lens},
  postcode={F-62307}, 
  country={France}
}

\author[1]{Jean-Marie Lagniez}[]
\fnmark[2]
\ead{lagniez@cril.fr}

\author[1]{Chi Tran}[]
\fnmark[2]
\ead{tran@cril.fr}

\cortext[cor1]{Corresponding author}

\begin{abstract}
Formal explainability provides mathematically grounded justifications for individual predictions. However, abductive explanations often exceed human cognitive limits by involving too many features, while probabilistic relaxations have remained largely limited to categorical classification. We present a unified framework for probabilistic explainability based on sparse, anchored linear models, applicable to both binary classification and continuous regression. By mapping instances to the Boolean hypercube, our linear explanations strictly generalize subset-based approaches: they capture both the magnitude and direction of feature contributions while enforcing a prescribed sparsity budget $k$. We show that minimizing the relevance error for such explanations is \ClassNPPP-hard when the underlying model is a neural network, and we relate this intractable objective to a tractable surrogate---the fidelity error. For a parameterized family of local distributions, the relevance error of any $k$-sparse explanation is bounded by its fidelity error up to a multiplicative factor that remains small locally. We address the resulting empirical problem using two complementary approaches: a Mixed Integer Programming (MIP) formulation that yields provably optimal empirical solutions while maintaining polynomial sample complexity, and a polynomial-time Iterative Hard Thresholding (IHT) algorithm with provable approximation guarantees. Empirical evaluations show that, unlike state-of-the-art baselines such as LIME and MAPLE, our explanations satisfy both the anchoring and sparsity constraints by construction, while consistently achieving lower relevance error.
\end{abstract}

\begin{keywords}
Explainable Artificial Intelligence \sep Probabilistic Explanations \sep
Sparse Linear Functions \sep Iterative Hard Thresholding \sep
Mixed Integer Programming 
\end{keywords}

\maketitle


\section{Introduction}
\label{sec:introduction}

As machine learning models increasingly influence high-stakes domains---such as criminal justice, medical diagnosis, and social scoring---the need for ethics, fairness, and safety has become more pressing than ever. Explainable Artificial Intelligence (XAI) confronts this challenge by providing methods that enable users to interpret model behavior without requiring deep technical expertise \citep{Burkart.JAIR.2021,Molnar.Book.2025}. More recently, the subfield of \emph{formal explainability} has emerged to offer rigorous mathematical guarantees regarding explanation quality, size, and semantics. The primary aim is to establish robust theoretical foundations for explaining predictions, thereby systematically fostering trust and confidence in model capabilities \citep{Ignatiev.IJCAI.2020, MarquesSilva.AAAI.2022}.

When a machine learning model makes a consequential decision, the most natural question is, ``Why?'' Formal explainability seeks to translate this question into a clear, interpretable answer with mathematical guarantees. For a given data instance $\vec{x}$ and classifier $f$, the aim is to identify a rule that justifies the output $f(\vec{x})$. Such a rule can be represented by a feature subset $S$, so that changing features outside $S$ does not alter the outcome $f(\vec{x})$. The restriction of $\vec{x}$ to $S$, denoted $\vec{x}_S$, thus contains all necessary information to determine $f(\vec{x})$. Accordingly, $S$ is often called a (weak) \emph{abductive explanation} \citep{Cooper.AI.2023,Ignatiev.AAAI.2019} or a \emph{sufficient reason} \citep{Barcelo.NeurIPS.2020,Darwiche.ECAI.2020}.

Despite their logical rigor, abductive explanations are often too complex for practical use \citep{Ignatiev.CP.2020, Ignatiev.AAAI.2019}. This limitation fundamentally stems from human cognitive constraints. Classic psychological studies show that while people can retain about seven distinct pieces of information in short-term memory \citep{Miller.PR.1956}, our ability to process interacting variables is even more limited---typically capped at four or five elements before cognitive overload \citep{Halford.BBS.1998, JohnsonLaird.PNAS.2010}. As a result, empirical XAI research emphasizes the need for concise explanations to achieve true interpretability \citep{Lage.HCOMP.2019, Narayanan.arXiv.2018}.

To overcome these cognitive bottlenecks, recent research has shifted toward \emph{probabilistic explanations}, which seek to balance certainty with interpretability \citep{Blanc.NeurIPS.2021, Izza.JAR.2023, Waldchen.JAIR.2021}. In this framework, a probabilistic explanation for a classifier $f$ and instance $\vec{x}$ is a small feature subset $S$ such that $\vec{x}_S$ determines $f(\vec{x})$ with high probability. The effectiveness of $S$ is measured by its \emph{relevance error}: the probability that $f$ assigns a different label to a random instance $\vec{z}$ than to $\vec{x}$, even when $\vec{z}$ and $\vec{x}$ are indistinguishable on $S$. Evaluated with respect to a distribution $\mathcal{D}$---such as the uniform distribution or a local neighborhood---finding a probabilistic explanation becomes a constrained stochastic optimization problem. For instance, seeking an explanation with the lowest relevance error, subject to a user-defined size bound $k$, leads to the following formulation:
\begin{equation}
\label{pb:classification}
\tag{P1}
\begin{aligned}
\text{minimize} \quad & \mathbb{P}_{\vec{z} \sim \mathcal{D}}[f(\vec{z}) \neq f(\vec{x}) \mid \vec{z}_S = \vec{x}_S] \\
\text{subject to} \quad & |S| \leq k
\end{aligned}
\end{equation}

\begin{example}
\label{ex:loan-classification}
For illustration, consider a binary classifier $f$ that evaluates loan applications, where the applicant’s profile $\vec{x}$ is represented by a set of discrete, Boolean propositions. Suppose applicant $\vec{x}$ is approved ($f(\vec{x}) = +1$), even though their credit history is shorter than $10$ years. A strict abductive explanation might require fixing several features---such as 
$[\text{Account Age} \geq 5\text{ yrs}]$, $[\text{Citizen} = \text{Yes}]$, $[\text{Income} \geq 65\text{k}]$, $[\text{Debt-to-Income Ratio (DTI)} \leq 30\%]$, 
$[\text{Credit History} < 10\text{ yrs}]$, and $[\text{Employed} \geq 2\text{ yrs}]$---to mathematically guarantee that $f$ outputs $+1$. However, presenting such a lengthy rule risks the very cognitive overload that XAI seeks to prevent. By instead framing the explanation as the stochastic optimization problem in \eqref{pb:classification} 
and imposing a cognitive size limit of $k=2$, we might identify a smaller subset $S = \{[\text{Income} \geq 65\text{k}], [\text{DTI} \leq 30\%]\}$. In this case, the relevance error is the probability that a random applicant $\vec{z} \sim \mathcal{D}$ matching $\vec{x}$ on these two specific conditions would be denied the loan ($f(\vec{z}) = -1$). 
If this error is $0.02$, then among all applicants agreeing with $\vec{x}$ on these two conditions, only $2\%$ receive a different decision. The user receives a clear, digestible answer, while the probabilistic framework rigorously quantifies the rare edge cases where the explanation does not hold.
\end{example}

While probabilistic explanations successfully address cognitive limitations, their use has largely been restricted to categorical classification. This prompts a key question: \emph{how can these guarantees be extended to a broader class of predictive models?} In this paper, we present a unified framework that represents data instances as vectors in the Boolean hypercube $\{-1,+1\}^d$. 
Continuous or categorical attributes are mapped to Boolean features in advance, following standard practice in model-agnostic explainability \citep{Ribeiro.KDD.2016}. Our approach encompasses both binary classification models---where $f$ takes values in $\{-1,+1\}$---and continuous regression models, where $f$ spans an interval such as $[-1, +1]$.\footnote{Standardizing the codomain to $[-1,+1]$ simplifies notation and highlights the mathematical parallel with the zero-one loss in binary classification. All theoretical results extend to any bounded continuous range $[-c, +c]$ for constant $c > 0$ by adjusting the normalization factor.} Importantly, we impose no assumptions on the internal structure of $f$: whether $f$ is a tree ensemble, support vector machine, or deep neural network, it is treated entirely as a black box, accessible only through value queries.

When explaining a continuous prediction $f(\vec{x})$, selecting a feature subset $S$ alone often fails to capture both the magnitude and direction of each feature's influence. $S$ acts as a binary mask---a support of a weight vector that indicates only which features are present, without conveying the strength or sign of their contributions. To address this shortcoming, we define explanations as \emph{sparse linear functions} $\vec{w} \in \mathbb{R}^d$ that satisfy the anchoring hyperplane condition $\vec{w} \cdot \vec{x} = f(\vec{x})$. This constraint, related to the \emph{local accuracy} property \citep{Lundberg.NeurIPS.2017}, ensures that any linear explanation $\vec{w}$ remains perfectly consistent with the black-box model at $\vec{x}$. The sparsity of $\vec{w}$---its number of nonzero coefficients, $\|\vec{w}\|_0$---reflects its conciseness. By allowing weights to take continuous values, $\vec{w}$ captures both the direction and magnitude of each feature’s contribution, thereby improving interpretability and expressiveness.

Crucially, sparse linear explanations provide a strict generalization of subset-based explanations. Since $\vec{w}$ is zero outside its support $S$, any instance $\vec{z}$ that matches $\vec{x}$ on $S$ will also satisfy $\vec{w} \cdot \vec{z} = \vec{w} \cdot \vec{x}$, so the set of instances described by $\vec{w}$ always contains those described by $S$. If the coefficients of $\vec{w}$ are \emph{strictly non-compensatory} (that is, no combination of feature weights cancels another), the two sets coincide and we recover the original subset explanation. Otherwise, $\vec{w}$ describes a broader set of instances and encodes more expressive rules: instead of requiring every feature in $S$ to be fixed, it only constrains a weighted combination of features, allowing the explanation to capture compensatory effects between conditions.

To assess how well a sparse linear model approximates the black-box model, we replace the conditional zero-one loss in \eqref{pb:classification} with a normalized conditional quadratic loss: $\ell(y, y') = \tfrac{1}{4}(y - y')^2$. In our unified framework, this loss is applied consistently to both continuous regression and binary classification tasks. The factor of $1/4$ ensures that for binary labels $\{-1, +1\}$, the quadratic loss matches the zero-one classification error exactly. The relevance error for an explanation $\vec{w}$ is defined as the conditional expectation of $\ell(f(\vec{z}), f(\vec{x}))$, given $\vec{w} \cdot \vec{z} = \vec{w} \cdot \vec{x}$. With respect to the chosen distribution $\mathcal{D}$, the central optimization problem addressed in this paper is:
\begin{equation}
\label{pb:relevance}
\tag{P2}
\begin{aligned}
\text{minimize} \quad & \mathbb{E}_{\vec{z} \sim \mathcal{D}}\left[\ell(f(\vec{z}),f(\vec{x})) \mid \vec{w} \cdot \vec{z} = \vec{w} \cdot \vec{x}\right] \\
\text{subject to} \quad & \vec{w} \cdot \vec{x} = f(\vec{x}) \quad \text{and} \quad \|\vec{w}\|_0 \leq k
\end{aligned}
\end{equation}

\begin{example}
\label{ex:loan-regression}
Consider now the same applicant in a continuous regression task: assigning a personalized loan interest rate, where lower values are preferable. Suppose $\vec{x}$ receives a rate of $8\%$ ($f(\vec{x}) = 0.08$). Reporting the feature subset $S = \{[\text{DTI} \leq 30\%], [\text{Credit History} \geq 10\text{ yrs}]\}$ shows which variables were important, but not how they influenced the outcome. 
Solving \eqref{pb:relevance} with $k = 2$ yields $w_{[\text{DTI} \leq 30\%]} = -0.03$ and $w_{[\text{Credit History} \geq 10\text{ yrs}]} = -0.11$. Since the contribution of a feature is $w_i x_i$, and $\vec{x}$ satisfies the first condition ($x_i = +1$) but not the second ($x_i = -1$), the rate decomposes as $-0.03 + 0.11 = 0.08$. The user thus learns that the low debt ratio contributes $-3$ points to the rate while the short credit history contributes $+11$, the two effects partially compensating---an insight that no feature subset can convey.
\end{example}

While \eqref{pb:relevance} provides a unified framework for generating linear explanations, solving it exactly presents severe computational hurdles. In Section~\ref{sec:problem}, we demonstrate that for neural networks, \eqref{pb:relevance} is \ClassNPPP-hard---placing it well beyond the capabilities of modern exact solvers. However, this hardness does not preclude the existence of algorithms with approximation guarantees relative to a tractable ``surrogate'' error. We focus on the \emph{fidelity error}, a widely used metric in model-agnostic explainability \citep{Lakkaraju.AIES.2019, Li.ICLR.2021, Ribeiro.KDD.2016}, which measures the normalized quadratic loss incurred by $\vec{w}$ on random instances labeled by $f$. By substituting the conditional objective in \eqref{pb:relevance} with the fidelity error, we obtain:
\begin{equation}
\label{pb:fidelity}
\tag{P3}
\begin{aligned}
\text{minimize} \quad & \mathbb{E}_{\vec{z} \sim \mathcal{D}}\left[\ell(\vec{w} \cdot \vec{z}, f(\vec{z}))\right] \\
\text{subject to} \quad & \vec{w} \cdot \vec{x} = f(\vec{x}) \quad \text{and} \quad \|\vec{w}\|_0 \leq k
\end{aligned}
\end{equation}

The connection between \eqref{pb:relevance} and \eqref{pb:fidelity} relies on the choice of probability distribution $\mathcal{D}$. Drawing on distance-based models for discrete spaces, such as the Mallows model \citep{Mallows.Biometrika.1957} and its extensions \citep{Fligner.Book.1993, Marden.Book.1996}, we focus on the following family of distributions:
\begin{align*}
\mathcal D(\vec z) \propto \exp\left(-\tfrac{\sigma}{2} \| \vec z - \vec x \|_1\right)
\end{align*}
Here, $\tfrac{1}{2}\|\vec{z} - \vec{x}\|_1$ is the Hamming distance (the number of features on which $\vec{x}$ and $\vec{z}$ differ), and $\sigma$ is a concentration parameter controlling how tightly instances $\vec{z}$ cluster around $\vec{x}$. Notably, setting $\sigma = 0$ recovers the uniform distribution over the instance space. 
Throughout, $\sigma$ is a user-supplied parameter that specifies how local the explanation is meant to be; Section~\ref{sec:experiments} reports how our results vary with it.

In Section~\ref{sec:pp}, we prove that for this parameterized family, the relevance error of any feasible solution to \eqref{pb:relevance} is at most $(1 + e^{-\sigma})^k$ times its fidelity error. 
Furthermore, since \eqref{pb:fidelity} involves a non-conditional expected loss function, it is highly amenable to empirical sampling. Thus, by approximating the expected fidelity error over $m$ samples, we arrive at the following practical, data-driven optimization problem:
\begin{equation}
\label{pb:empirical_fidelity}
\tag{P4}
\begin{aligned}
\text{minimize} \quad & \tfrac{1}{m} \sum_{i = 1}^m \ell(\vec w \cdot \vec z_i, f(\vec z_i)) \\
\text{subject to} \quad & \vec{w} \cdot \vec{x} = f(\vec{x}) \quad \text{and} \quad \|\vec{w}\|_0 \leq k
\end{aligned}
\end{equation}
This task generalizes the classic \emph{subset selection} \citep{Beale.Biometrika.1967, Hocking.Tech.1967} problem, also known as \emph{sparse regression} \citep{Natarajan.SJC.1995}, and is therefore \ClassNP-hard. 
It can nonetheless be solved exactly by Mixed Integer Programming (MIP), which remains tractable on instances of moderate size. 
We further show that a sample size $m$ polynomial in $k$, $1/\varepsilon$ and $\log d$ suffices: with high probability, the resulting $k$-sparse explanation has relevance error at most $(1 + e^{-\sigma})^k(v^* + \varepsilon)$, where $v^*$ is the optimal value of \eqref{pb:empirical_fidelity}. The quality of MIP-based explanations therefore improves as the concentration $\sigma$ increases and, notably, the sample complexity does not depend on $\sigma$ at all.

As a strictly polynomial-time alternative, Section~\ref{sec:np} adapts the Iterative Hard Thresholding (IHT) algorithm \citep{Blumensath.JFAA.2008, Blumensath.ACHA.2009} to compute approximate solutions to \eqref{pb:empirical_fidelity}. Our variant projects, at each iteration, onto the intersection of the $k$-sparsity constraint and the anchoring hyperplane---an operation that is both exact and computable in $\mathcal{O}(d \log d + k^2)$ time. For our family of distributions, IHT produces $k$-sparse explanations whose relevance error is, with high probability, 
at most $(1 + e^{-\sigma})^k(c \cdot v^* + \varepsilon)$, where $c > 1$ is an explicit multiplicative factor governed by $k$ and $\sigma$. However, the sample complexity for IHT grows with $\cosh^4(\sigma/2)$---that is, as $e^{2\sigma}$ for large $\sigma$. This discrepancy with the MIP bound is not merely analytical: while the MIP guarantee only requires uniform convergence of the empirical fidelity error, the IHT guarantee also depends on restricted strong convexity and smoothness constants, which deteriorate as $\mathcal{D}$ becomes more concentrated around $\vec{x}$. 
As a result, the parameter $\sigma$ governs a meaningful computational and statistical trade-off. In the \emph{localized regime} (large $\sigma$), 
the relevance bound is tight and MIP is preferred, as its sample complexity remains independent of $\sigma$. In contrast, IHT faces a double penalty: its sample complexity grows as $e^{2\sigma}$ and its approximation factor worsens as $e^{\sigma}$. In the \emph{intermediate regime}, where $\sigma$ is large enough for the relevance bound to be informative but small enough to keep both the sampling cost and the approximation factor moderate, IHT becomes a compelling alternative, yielding highly relevant explanations in polynomial time.

Section~\ref{sec:experiments} empirically benchmarks our MIP and IHT methods against the LIME \citep{Ribeiro.KDD.2016} and MAPLE \citep{Plumb.NeurIPS.2018} explainers on both classification and regression tasks. Our results show that MAPLE consistently exceeds the sparsity budget $k$ and frequently violates the anchoring condition as well, placing its explanations outside the intended interpretability regime. While LIME satisfies the sparsity budget, it systematically deviates from anchoring and thus fails to reproduce the black-box prediction $f(\vec{x})$ at the instance $\vec{x}$ being explained. Because LIME operates over a strictly larger hypothesis space, it sometimes achieves a lower empirical fidelity error; our measurements indicate that this apparent advantage results directly from the anchoring deviation. Nevertheless, across variations in $k$, $\sigma$, and $m$, both IHT and MIP consistently outperform LIME in terms of relevance---the objective our framework is designed to optimize. Between our two methods, relevance rarely differs by more than a few hundredths, so the choice becomes essentially computational: MIP certifies optimality within seconds on small and medium benchmarks but loses this certificate on the largest ones, whereas IHT is by far the fastest explainer and remains effective at every scale.

In summary, our main contributions are as follows:
\begin{itemize}
\item We present a unified formal framework for probabilistic explainability, bridging categorical classification and continuous regression through sparse, anchored linear models that strictly generalize subset-based explanations. 
We also prove that computing the optimal relevance error for neural network black-box models is \ClassNPPP-hard.
\item We establish a theoretical link between the intractable relevance error and a tractable surrogate---the fidelity error. 
By introducing a parameterized family of local distributions, we show that the relevance error of a $k$-sparse explanation is bounded by $(1 + e^{-\sigma})^k$ times its fidelity error.
\item We formulate the empirical fidelity problem, encode its exact solution as a MIP, and approximate it with a strictly polynomial-time IHT variant.
Our experiments validate the theoretical guarantees, demonstrating that, in contrast to MAPLE and LIME, our methods consistently satisfy both sparsity and anchoring constraints while achieving superior relevance. 
Furthermore, they offer complementary practical advantages: MIP reliably finds optimal solutions for medium-dimensional problems, while IHT scales efficiently to high-dimensional datasets.
\end{itemize}

The remainder of the paper is organized as follows. Section~\ref{sec:related} reviews related work. Section~\ref{sec:problem} introduces sparse anchored linear explanations and establishes the hardness of \eqref{pb:relevance}. Section~\ref{sec:pp} relates the relevance error to the fidelity error and derives the MIP formulation, while Section~\ref{sec:np} presents our IHT variant and its approximation guarantee. Section~\ref{sec:experiments} reports the experimental evaluation, and Section~\ref{sec:conclusion} concludes.


\section{Related Work}
\label{sec:related}

To contextualize our contributions, we organize the landscape of local explainability into two primary categories: probabilistic explanations, which emphasize rigorous theoretical guarantees, and linear explanations, 
which focus on flexibility and the ability to handle continuous outcomes.

\subsection{Probabilistic Explanations}

Probabilistic explanations, as a natural generalization of abductive explanations, have become increasingly prominent in formal explainability. 
Whereas traditional abductive reasoning seeks a feature subset that logically entails a prediction, probabilistic approaches relax this requirement to address human cognitive limits, aiming for subsets that guarantee the prediction only with high probability.

In categorical classification, a feature subset $S$ is called a \emph{$k$-size $\tau$-relevant} probabilistic explanation for a reference instance $\vec{x}$, classifier $f$, and distribution $\mathcal{D}$ if $|S| \leq k$ and its zero-one relevance error is at most $1 - \tau$. Recall that for binary labels, this zero-one error coincides exactly with the normalized quadratic relevance error of \eqref{pb:relevance}, so the two formulations agree on classification tasks. Within this framework and the setup of \eqref{pb:classification}, one can either fix a cognitive size limit $k$ and seek $S$ that minimizes relevance error \citep{Bounia.UAI.2023, Koriche.ECML.2024}, or set a relevance threshold $\tau$ and find the smallest $S$ achieving relevance error at most $1 - \tau$ \citep{Waldchen.JAIR.2021, Izza.JAR.2023}. Since exactly computing these probabilities typically requires knowledge of the model's internal architecture, both approaches are usually studied in a \emph{model-specific} (white-box) setting.

Determining whether a $k$-size $\tau$-relevant explanation exists for a given $\vec{x}$ is computationally hard: the problem is \ClassNPPP-hard for neural networks~\citep{Waldchen.JAIR.2021} and \ClassNP-hard for decision trees~\citep{Arenas.NeurIPS.2022}. Even at $\tau = 1$---the case of abductive explanations---the task remains \ClassSigmaTwoP-hard for random forests~\citep{Audemard.AAAI.2022} and neural networks~\citep{Barcelo.NeurIPS.2020}. The outlook for fixed-parameter tractability and approximability is similarly negative. Finding cardinality-minimal $\tau$-relevant explanations is \ClassWTwo-hard in $k$, even for monotone neural networks and $\tau = 1$~\citep{Ordyniak.IJCAI.2023}, and is \ClassNP-hard to approximate within any factor of $d^{1 - \alpha}$ for any $\alpha > 0$, for both neural networks~\citep{Waldchen.JAIR.2021} and decision trees~\citep{Kozachinskiy.arXiv.2023}. To the best of our knowledge, the main positive result to date applies to linear functions $f$, where a relaxed problem becomes polynomial-time approximable~\citep{Subercaseaux.AAAI.2025}. All of these results concern subset-based explanations. Whether the same barriers apply to linear explanations is not immediate: the constraint $\vec{w} \cdot \vec{z} = \vec{w} \cdot \vec{x}$ selects a far richer family of regions than the subcube induced by a subset, 
so neither hardness nor tractability transfers mechanically. We settle this question in Section~\ref{sec:problem}.

Shifting to black-box classifiers, \citet{Blanc.NeurIPS.2021} showed that if user-supplied instances $\vec{x}$ are drawn uniformly at random, a $\tau$-relevant explanation $S$ of size $k$ can be constructed, with high probability, from the decision path $T(\vec{x})$ of a depth-$k$ decision tree $T$, provided that $T$ disagrees with $f$ on at most a $1 - \tau$ fraction of the instances. 
If $k$ is polynomial in the ``average certificate complexity'' of $f$, the tree $T$ can be learned in polynomial time. In our terminology, their result links fidelity to relevance: 
a surrogate with low fidelity error yields an explanation with low relevance error. Our bound in Section~\ref{sec:pp} has the same form, but uses a sparse linear surrogate instead of a tree. Notably, their guarantee is \emph{average-case} (over random $\vec{x}$), while ours is \emph{pointwise} (for every fixed $\vec{x}$), which is the setting local explainability calls for. Consequently, the two guarantees are complementary, and neither subsumes the other.

Among model-agnostic approaches, \textsc{Anchors} \citep{Ribeiro.AAAI.2018} deserves special mention. This method, designed for black-box classifiers, outputs a feature subset (or logical decision rule) $S$ whose \emph{precision}---the probability that $f$ agrees with $f(\vec{x})$ on instances satisfying $S$---exceeds a user-specified threshold, estimated via sampling rather than computed from a white-box model. Precision, being a conditional quantity, aligns with relevance rather than fidelity in our terminology. Thus, \textsc{Anchors} sits at the intersection of probabilistic and model-agnostic approaches---incorporating elements of both, but only heuristically. Notably, there are no formal guarantees on the \emph{size} of the returned subset: the anchor simply expands until the precision threshold is met, and achieving that threshold is trivial, since including all features of $\vec{x}$ guarantees a precision of $1$. The main computational challenge---finding a \emph{short} anchor with high precision---is delegated to a greedy beam search algorithm, which lacks any approximation guarantee. Most importantly for our purposes, \textsc{Anchors} returns a subset rather than a linear function, placing it outside the class of explanations considered in our work. Accordingly, our experimental comparisons in Section~\ref{sec:experiments} focus exclusively on explainers that produce linear functions.

A related but distinct line of work concerns \emph{contrastive} (or \emph{counterfactual}) explanations, which identify a subset of features that, when modified, alter the prediction; 
we refer the reader to \citet{Guidotti.DMKD.2024} for a survey. These methods answer ``what would have to change?'', whereas abductive explanations and their probabilistic extensions answer ``why this prediction?''. The two questions have recently been brought together by \citet{Bassan.ICLR.2026}, who show that sufficient and contrastive reasons, in both their local and global forms, can all be characterized as minimizers of a single probabilistic value function, and who tie the complexity of computing them to monotonicity, submodularity, and supermodularity properties of that function. 
Their unification is orthogonal to ours: it ranges over explanation \emph{types} while remaining within feature subsets and categorical classification, whereas we unify classification and regression \emph{tasks} within the class of sparse linear functions. Because no prior probabilistic framework has encompassed continuous regression, we focus exclusively on this abductive trajectory and do not consider contrastive explanations further.

\subsection{Linear Explanations}

To address the limitations of discrete subsets---specifically their inability to explain continuous regression tasks---and to abstract away the internal architectures of target models, 
a separate line of research has focused on \emph{model-agnostic} linear explanations. These methods extract an interpretable linear surrogate from the local neighborhood of a data instance \citep{Ribeiro.KDD.2016, Lundberg.NeurIPS.2017, Plumb.NeurIPS.2018, Agarwal.ICML.2021, Zhao.UAI.2021}.

A common goal across these approaches is to minimize an unconstrained objective of the form:
\begin{equation}
    \label{eq:model_agnostic_objective}
    \tfrac{1}{m} \sum_{i=1}^m \phi_{\vec{x}}(\vec{z}_i) (f(\vec{z}_i) - \vec{w} \cdot \vec{z}_i)^2 + \psi(\vec{w})
\end{equation}
Here, $\{(\vec{z}_i,f(\vec{z}_i))\}_{i=1}^m$ is a set of labeled samples generated from a neighborhood distribution around $\vec{x}$, the weighting function $\phi_{\vec{x}}(\vec{z}_i)$ assesses the importance of $\vec{z}_i$, and the regularization term $\psi(\vec{w})$ penalizes the complexity of the linear model $\vec{w}$. For example, in the widely used LIME method \citep{Ribeiro.KDD.2016}, $\phi_{\vec{x}}(\vec{z}_i)$ is defined as a normalized distance between $\vec{z}_i$ and $\vec{x}$. Similarly, in \textsc{KernelSHAP} \citep{Lundberg.NeurIPS.2017}, $\phi_{\vec{x}}(\vec{z}_i)$ is derived from combinatorial Shapley weights to ensure game-theoretic properties. In the MAPLE method \citep{Plumb.NeurIPS.2018}, $\phi_{\vec{x}}(\vec{z}_i)$ is the fraction of trees, in a random forest trained on $f$, for which $\vec{z}_i$ falls in the same leaf as $\vec{x}$.

Despite their widespread adoption, the reliance on the unconstrained, heuristic objective in \eqref{eq:model_agnostic_objective} introduces significant vulnerabilities. A growing body of literature highlights the local instability of these methods, demonstrating that semantically indistinguishable inputs can yield vastly different linear explanations \citep{Alvarez-Melis.WHI.2018}. Furthermore, because the sampling distribution and weighting functions are not explicitly tied to formal performance bounds, these explainers can be manipulated by measurement biases or adversarial perturbations \citep{Ghorbani.AAAI.2019, Slack.AIES.2020}, and their theoretical convergence heavily depends on the choice of hyperparameters \citep{Garreau.AISTATS.2020}.

Our framework departs from this template in three fundamental ways. First, we rigorously formalize locality, moving beyond heuristic approaches. The family of $\sigma$-parameterized distributions introduced in Section~\ref{sec:introduction} serves a role similar to LIME's exponential kernel, but crucially, it acts as the formal probability law defining the expected loss---and thus the relevance error---instead of an arbitrary heuristic weight in an empirical sum. This distinction enables us to state formal bounds. Second, we impose consistency at the reference instance as a hard constraint: $\vec{w} \cdot \vec{x} = f(\vec{x})$. As noted in Section~\ref{sec:introduction}, this anchoring constraint is not new: it is the \emph{local accuracy} property of \citet{Lundberg.NeurIPS.2017}, or equivalently, the efficiency axiom $\phi_0 + \sum_{i=1}^d \phi_i = f(\vec{x})$ of Shapley-based explainers. Our innovation lies in enforcing this anchoring condition \emph{jointly} with a strict sparsity budget. Third, we enforce sparsity as the hard constraint $\|\vec{w}\|_0 \leq k$, rather than through a soft penalty $\psi$ or a relaxed selection procedure. While LIME targets a sparse support via LASSO selection of $k$ features, it provides no guarantee on the resulting support; MAPLE has no sparsity mechanism, as its local linear model assigns a weight to every feature. Solving \eqref{pb:fidelity} under both constraints yields explanations with theoretically sound relevance and fidelity bounds.

These distinctions clarify why \textsc{KernelSHAP}, though it produces linear explanations that satisfy the anchoring constraint, is not used as a baseline in Section~\ref{sec:experiments}. Its minimization of \eqref{eq:model_agnostic_objective} is instrumental rather than predictive: the Shapley kernel is chosen so that the solution of the weighted least-squares program matches the Shapley values, not to ensure local surrogate accuracy. Furthermore, its samples correspond to coalition maskings rather than to neighbors of $\vec{x}$. Most importantly, Shapley values are inherently dense---every feature receives a value---so \textsc{KernelSHAP} cannot satisfy a sparsity budget $\|\vec{w}\|_0 \leq k$. Truncating to the $k$ largest values breaks the local accuracy property (the anchoring constraint). Any explainer that cannot be made feasible for \eqref{pb:fidelity} without sacrificing its defining property is not a meaningful comparator.


\section{Problem Formulation and Complexity}
\label{sec:problem}

To rigorously study the trade-offs between explanation sparsity, exactness, and computational tractability, we must first cast the intuitions from Section~\ref{sec:introduction} into a formal optimization problem. In this section, we define the mathematical framework for generating sparse linear explanations and formulate our primary optimization objective. We then analyze the computational complexity of solving this problem exactly, demonstrating why empirical approximations are practically necessary.

\subsection{Notation}

\paragraph{Sets, Vectors, and Matrices.}
Plain letters represent functions and scalars, while boldface letters represent vectors and matrices. We denote the all-ones and all-zeros vectors as $\vec{1}$ and $\vec{0}$, respectively. The identity matrix is denoted $\vec{I}$. For a positive integer $d$, we use $[d]$ as shorthand for the index set $\{1, \ldots, d\}$. The standard basis vectors of $\mathbb{R}^d$ are denoted $\vec{e}_1, \ldots, \vec{e}_d$, where $e_{ij} = 1$ if $i = j$ and $0$ otherwise. For a vector $\vec{w} \in \mathbb{R}^d$ and a subset $S \subseteq [d]$, the projection of $\vec{w}$ to the coordinates indexed by $S$ is the vector $\vec{w}_S \in \mathbb{R}^{|S|}$. The \emph{support set} of $\vec{w} \in \mathbb{R}^d$, denoted as $\mathrm{support}(\vec{w})$, is the set of coordinates $j \in [d]$ for which $w_j \neq 0$.

\paragraph{Operations and Norms.}
The transpose of a matrix $\vec{Z}$ is denoted $\vec{Z}^{\top}$. The scalar product of two vectors $\vec{v}$ and $\vec{w}$ is denoted as $\vec{v} \cdot \vec{w}$, and their coordinate-wise (or Hadamard) product as $\vec{v} \odot \vec{w}$. For a scalar $p \in [1, \infty)$, the $L_p$ norm of $\vec{w}$ is denoted as $\|\vec{w}\|_p$. Limit cases include the maximum absolute value $\|\vec{w}\|_{\infty} = \max_{j=1}^{d} |w_j|$. Following common usage, we extend this notation to the sparsity measure $\|\vec{w}\|_0 = |\mathrm{support}(\vec{w})|$---a measure that counts the number of non-zero coordinates.

\paragraph{Balls, Hyperplanes, and Projections.}
For a scalar $p \in [0, \infty]$ and a radius $r \geq 0$, the $L_p$ ball (centered at $\vec{0}$) is defined as $\mathcal{B}_p(r) = \{ \vec{w} \in \mathbb{R}^d : \|\vec{w}\|_p \leq r \}$. For a normal vector $\vec{u} \in \mathbb{R}^d$ and a scalar $r \in \mathbb{R}$, the corresponding hyperplane is defined as $\mathcal{H}(\vec{u}, r) = \{ \vec{w} \in \mathbb{R}^d : \vec{u} \cdot \vec{w} = r \}$. Finally, the Euclidean projection of a vector $\vec{w} \in \mathbb{R}^d$ onto a set $\mathcal{V} \subseteq \mathbb{R}^d$ is given by:\footnote{When the minimizer is not unique, ties are broken arbitrarily.}
\begin{align*}
    \Pi_{\mathcal{V}}(\vec{w}) = \arg\min_{\vec{u} \in \mathcal{V}} \|\vec{w} - \vec{u}\|_2
\end{align*}

\subsection{Problem Formulation}

In this study, we focus on explanation tasks where data instances are represented by a set of \emph{interpretable literals}. Returning to our financial example, consider a bank customer who wants to understand why her loan interest rate was set at $8\%$ (i.e., $f(\vec{x}) = 0.08$). Interpretable literals such as $[\text{Income} \geq 65\text{k}]$, $[\text{DTI} \leq 30\%]$, and $[\text{Credit History} \geq 10\text{ yrs}]$ can be used to construct the explanation. Each literal is assigned the polarity present in the customer's profile, resulting in a clear and succinct if-then rule over weighted features, for example: 
\begin{align*}
-0.03\,[\text{DTI} \leq 30\%] \;+\; 0.11\,[\text{Credit History} < 10\text{ yrs}] \;\rightarrow\; \text{Rate} = 0.08
\end{align*}
where each coefficient is the contribution $w_j x_j$ of the corresponding literal, and the two contributions sum to the predicted rate.

Formally, let $[d]$ be the set of interpretable literals. Treating them as binary features, the prediction models in this study are pseudo-Boolean functions of the form $f: \{-1,+1\}^d \rightarrow [-1,+1]$. Here, $f$ is a (binary) classifier if its codomain is $\{-1,+1\}$, and a regressor if its codomain is $[-1,+1]$. Each input to $f$ is a data instance $\vec{x} \in \{-1,+1\}^d$, where $x_j \in \{+1, -1\}$ indicates whether the $j$-th literal is present positively or negatively in $\vec{x}$. By convention, the first literal is the constant $x_1 = 1$, so that a bias (or intercept) term $w_1$ is always available; it is counted within the sparsity measure $\|\vec{w}\|_0$ like any other coefficient, which avoids unnecessary mathematical friction in what follows.

A \emph{linear explanation} for $f(\vec{x})$ is a vector $\vec{w} \in \mathbb{R}^{d}$ that satisfies the anchoring condition $\vec{w} \cdot \vec{x} = f(\vec{x})$. As in the previous example, such an explanation can be interpreted as an if-then rule over weighted literals: the head is $f(\vec{x})$, and the body consists of the literals $j \in \mathrm{support}(\vec{w})$, each taken with the polarity $x_j$ it has in $\vec{x}$ and the weight $w_j$. An explanation $\vec{w}$ is considered \emph{$k$-sparse} if $\|\vec{w}\|_0 \leq k$. Throughout, the two constraints we impose on explanations are sparsity, which reflects the cognitive limits discussed in Section~\ref{sec:introduction}, and anchoring, which enforces consistency with the model at $\vec{x}$. We formalize the set of all such valid explanations as our primary hypothesis space:
\begin{equation}
    \label{eq:hypothesis_space}
    \mathcal W_{\vec{x}, k} = \mathcal{H}(\vec{x}, f(\vec{x})) \cap \mathcal{B}_0(k)
\end{equation}

\begin{figure}[pos=htbp]
    \begin{subfigure}{0.48\textwidth}
        \centering
        \begin{tikzpicture}[scale=0.6,>=stealth]
        \draw[gray,densely dotted] (-4,-4) grid (4,4);
        \node[below] at (-4,0) {\smaller $-1$};
        \node[below] at (-2,0) {\smaller $-\tfrac{1}{2}$};
        \node[below] at (2,0) {\smaller $\tfrac{1}{2}$};
        \node[below] at (4,0) {\smaller $1$};
        \node[left] at (0,-4) {\smaller $-1$};
        \node[left] at (0,-2) {\smaller $-\tfrac{1}{2}$};
        \node[left] at (0,2) {\smaller $\tfrac{1}{2}$};
        \node[left] at (0,4) {\smaller $1$};
        \node[below left] at (0,0) {\smaller $0$};
        \coordinate[label={[green!75!black]below left:{\smaller $\vec{x}$}}] (X) at (4,4);
        \fill[green!75!black,opacity=1.0]  (4,4) circle (4pt);
        \draw[black, thick, <->] (0,-4.5) -- (0,4.5);
        \draw[black, thick, <->] (-4.5,0) -- (4.5,0);
        \fill[red,opacity=0.8]  (2,0) circle (4pt);
        \fill[red,opacity=0.8]  (0,2) circle (4pt);
        \coordinate[label={[black]above:{\smaller $w_1$}}] (W1) at (4,0);
        \coordinate[label={[black]right:{\smaller $w_2$}}] (W2) at (0,4);
        \draw[blue,thick,opacity=0.5,<->] (-2.25,4.25) -- (4.25,-2.25);
        \coordinate[label={[blue]left:{\smaller $\mathcal{H}(\vec{x},f(\vec{x}))$}}] (H) at (4,-2);
        \end{tikzpicture}
        \caption{For $\vec{x} = (1,1)$ with $f(\vec{x}) = \tfrac{1}{2}$ and $k = 1$, the hyperplane $\mathcal{H}(\vec{x}, f(\vec{x}))$ appears in blue, and the $L_0$ ball $\mathcal{B}_0(1)$ corresponds to the two black coordinate axes. Their intersection consists of the two red points $(\tfrac{1}{2},0)$ and $(0,\tfrac{1}{2})$: one per candidate support, each of dimension $k - 1 = 0$.}
        \label{fig:explanations:2D}
    \end{subfigure}
    \hfill
    \begin{subfigure}{0.48\textwidth}
        \centering
        \begin{tikzpicture}[scale=3.75, x={(-0.6cm,-0.35cm)}, y={(0.8cm,-0.15cm)}, z={(0cm,1cm)}, >=stealth]
        \draw[gray!70, densely dotted] (1,0,0) -- (1,1,0) -- (0,1,0);
        \draw[gray!70, densely dotted] (1,0,0) -- (1,0,1) -- (1,1,1) -- (1,1,0);
        \draw[gray!70, densely dotted] (0,1,0) -- (0,1,1) -- (0,0,1);
        \draw[gray!70, densely dotted] (1,0,1) -- (0,0,1);
        \draw[gray!70, densely dotted] (0,1,1) -- (1,1,1);
        \draw[gray!70, densely dotted] (0.5,0,0) -- (0.5,1,0) -- (0.5,1,1) -- (0.5,0,1) -- cycle;
        \draw[gray!70, densely dotted] (0,0.5,0) -- (1,0.5,0) -- (1,0.5,1) -- (0,0.5,1) -- cycle;
        \draw[gray!70, densely dotted] (0,0,0.5) -- (1,0,0.5) -- (1,1,0.5) -- (0,1,0.5) -- cycle;
        \draw[gray!70, densely dotted] (0.5,0.5,0) -- (0.5,0.5,1);
        \draw[gray!70, densely dotted] (0,0.5,0.5) -- (1,0.5,0.5);
        \draw[gray!70, densely dotted] (0.5,0,0.5) -- (0.5,1,0.5);
        \draw[gray, thick, ->] (0,0,0) -- (1.1,0,0);
        \draw[gray, thick, ->] (0,0,0) -- (0,1.1,0);
        \draw[gray, thick, ->] (0,0,0) -- (0,0,1.1);
        \coordinate[label={[gray]above:{\smaller $w_1$}}] (w1) at (1,-0.05,0);
        \coordinate[label={[gray]above:{\smaller $w_2$}}] (w2) at (0,1,0);
        \coordinate[label={[gray]right:{\smaller $w_3$}}] (w3) at (0,0,1);
        \coordinate[label={[black]right:{\smaller $\tfrac{1}{2}$}}] (x05) at (0.5,0,0);
        \coordinate[label={[black]right:{\smaller $1$}}] (x10) at (1,0,0);
        \coordinate[label={[black]below:{\smaller $\tfrac{1}{2}$}}] (y05) at (0,0.5,0);
        \coordinate[label={[black]below:{\smaller $1$}}] (y10) at (0,1,0);
        \coordinate[label={[black]left:{\smaller $\tfrac{1}{2}$}}] (z05) at (0,0,0.5);
        \coordinate[label={[black]left:{\smaller $1$}}] (z10) at (0,0,1);
        \coordinate[label={[black]below:{\smaller $0$}}] (O) at (0,0,0);
        \fill[green!75!black,opacity=1.0]  (1,1,1) circle (0.8pt);
        \coordinate[label={[green!75!black]below left:{\smaller $\vec{x}$}}] (X) at (1,1,1);
        \fill[blue, opacity=0.15] (0.8, -0.15, -0.15) -- (-0.15, 0.8, -0.15) -- (-0.15, -0.15, 0.8) -- cycle;
        \node[blue, left] at (-0.15, -0.15, 0.8) {\smaller $\mathcal{H}(\vec{x},f(\vec{x}))$};
        \draw[red, thick] (0.65,-0.15,0) -- (-0.15,0.65,0);
        \draw[red, thick] (0.65,0,-0.15) -- (-0.15,0,0.65);
        \draw[red, thick] (0,0.65,-0.15) -- (0,-0.15,0.65);
        \end{tikzpicture}
        \caption{For $\vec{x} = (1,1,1)$ with $f(\vec{x}) = \tfrac{1}{2}$ and $k = 2$, the hyperplane $\mathcal{H}(\vec{x}, f(\vec{x}))$ appears as the large blue triangle, while the $L_0$ ball $\mathcal{B}_0(2)$ is the union of the three coordinate planes $w_1 = 0$, $w_2 = 0$ and $w_3 = 0$. Their intersection consists of the three red lines, one per candidate support, each of dimension $k - 1 = 1$.}
        \label{fig:explanations:3D}
    \end{subfigure}
    \caption{A geometric illustration of $k$-sparse linear explanations $\vec{w}$.}
    \label{fig:explanations}
\end{figure}

As illustrated in Figure~\ref{fig:explanations}, the geometric space $\mathcal W_{\vec{x}, k}$ of $k$-sparse linear explanations for $f(\vec{x})$ is the intersection of two distinct objects: the anchoring hyperplane $\mathcal{H}(\vec{x}, f(\vec{x}))$ and the $L_0$ ball $\mathcal{B}_0(k)$. While the hyperplane is a continuous convex space, the $L_0$ ball is a non-convex union of the $\tbinom{d}{k}$ coordinate subspaces of dimension $k$. Their intersection is therefore a combinatorial union of affine subspaces of dimension $k - 1$, one for each candidate support---isolated points when $k = 1$, as in Figure~\ref{fig:explanations:2D}, and lines when $k = 2$, as in Figure~\ref{fig:explanations:3D}. None of them is empty, since $x_j \neq 0$ for every $j \in [d]$, so that every support carries linear explanations for $f(\vec{x})$. This decomposition locates a key source of computational hardness: selecting a support is a combinatorial problem over \smash{$\tbinom{d}{k}$} alternatives. The quality of probabilistic explanations is assessed in relation to a probability distribution $\mathcal{D}$ over $\{-1, +1\}^d$, such that $\mathcal D(\vec x) > 0$. For instance, $\mathcal{D}$ could represent the uniform distribution $\mathcal{U}$ across $\{-1, +1\}^d$ or, more restrictively, a localized neighborhood distribution surrounding the instance $\vec{x}$ that is being explained.

To quantify this quality, our framework adopts a normalized quadratic loss: $\ell(y, y') = \tfrac{1}{4}(y - y')^2$. This loss is applied consistently across both continuous regression and binary classification tasks. While its use in regression is standard, it is equally principled in the binary classification setting, where $f$ outputs values in $\{-1, +1\}$. The $1/4$ scaling ensures that the normalized quadratic loss exactly matches the zero-one loss $\ell(y, y') = \mathbb{1}[y \neq y']$ for $y, y' \in \{-1, +1\}$. The theoretical justification and convergence guarantees for the quadratic loss as a surrogate in classification are well established in statistical learning theory \citep{Bartlett.JASA.2006, Suykens.NPL.1999} and supported by empirical results in machine learning \citep{Rifkin.JMLR.2004, Hui.ICLR.2021}.

\begin{definition}[Relevance Error]
    For a prediction model $f: \{-1,+1\}^d \to [-1,+1]$, a reference instance $\vec{x} \in \{-1,+1\}^d$, and a distribution $\mathcal{D}$,
    the \emph{relevance error} of a vector $\vec{w} \in \mathbb{R}^d$ is given by:\footnote{The conditioning event always contains $\vec{z} = \vec{x}$, so that $\mathbb{P}_{\vec{z} \sim \mathcal{D}}[\vec{w} \cdot \vec{z} = \vec{w} \cdot \vec{x}] \geq \mathcal{D}(\vec{x})$. Assuming $\mathcal{D}(\vec{x}) > 0$ is thus enough for $\mathsf{R}_{f, \vec{x}, \mathcal{D}}(\vec{w})$ to be well defined, simultaneously for every $\vec{w} \in \mathbb{R}^d$.}
    \begin{equation}
    \label{eq:relevance}
    \mathsf{R}_{f, \vec{x}, \mathcal{D}}(\vec{w}) = \mathbb{E}_{\vec{z} \sim \mathcal{D}} \left[ \ell(f(\vec{z}), f(\vec{x})) \mid \vec{w} \cdot \vec{z} = \vec{w} \cdot \vec{x} \right]
    \end{equation}
\end{definition}
In other words, the relevance error of $\vec{w}$ measures the expected discrepancy between $f(\vec{z})$ and $f(\vec{x})$ for random instances $\vec{z}$ that are indistinguishable from $\vec{x}$ under the linear projection defined by $\vec{w}$. Notably, due to the exact equivalence established above, when $f$ is a binary classifier, this expectation perfectly recovers the original probabilistic objective formulated in \eqref{pb:classification}.

With these concepts established, the decision version of the stochastic optimization problem presented in \eqref{pb:relevance} is defined as follows.

\begin{definition}[\textsc{SLE} Problem]
    An instance of the \textsc{Sparse Linear Explanation} (\textsc{SLE}) problem consists of a predictive model $f: \{-1, +1\}^d \rightarrow [-1,+1]$, a data instance $\vec{x} \in \{-1, +1\}^d$, a probability distribution $\mathcal{D}$ over $\{-1, +1\}^d$, a sparsity level $k \geq 1$, and a relevance threshold $\tau \in [0,1]$. The question is whether there exists a linear explanation $\vec{w} \in \mathbb{R}^d$ for $f(\vec{x})$ such that $\|\vec{w}\|_0 \leq k$ and $\mathsf{R}_{f, \vec{x}, \mathcal{D}}(\vec{w}) \leq 1 - \tau$.
\end{definition}

When $f(\vec{x}) = 0$, the null vector $\vec{w} = \vec{0}$ meets both constraints: it is $k$-sparse, and the anchoring condition holds trivially. However, this explanation is degenerate, as its conditioning event covers the entire instance space and its body is empty, providing no meaningful insight into the model. Consequently, we set this degenerate case aside in the remainder of the paper, while the complexity results below apply to the unrestricted problem.

The above definition treats $f$ and $\mathcal{D}$ as abstract objects, as required for the remainder of the paper. Notably, our algorithms interact with $f$ solely through value queries---by evaluating $f(\vec{z})$ at chosen instances $\vec{z}$---and never through its internal structure. In contrast, a complexity statement requires an input with a measurable description length. To address this, we fix such representations in the next subsection: $f$ is specified as a neural network, and $\mathcal{D}$ is provided in closed form.

\subsection{Problem Complexity}

Solving \textsc{SLE} exactly presents two intertwined challenges. First, we must select the correct support from among $\tbinom{d}{k}$ possible candidates. Second, for each candidate, we need to compute a probability over the exponentially many instances that agree with $\vec{x}$ on that support. The first is a combinatorial search, the second a counting task. The complexity class that encompasses this combination is \ClassNPPP: problems solvable in polynomial time by a nondeterministic machine with access to a counting oracle. In this subsection, our goal is to show that \textsc{SLE} is hard for this class. Consequently, no algorithm can be expected to solve it exactly at scale, and the empirical relaxations developed in later sections are not merely convenient---they are essential.

A natural approach is to reduce from the subset version of the problem, whose hardness was established by \citet{Waldchen.JAIR.2021}. Recall that a subset explanation $S$ selects instances agreeing with $\vec{x}$ on $S$, while a linear explanation $\vec{w}$ selects all instances $\vec{z}$ such that $\vec{w} \cdot \vec{z} = \vec{w} \cdot \vec{x}$. If these two families of sets always coincided, the reduction would be immediate. However, in general, they do not, and discrepancies arise in both directions. This gap is the central obstacle addressed in the remainder of the subsection, so it is helpful to first illustrate it concretely before proceeding to the formal details.

\begin{example}
Consider $d = 3$, the instance $\vec{x} = (+1, -1, +1)$, and the weight vector $\vec{w} = (1, 1, 0)$, so that $\vec{w} \cdot \vec{x} = 0$. The instance $\vec{z} = (-1, +1, +1)$ also satisfies $\vec{w} \cdot \vec{z} = 0$, even though it differs from $\vec{x}$ on two coordinates. Thus, the set selected by $\vec{w}$ is strictly larger than the one selected by the subset $\{1, 2\}$, as it also includes instances obtained by flipping both of the first two coordinates. In general, a linear explanation selects a \emph{union} of such subsets---one passing through $\vec{x}$ and others that do not. This difference is important for the reduction: converting a subset explanation into a linear one is straightforward, since the coefficients can be chosen so that the union collapses to a single set. The reverse direction is more subtle, since a linear explanation may owe its quality to a set that does not pass through $\vec{x}$, and such a set does not necessarily certify the subset problem.
\end{example}

The proof proceeds in three movements. First, any linear explanation can be replaced by one of the subcubes it selects, without increasing the relevance error (Lemma~\ref{lem:extraction}); this reduces the linear problem to a subset-like problem, though at the cost of losing the guarantee that the subcube passes through $\vec{x}$. Second, we make this loss harmless: by replacing selected variables with the parity of $k+1$ fresh copies, a budget of $k$ can never extract any information about them (Lemma~\ref{lem:shielding} and Corollary~\ref{cor:shielding}), so only the coordinates of interest can be exploited. Third, for the resulting family of instances, a subcube meeting the relevance threshold exists if and only if an \emph{anchored} one does---a property we call alignment (Definition~\ref{def:alignment})---and both are equivalent to the existence of a witness for the source problem (Lemma~\ref{lem:witness} and Corollary~\ref{cor:alignment}). The reduction then follows (Theorem~\ref{thm:hardness}). Before proceeding, we clarify the representation of the model and the source problem.

\paragraph{Model Representation.} For a meaningful complexity statement, the input must have a finite, measurable description length.
Accordingly, we assume that $f$ is specified as a feedforward ReLU neural network, with all weights and biases in $[-1, +1]$ and activation functions restricted to the identity $u \mapsto u$ and the rectifier $u \mapsto \max\{0, u\}$. The description length of $f$ is measured by its number of gates. These networks can emulate Boolean circuits of comparable size and depth \citep{Mukherjee.arXiv.2017, Parberry.MPNN.1996}, so we may equivalently describe a binary classifier as a circuit built from the connectives $\wedge$ (\textsc{and}), $\vee$ (\textsc{or}), $\neg$ (\textsc{not}), and $\oplus$ (\textsc{xor}), counting its gates. While circuits are traditionally defined over $\{0,1\}^d$, we retain the symmetric cube $\{-1, +1\}^d$ throughout this paper, identifying the bit $b$ with $2b - 1$. This bijection preserves both the set of instances and the uniform distribution, ensuring every result from \citep{Waldchen.JAIR.2021} applies directly in our encoding. Finally, we assume that $\mathcal{D}(\vec{z})$ can be evaluated in polynomial time.

\paragraph{Source Problem.} All the hardness in this subsection ultimately comes from \textsc{E-Maj-Sat} \citep{Littman.JAIR.1998}, the canonical complete problem for \ClassNPPP. This problem is the probabilistic analog of \textsc{Sat}: instead of asking whether \emph{some} assignment of the free variables satisfies the formula, it asks whether \emph{most} of them do.

\begin{definition}[\textsc{E-Maj-Sat} Problem]
\label{def:emajsat}
An instance of \textsc{E-Maj-Sat} is a Boolean formula $\varphi$ in conjunctive normal form over $n$ variables, together with an integer $\kappa \leq n$. The question is whether there exists an assignment $\vec{u}^\ast$ of the first $\kappa$ variables, called a \emph{witness}, such that a majority of the assignments of the remaining $n - \kappa$ variables satisfy $\varphi$.
\end{definition}

The existential choice of the witness underlies the problem's \ClassNP-hardness, while the majority test accounts for its \ClassPP-hardness. These are precisely the two ingredients found in \textsc{SLE}: selecting a support and then performing counting within it.

\paragraph{Subcubes.} The sets discussed above have a standard name. Given a subset $S \subseteq [d]$ and an assignment $\vec{a} \in \{-1,+1\}^{S}$, the \emph{subcube} $\mathcal{C}(S,\vec{a}) = \{\vec{z} : \vec{z}_S = \vec{a}\}$ is the set of instances agreeing with $\vec{a}$ on $S$, and $|S|$ is its \emph{codimension}. A subcube is \emph{anchored} when $\vec{a} = \vec{x}_S$, that is, when it passes through the instance being explained. It is convenient to assess the quality of a subcube in exactly the same way as for linear explanations, by extending the relevance error of \eqref{eq:relevance} to any event $E$ with positive probability:
\[
\mathsf{R}_{f, \vec{x}, \mathcal{D}}(E) = \mathbb{E}_{\vec{z} \sim \mathcal{D}} \left[ \ell(f(\vec{z}), f(\vec{x})) \mid \vec{z} \in E \right]
\]
With this convention, $\mathsf{R}_{f, \vec{x}, \mathcal{D}}(\vec{w})$ represents the relevance error of the set $\{\vec{z} : \vec{w} \cdot \vec{z} = \vec{w} \cdot \vec{x}\}$ selected by $\vec{w}$; a $k$-sparse subset explanation corresponds to an anchored subcube of codimension at most $k$, and a single threshold $1 - \tau$ applies in both cases.

The first lemma formalizes the observation from the example: a linear explanation selects a union of subcubes of equal size, so its relevance error is the average of their errors. Since an average cannot be less than its minimum, at least one subcube must achieve a relevance error no greater than that of the linear explanation itself.

\begin{lemma}[Extraction]
\label{lem:extraction}
Let $\vec{w} \in \mathbb{R}^d$ with $S = \mathrm{support}(\vec{w})$. Under the uniform distribution $\mathcal{U}$, the set selected by $\vec{w}$ is the disjoint union of the subcubes $\mathcal{C}(S,\vec{a})$ over the assignments $\vec{a}$ satisfying $\vec{w}_S \cdot \vec{a} = \vec{w}_S \cdot \vec{x}_S$. Moreover, at least one of them satisfies
\begin{align*}
\mathsf{R}_{f, \vec{x}, \mathcal{U}}(\mathcal{C}(S,\vec{a})) \; \leq \; \mathsf{R}_{f, \vec{x}, \mathcal{U}}(\vec{w})
\end{align*}
\end{lemma}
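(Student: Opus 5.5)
The plan is to decompose the event selected by $\vec{w}$ by the values of the coordinates in $S$, and then use that, under $\mathcal{U}$, every subcube of codimension $|S|$ has the same mass. The relevance error over the union will then be a plain average of the errors over the pieces, and an average cannot be smaller than its smallest term.

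\textbf{Decomposition.} Let $E = \{\vec{z} : \vec{w}\cdot\vec{z} = \vec{w}\cdot\vec{x}\}$. Since $w_j = 0$ for $j \notin S$, we have $\vec{w}\cdot\vec{z} = \vec{w}_S\cdot\vec{z}_S$ for every $\vec{z}$. Hence membership in $E$ depends only on $\vec{z}_S$, and $\vec{z} \in E$ iff $\vec{z}_S \in A$, where
\[
A = \{\vec{a} \in \{-1,+1\}^S : \vec{w}_S\cdot\vec{a} = \vec{w}_S\cdot\vec{x}_S\}.
\]
Therefore $E = \bigcup_{\vec{a}\in A}\mathcal{C}(S,\vec{a})$. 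The union is disjoint because each $\vec{z}$ has exactly one restriction $\vec{z}_S$. Also $A \ni \vec{x}_S$, so $A$ is nonempty. This proves the first claim; note that it does not depend on the distribution.

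\textbf{Averaging.} Under $\mathcal{U}$, each subcube $\mathcal{C}(S,\vec{a})$ has probability $2^{-|S|} > 0$, so each $\mathsf{R}_{f,\vec{x},\mathcal{U}}(\mathcal{C}(S,\vec{a}))$ is well defined. Write $g(\vec{z}) = \ell(f(\vec{z}), f(\vec{x}))$. By the law of total expectation over the disjoint partition of $E$,
\[
\mathsf{R}_{f,\vec{x},\mathcal{U}}(\vec{w}) = \frac{\sum_{\vec{a}\in A}\mathbb{P}[\mathcal{C}(S,\vec{a})]\,\mathbb{E}[g \mid \mathcal{C}(S,\vec{a})]}{\sum_{\vec{a}\in A}\mathbb{P}[\mathcal{C}(S,\vec{a})]} = \frac{1}{|A|}\sum_{\vec{a}\in A}\mathsf{R}_{f,\vec{x},\mathcal{U}}(\mathcal{C}(S,\vec{a})).
\]
Choosing $\vec{a}$ to minimize the summand gives the inequality.

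\textbf{Edge cases and main point.} There is no real obstacle here. The one thing to handle is the degenerate case $S = \emptyset$: then $A$ contains only the empty assignment and the single subcube is the whole cube, so the statement holds trivially. The only step that genuinely uses uniformity is the equal-weights averaging. In fact, a weighted average already bounds the minimum, so the inequality would hold for any $\mathcal{D}$ giving positive mass to every subcube. Uniformity simply makes the relevance error of $\vec{w}$ an unweighted mean of the subcube errors.
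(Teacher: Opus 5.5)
Your proposal is correct and follows essentially the same argument as the paper: membership in the selected set depends only on $\vec{z}_S$, the resulting subcubes are disjoint and each has mass $2^{-|S|}$ under $\mathcal{U}$, so $\mathsf{R}_{f,\vec{x},\mathcal{U}}(\vec{w})$ is their plain average, and a minimum cannot exceed an average. Your remarks on the case $S = \emptyset$ and on extending the inequality to other distributions via weighted averages are also sound.
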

\begin{proof}
The value $\vec{w} \cdot \vec{z}$ depends on $\vec{z}$ only through $\vec{z}_S$, so the set selected by $\vec{w}$ is indeed the union of the subcubes of codimension $|S|$ described above, and this union is disjoint since distinct assignments of $S$ define disjoint subcubes. Each of these subcubes contains $2^{d - |S|}$ instances, hence carries the same conditional mass under $\mathcal{U}$, so that $\mathsf{R}_{f, \vec{x}, \mathcal{U}}(\vec{w})$ is the plain average of their relevance errors. A minimum is never larger than an average.
\end{proof}

The subcube produced by Lemma~\ref{lem:extraction} is not necessarily anchored, which is the gap we must address. We bridge this gap by ensuring that unanchored subcubes are uninformative, using a simple device: if a variable is replaced by the parity of $k+1$ fresh copies, then fixing at most $k$ of these leaves at least one unfixed, so the parity remains an unbiased coin. As a result, no explanation with budget $k$ can extract any information about that variable, regardless of the values assigned.

\begin{definition}[Shielding]
\label{def:shielding}
Let $g$ be a function of $n$ variables and let $A$ be a subset of them. The \emph{$k$-shielding of $g$ over $A$} is obtained by replacing every variable $y_i$ with $i \in A$ by the parity $\bigoplus_{j=1}^{k+1} y_{i,j}$ of $k+1$ fresh variables. It adds $k|A|$ variables and $O(k|A|)$ gates.
\end{definition}

\begin{lemma}[Shielding]
\label{lem:shielding}
Fix any assignment of at most $k$ of the fresh variables, so that every block of $k+1$ copies retains at least one free variable. Given this assignment, the vector of parities $(\bigoplus_j y_{i,j})_{i \in A}$ is uniformly distributed and independent of all the other variables.
\end{lemma}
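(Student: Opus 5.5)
The statement concerns the uniform distribution over all variables: the original variables outside $A$, plus the $(k+1)|A|$ fresh copies. We condition on an assignment $\rho$ of at most $k$ fresh variables and show that, under the conditional law, the parity vector is uniform on $\{-1,+1\}^A$ and independent of everything else. The plan is a direct counting argument, block by block, exploiting that the blocks are disjoint and that a product distribution stays a product distribution after conditioning on fixed coordinates.

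\emph{Step 1: reduce to free variables.} Conditioning the uniform distribution on $\rho$ leaves every unfixed variable independent and uniform, since $\rho$ only fixes coordinates. The blocks $\{y_{i,1},\dots,y_{i,k+1}\}$ for $i\in A$ are pairwise disjoint and disjoint from the remaining variables. Since $\rho$ fixes at most $k$ variables in total, each block keeps at least one free variable $y_{i,j_i}$.

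\emph{Step 2: one block.} Fix the block for $i \in A$. Write its parity in the $\pm1$ encoding as a product $\prod_j y_{i,j}$, where \textsc{xor} corresponds to multiplication up to the sign convention of the bijection $b\mapsto 2b-1$. Condition additionally on all free variables of the block other than $y_{i,j_i}$. The parity then equals $c\cdot y_{i,j_i}$ for a fixed sign $c$. Because $y_{i,j_i}$ is uniform, the parity is uniform. As this holds for every value of the conditioning variables, the parity is uniform and independent of them, and so of the whole remaining block.

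\emph{Step 3: combine blocks.} The parities of different blocks are functions of disjoint sets of independent variables, so they are mutually independent. They are also independent of the variables outside the blocks, namely the unshielded variables and the non-$A$ inputs. By Step 2 each parity is uniform, so the vector $(\bigoplus_j y_{i,j})_{i\in A}$ is uniform on $\{-1,+1\}^A$ and independent of all the other variables. If ``independent of all the other variables'' is meant to include the other fresh copies in a block, apply the argument of Step 2 with those copies as the conditioning variables.

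The main obstacle is mostly bookkeeping: the $\pm1$ encoding of \textsc{xor}, and making clear that independence refers to the variables outside the parity blocks. I will state explicitly that the other fresh copies of a block may be conditioned on as in Step 2, so the statement holds in this strongest form.
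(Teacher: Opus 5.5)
Your proposal is correct and follows the same route as the paper. Each block keeps a free copy because at most $k$ variables are fixed; the parity of a nonempty set of independent unbiased bits is unbiased; and disjointness of the blocks gives independence across blocks and from the unshielded variables. Your Step~2 (conditioning on all but one free copy so that the parity becomes $\pm y_{i,j_i}$) and your remark on the $\pm 1$ encoding of \textsc{xor} simply spell out the unbiasedness claim that the paper states in one line.
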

\begin{proof}
Each $i \in A$ is associated with a block of $k+1$ fresh variables. Since at most $k$ variables are fixed in total, every block retains at least one unfixed variable. The parity of each block, being the parity of a nonempty set of independent unbiased bits, is itself unbiased and remains independent across blocks as well as from the unshielded variables.
\end{proof}

We use this device as follows: a candidate explanation does not benefit by allocating any part of its budget to fresh variables, so we may always assume it spends none on them.

\begin{corollary}[Shielded Coordinates are Useless]
\label{cor:shielding}
Let $\tilde{f}$ be the $k$-shielding of a function $f$, and let $B$ denote the set of fresh variables it introduces. Then, for every subcube $\mathcal{C}(S, \vec{a})$ of codimension at most $k$,
\begin{align*}
\mathsf{R}_{\tilde{f}, \vec{x}, \mathcal{U}}(\mathcal{C}(S, \vec{a})) \; = \; \mathsf{R}_{\tilde{f}, \vec{x}, \mathcal{U}}\big(\mathcal{C}(S \setminus B, \, \vec{a}_{S \setminus B})\big)
\end{align*}
In particular, for every subcube of codimension at most $k$ there is another one, of no larger codimension, that fixes no fresh variable and has the same relevance error.
\end{corollary}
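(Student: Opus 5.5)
The idea is to write the relevance error of a subcube as a ratio of sums under $\mathcal{U}$ and show that the coordinates of $S$ lying in $B$ contribute nothing. The error depends only on the conditional law of $\tilde{f}(\vec{z})$ given $\vec{z} \in \mathcal{C}(S,\vec{a})$, since $\tilde{f}(\vec{x})$ is a fixed constant. Write $T = S \setminus B$ and $F = S \cap B$. Under $\mathcal{U}$, conditioning on $\vec{z} \in \mathcal{C}(S,\vec{a})$ amounts to fixing $\vec{z}_T = \vec{a}_T$ and $\vec{z}_F = \vec{a}_F$, with all other coordinates i.i.d.\ unbiased. Conditioning on $\mathcal{C}(T,\vec{a}_T)$ fixes only $\vec{z}_T$. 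So it suffices to show that, once $\vec{z}_T = \vec{a}_T$ is fixed, the law of $\tilde{f}(\vec{z})$ is the same whether or not $\vec{z}_F$ is additionally fixed to $\vec{a}_F$.

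The key structural fact is that $\tilde{f}$ reads its inputs only through the unshielded original variables $\vec{y}_{\bar A}$ and the block parities $\vec{p} = (\bigoplus_j y_{i,j})_{i\in A}$. I will write $\tilde{f}(\vec{z}) = f(\vec{y}_{\bar A}, \vec{p})$. (If the construction keeps the original $y_i$ with $i \in A$ as nominal coordinates, they are irrelevant to $\tilde f$, and fixing them changes nothing by independence under $\mathcal{U}$. It is cleanest to treat them as part of $B$ or as dummy coordinates.)

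I then apply Lemma~\ref{lem:shielding} to the assignment $\vec{z}_F = \vec{a}_F$. This fixes $|F| \le |S| \le k$ fresh variables, so every block keeps a free variable. Hence, conditionally on $\vec{z}_F = \vec{a}_F$, the parity vector $\vec{p}$ is uniform and independent of all other variables. The same is true without fixing $\vec{z}_F$, since then all fresh variables are free. In both cases I further condition on $\vec{z}_T = \vec{a}_T$. This event concerns only non-fresh variables, so by the independence in the lemma the pair $(\vec{y}_{\bar A}, \vec{p})$ has the same joint law in both situations: $\vec{y}_{\bar A}$ has $\vec{a}_T$ fixed and the rest uniform, while $\vec{p}$ is uniform and independent of it. Therefore $\tilde{f}(\vec{z})$ has identical conditional distributions, and the expected losses coincide, which is the claimed equality. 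The final sentence of the corollary follows by taking $\mathcal{C}(S\setminus B, \vec{a}_{S\setminus B})$, whose codimension $|T| \le |S|$.

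The main obstacle is bookkeeping rather than depth. One must check that the independence in Lemma~\ref{lem:shielding} is exactly what survives the extra conditioning on $\vec{z}_T$. If $T$ still contains original variables $y_i$ with $i\in A$, these must be shown to be irrelevant to $\tilde{f}$, which is why I would fix the convention on those coordinates at the outset. Both conditioning events have positive probability under $\mathcal{U}$, so all conditional expectations are well defined.
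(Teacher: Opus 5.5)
Your proof is correct and follows the same route as the paper. Via Lemma~\ref{lem:shielding}, you show that fixing the at most $k$ fresh coordinates in $S \cap B$ leaves the parity vector uniform and independent of the unshielded variables. Hence the conditional law of the arguments of $f$, and so of $\tilde{f}(\vec{z})$, is the same under $\mathcal{C}(S,\vec{a})$ and under $\mathcal{C}(S \setminus B, \vec{a}_{S \setminus B})$. Your extra bookkeeping (splitting $S$ into $T$ and $F$, and checking that the further conditioning on $\vec{z}_T$ involves only non-fresh variables) spells out what the paper's two-line argument leaves implicit.
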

\begin{proof}
By Lemma~\ref{lem:shielding}, conditioning on the coordinates of $S \cap B$ leaves the vector of parities uniform and independent of every other variable. The conditional law of the arguments of the shielded function, and therefore that of $\tilde{f}(\vec{z})$, is thus the same as under the conditioning by $S \setminus B$ alone, and the two conditional expectations coincide.
\end{proof}

We can now name the property that our family of instances must satisfy. It is a statement at the threshold rather than an equality between optima: an unanchored subcube may well be strictly better than every anchored one, and this is harmless as long as it does not cross the threshold separating positive from negative instances.

\begin{definition}[Alignment]
\label{def:alignment}
An instance $(f, \vec{x}, k)$ is \emph{aligned at level $\tau$} if, whenever some subcube of codimension at most $k$ has relevance error at most $1 - \tau$, some \emph{anchored} subcube of codimension at most $k$ has relevance error at most $1 - \tau$ as well.
\end{definition}

On an aligned instance, the anchored and the free versions of the problem have the same answer, which is what our two directions require: the forward direction builds a linear explanation from an anchored subcube, while the backward direction receives an arbitrary subcube from Lemma~\ref{lem:extraction}.

\paragraph{The Construction.} Let $(\varphi, \kappa)$ be an instance of \textsc{E-Maj-Sat} and let $\tau \in (0,1)$. We build a classifier $f$, a reference instance $\vec{x}$ and a budget $k = \kappa$ in two steps borrowed from \citet{Waldchen.JAIR.2021}, followed by one application of Definition~\ref{def:shielding}.

The first step is the duplication construction of \citet[Lemma 3.5]{Waldchen.JAIR.2021}. It produces a circuit $\Phi$ over two blocks $\vec{u}$ and $\vec{v}$ of size $\kappa$ each, a block $\vec{r}$ holding the remaining variables of $\varphi$, and one extra variable. Its reference instance carries opposite values on the two duplicated blocks, so that for each index $i$, exactly one of the two $i$-th coordinates encodes a given value (True or False) of the $i$-th variable of $\varphi$. Let $\textsf{EQ}$ denote the event that the two duplicated blocks agree; conditionally on $\textsf{EQ}$, fixing exactly one of these two coordinates to its value in $\vec{x}$ assigns that value to the $i$-th variable of $\varphi$. For a subcube $\mathcal{C}$ fixing coordinates of $\vec{u}$ and $\vec{v}$ only, let $\mathsf{q}(\mathcal{C})$ denote the probability that $\Phi$ evaluates to $+1$ given $\mathcal{C}$. Their analysis \citep[Lemma 3.3]{Waldchen.JAIR.2021} gives
\begin{align}
\label{eq:duplication}
\mathsf{q}(\mathcal{C}) = \tfrac{1}{2} + \Big( \mathbb{P}\big[\varphi \mid \mathcal{C}, \textsf{EQ}\big] - \tfrac{1}{2} \Big) \, \mathbb{P}\big[\textsf{EQ} \mid \mathcal{C}\big]
\end{align}
No assignment of the variables of $\varphi$ appears here: a subcube may leave coordinates free, in which case the first conditional probability averages over them. Note also that $\mathbb{P}[\textsf{EQ} \mid \mathcal{C}] > 0$ whenever $\mathcal{C}$ leaves at least one coordinate of each duplicated pair free, or fixes both to equal values, so that the conditional probability $\mathbb{P}[\varphi \mid \mathcal{C}, \textsf{EQ}]$ is well defined in these cases; when $\mathcal{C}$ fixes both coordinates of some pair to opposite values, $\mathbb{P}[\textsf{EQ} \mid \mathcal{C}] = 0$ and \eqref{eq:duplication} reads $\mathsf{q}(\mathcal{C}) = \tfrac{1}{2}$, with the second term interpreted as zero.

The second step is the threshold adjustment from \citet[Lemmas 3.8--3.11]{Waldchen.JAIR.2021}. This transforms $\Phi$ into the final circuit $f$ by composing it with auxiliary blocks, ensuring that $f(\vec{x}) = +1$ and that, for every subcube $\mathcal{C}$ fixing coordinates of $\vec{u}$ and $\vec{v}$ only,
\begin{align}
\label{eq:threshold}
\mathsf{R}_{f, \vec{x}, \mathcal{U}}(\mathcal{C}) \leq 1 - \tau \quad \Longleftrightarrow \quad \mathsf{q}(\mathcal{C}) > \tfrac{1}{2}
\end{align}
The null vector is not a feasible explanation here, since $f(\vec{x}) = +1 \neq 0$.

Finally, we $k$-shield every variable except those belonging to the two duplicated blocks: specifically, the block $\vec{r}$, the extra variable from the first step, and all auxiliary blocks from the second step are shielded. By Lemma~\ref{lem:shielding}, this modification does not affect any of the probabilities established above, and the construction remains polynomial since shielding multiplies the number of variables in a block by $k+1$ and adds a proportional number of gates. The resulting classifier, still denoted $f$, is a $k$-shielding, and therefore falls under Corollary~\ref{cor:shielding}. Shielding also renders the intermediate step of \citep[Lemma 3.7]{Waldchen.JAIR.2021} unnecessary: its sole purpose was to prevent a budget from being allocated to $\vec{r}$, which is now directly guaranteed by Corollary~\ref{cor:shielding}.

One further notion completes the picture. The \emph{subcube induced by} an assignment $\vec{u}^\ast$ to the first $\kappa$ variables of $\varphi$, denoted $\mathcal{C}^\ast(\vec{u}^\ast)$, is defined as the subcube that fixes, for each $i \leq \kappa$, the $i$-th coordinate of $\vec{u}$ or $\vec{v}$ that encodes $u^\ast_i$ to its value in $\vec{x}$. By construction, this subcube is anchored, and its codimension is $\kappa = k$.

The next lemma forms the technical heart of the reduction: it establishes the connection between subcubes in the construction and witnesses of the source instance, working in both directions.

\begin{lemma}[Witnesses and Subcubes]
\label{lem:witness}
For every instance produced by the construction above:
\begin{enumerate}
    \item if some subcube of codimension at most $k$ has relevance error at most $1 - \tau$, then $\varphi$ admits a witness;
    \item conversely, for every witness $\vec{u}^\ast$, the induced subcube $\mathcal{C}^\ast(\vec{u}^\ast)$ has relevance error at most $1 - \tau$.
\end{enumerate}
\end{lemma}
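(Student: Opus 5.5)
Both parts run through the two displayed equivalences: \eqref{eq:threshold}, which turns the relevance threshold into the condition $\mathsf{q}(\mathcal{C}) > \tfrac12$, and \eqref{eq:duplication}, which expresses $\mathsf{q}$ through $\mathbb{P}[\varphi \mid \mathcal{C}, \textsf{EQ}]$. The plan is to reduce an arbitrary subcube to one that fixes only duplicated coordinates, pass through these two equivalences, and read off a statement about the fraction of satisfying completions of a partial assignment to the first $\kappa$ variables of $\varphi$.

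\textbf{Part 1.} Let $\mathcal{C}(S,\vec{a})$ have codimension at most $k$ and relevance error at most $1-\tau$. Corollary~\ref{cor:shielding} lets me drop all fresh variables without changing the error. Since every variable outside the two duplicated blocks is shielded, the new subcube $\mathcal{C}$ fixes coordinates of $\vec{u}$ and $\vec{v}$ only, and \eqref{eq:threshold} then gives $\mathsf{q}(\mathcal{C}) > \tfrac12$.

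If $\mathcal{C}$ fixed some pair to opposite values, \eqref{eq:duplication} would give $\mathsf{q}(\mathcal{C}) = \tfrac12$. So $\mathbb{P}[\textsf{EQ}\mid\mathcal{C}]>0$, and we get $\mathbb{P}[\varphi \mid \mathcal{C}, \textsf{EQ}] > \tfrac12$. Conditioned on $\textsf{EQ}$, $\mathcal{C}$ fixes a partial assignment $\vec{p}$ to some of the first $\kappa$ variables of $\varphi$; a pair fixed to equal values just fixes that variable once. Under $\textsf{EQ}$ the unfixed variables among the first $\kappa$, and the remaining $n-\kappa$, are uniform.

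The quantity $\mathbb{P}[\varphi \mid \mathcal{C}, \textsf{EQ}]$ is therefore the average, over completions $\vec{u}^\ast$ of $\vec{p}$ on the first $\kappa$ variables, of the fraction of satisfying assignments of the remaining $n-\kappa$ variables. An average exceeding $\tfrac12$ forces some completion $\vec{u}^\ast$ to exceed $\tfrac12$, and that completion is a witness.

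\textbf{Part 2.} Given a witness $\vec{u}^\ast$, the induced subcube $\mathcal{C}^\ast(\vec{u}^\ast)$ fixes exactly one coordinate of each pair, so $\mathbb{P}[\textsf{EQ}\mid\mathcal{C}^\ast]=2^{-\kappa}>0$. Under $\textsf{EQ}$, the construction of the reference instance (opposite values on the two blocks) makes this fix the $i$-th variable to $u^\ast_i$ for every $i\le\kappa$. Hence $\mathbb{P}[\varphi\mid\mathcal{C}^\ast,\textsf{EQ}]$ is exactly the satisfying fraction for $\vec{u}^\ast$, which is $>\tfrac12$. By \eqref{eq:duplication}, $\mathsf{q}(\mathcal{C}^\ast)>\tfrac12$, and \eqref{eq:threshold} gives relevance error at most $1-\tau$.

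\textbf{Main obstacle.} The delicate step is the averaging claim in Part 1. I need that, conditionally on $\mathcal{C}$ and $\textsf{EQ}$, the induced distribution on the variables of $\varphi$ is uniform over completions of $\vec{p}$. This holds because each pair is either fixed on one side, fixed on both sides with equal values, or free; conditioning a free pair on equality yields an unbiased common bit, independently across pairs. I must also check that a pair fixed on both coordinates to equal values is consistent with the reading used in \eqref{eq:duplication}.
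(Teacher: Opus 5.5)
Your proposal is correct and follows essentially the same route as the paper. For clause 1 you remove the fresh coordinates via Corollary~\ref{cor:shielding}, pass through \eqref{eq:threshold} and \eqref{eq:duplication} to get $\mathbb{P}[\varphi \mid \mathcal{C}, \textsf{EQ}] > \tfrac{1}{2}$, and extract a witness by averaging over the compatible assignments of the first $\kappa$ variables; clause 2 matches the paper's argument. The only cosmetic difference is that you rule out $\mathbb{P}[\textsf{EQ} \mid \mathcal{C}] = 0$ by noting that a pair fixed to opposite values forces $\mathsf{q}(\mathcal{C}) = \tfrac{1}{2}$, whereas the paper reads positivity directly off the nonvanishing product in \eqref{eq:duplication}.
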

\begin{proof}
\emph{Clause 1.} Let $\mathcal{C}$ be such a subcube. By Corollary~\ref{cor:shielding}, we may assume that it fixes only coordinates of $\vec{u}$ and $\vec{v}$, since replacing it by its unshielded part changes neither its relevance error nor the bound on its codimension. Then \eqref{eq:threshold} gives $\mathsf{q}(\mathcal{C}) > \tfrac{1}{2}$, so that by \eqref{eq:duplication} the product $\big(\mathbb{P}[\varphi \mid \mathcal{C}, \textsf{EQ}] - \tfrac{1}{2}\big)\,\mathbb{P}[\textsf{EQ} \mid \mathcal{C}]$ is strictly positive. Neither factor can therefore vanish, and both must share the same sign. Only the sign of the second factor matters here, not its magnitude: since $\mathbb{P}[\textsf{EQ} \mid \mathcal{C}]$ is a probability, it is strictly positive, and hence $\mathbb{P}[\varphi \mid \mathcal{C}, \textsf{EQ}] > \tfrac{1}{2}$ as well. Conditionally on $\mathcal{C}$ and $\textsf{EQ}$, the two duplicated blocks agree, the coordinates pinned by $\mathcal{C}$ carry the values they encode, the remaining ones are uniform, and $\vec{r}$ is independent of all of them. The value $\mathbb{P}[\varphi \mid \mathcal{C}, \textsf{EQ}]$ is thus an average over assignments of the first $\kappa$ variables compatible with $\mathcal{C}$, of the probability that $\varphi$ is satisfied by a uniform assignment to the remaining variables. At least one such assignment must yield a value exceeding $\tfrac{1}{2}$, and is thus a witness.

\emph{Clause 2.} Let $\vec{u}^\ast$ be a witness and $\mathcal{C}^\ast = \mathcal{C}^\ast(\vec{u}^\ast)$. This fixes one coordinate in each of the $\kappa$ duplicated pairs and leaves the other free, so $\mathbb{P}[\textsf{EQ} \mid \mathcal{C}^\ast] = 2^{-\kappa} > 0$, and conditioning further on $\textsf{EQ}$ pins the first $\kappa$ variables of $\varphi$ to $\vec{u}^\ast$. Then $\mathbb{P}[\varphi \mid \mathcal{C}^\ast, \textsf{EQ}] > \tfrac{1}{2}$ because $\vec{u}^\ast$ is a witness, and \eqref{eq:duplication} gives $\mathsf{q}(\mathcal{C}^\ast) > \tfrac{1}{2}$, so \eqref{eq:threshold} concludes.
\end{proof}

Both properties we need now follow formally, the first one by chaining the two clauses.

\begin{corollary}[Alignment]
\label{cor:alignment}
Let $\tau \in (0,1)$. The construction above maps, in polynomial time, any instance of \textsc{E-Maj-Sat} to a triple $(f, \vec{x}, k)$ such that
\begin{enumerate}
    \item $(f, \vec{x}, k)$ is aligned at level $\tau$; and
    \item the \textsc{E-Maj-Sat} instance is positive if and only if some anchored subcube of codimension at most $k$ has relevance error at most $1 - \tau$.
\end{enumerate}
\end{corollary}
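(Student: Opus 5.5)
The plan is to derive both clauses from Lemma~\ref{lem:witness}, once polynomial-time computability of the construction is settled. For computability, I would argue that each ingredient is polynomial. The duplication step of \citet[Lemma 3.5]{Waldchen.JAIR.2021} is polynomial in the size of $\varphi$, and so is the threshold adjustment of \citet[Lemmas 3.8--3.11]{Waldchen.JAIR.2021}; both are taken as given. The $k$-shielding of Definition~\ref{def:shielding} adds $k|A|$ variables and $O(k|A|)$ gates, with $k = \kappa \leq n$ and $|A|$ polynomial, so it is also polynomial. The reference instance $\vec{x}$ and the budget $k$ are written down directly. Finally, the circuit can be translated into a ReLU network of comparable size, as noted in the model representation paragraph.

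For clause 2, I would chain the two clauses of Lemma~\ref{lem:witness}, in both directions. For the forward direction, suppose the \textsc{E-Maj-Sat} instance is positive, with witness $\vec{u}^\ast$. Clause 2 of Lemma~\ref{lem:witness} gives that the induced subcube $\mathcal{C}^\ast(\vec{u}^\ast)$ has relevance error at most $1-\tau$. This subcube is anchored by construction and has codimension $\kappa = k$, so it is the required anchored subcube. For the backward direction, suppose some anchored subcube of codimension at most $k$ has relevance error at most $1-\tau$. It is in particular a subcube of codimension at most $k$, so clause 1 of Lemma~\ref{lem:witness} yields a witness for $\varphi$, and the instance is positive.

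For clause 1 (alignment), assume some subcube of codimension at most $k$, anchored or not, has relevance error at most $1-\tau$. Clause 1 of Lemma~\ref{lem:witness} produces a witness $\vec{u}^\ast$. Clause 2 then shows that the anchored subcube $\mathcal{C}^\ast(\vec{u}^\ast)$, of codimension $k$, also meets the threshold. This is exactly Definition~\ref{def:alignment}.

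No step here should be a real obstacle, since the substantive work sits in Lemma~\ref{lem:witness} and Corollary~\ref{cor:shielding}. The one point I would check carefully is that the anchoring of $\mathcal{C}^\ast(\vec{u}^\ast)$ holds in the final shielded instance. It should: the reference instance $\vec{x}$ restricted to the duplicated blocks is unchanged by shielding, because only variables outside $\vec{u}$ and $\vec{v}$ are replaced. The coordinates fixed by $\mathcal{C}^\ast$ therefore still take their values in $\vec{x}$.
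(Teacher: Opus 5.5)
Your proposal is correct and follows essentially the same route as the paper: alignment comes from chaining clause 1 of Lemma~\ref{lem:witness} (subcube $\Rightarrow$ witness) with clause 2 (witness $\Rightarrow$ anchored induced subcube $\mathcal{C}^\ast(\vec{u}^\ast)$). The equivalence uses clause 2 in the forward direction and clause 1 in the backward direction, since an anchored subcube is in particular a subcube. Your extra remarks on polynomial-time computability and on anchoring surviving the shielding step agree with the construction paragraph, which the paper's proof relies on without restating.
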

\begin{proof}
For clause 1, suppose that some subcube of codimension at most $k$ has relevance error at most $1 - \tau$. Clause 1 of Lemma~\ref{lem:witness} yields a witness $\vec{u}^\ast$, and clause 2 turns it into the subcube $\mathcal{C}^\ast(\vec{u}^\ast)$, which is anchored, of codimension at most $k$, and of relevance error at most $1 - \tau$. This is precisely Definition~\ref{def:alignment}.

For clause 2, a positive instance has a witness, hence an anchored subcube below the threshold by clause 2 of Lemma~\ref{lem:witness}. Conversely, an anchored subcube below the threshold is in particular a subcube, so clause 1 of Lemma~\ref{lem:witness} produces a witness.
\end{proof}

In particular, if the \textsc{E-Maj-Sat} instance is negative, then no anchored subcube of codimension at most $k$ meets the threshold, and thus, by alignment, no subcube of any kind does. This contrapositive form is the one to keep in mind when reading the backward direction below.

\begin{theorem}[Hardness]
\label{thm:hardness}
For every fixed relevance threshold $\tau \in (0,1)$, the \textsc{Sparse Linear Explanation} problem for ReLU neural networks is \emph{\ClassNPPP-hard}.
\end{theorem}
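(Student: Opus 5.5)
We reduce from \textsc{E-Maj-Sat}, which is \ClassNPPP-complete. Given $(\varphi,\kappa)$ and a fixed $\tau\in(0,1)$, we apply the construction above. It yields a circuit $f$, a reference instance $\vec{x}$ with $f(\vec{x})=+1$, and a budget $k=\kappa$. By Corollary~\ref{cor:alignment}, this map runs in polynomial time. We convert the circuit into a ReLU network with weights in $[-1,+1]$ of comparable size, using the standard emulation of $\wedge,\vee,\neg,\oplus$ gates cited in the model representation paragraph. We take $\mathcal{D}=\mathcal{U}$, which is trivially evaluable in closed form. The output instance of \textsc{SLE} is then $(f,\vec{x},\mathcal{U},k,\tau)$, and it remains to show that the \textsc{E-Maj-Sat} instance is positive if and only if some $k$-sparse linear explanation $\vec{w}$ for $f(\vec{x})$ satisfies $\mathsf{R}_{f,\vec{x},\mathcal{U}}(\vec{w})\le 1-\tau$.

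\emph{Forward direction.} If the source instance is positive, clause 2 of Corollary~\ref{cor:alignment} gives an anchored subcube $\mathcal{C}(S,\vec{x}_S)$ with $|S|\le k$ and relevance error at most $1-\tau$. We turn it into a linear explanation whose selected set is exactly this subcube. For this we choose strictly non-compensatory weights on $S$, for instance $w_j=c\,2^{t_j}x_j$ where $t_j$ is the rank of $j$ in $S$. With these weights, $\vec{w}_S\cdot\vec{a}=\vec{w}_S\cdot\vec{x}_S$ forces $\vec{a}=\vec{x}_S$: the quantity $\sum_j 2^{t_j}x_j a_j$ attains its maximum $\sum_j 2^{t_j}$ only when every term is positive, so any flipped coordinate strictly lowers it. Anchoring fixes the scale, $c=f(\vec{x})/\sum_j 2^{t_j}=1/\sum_j2^{t_j}$, which is well defined and nonzero because $f(\vec{x})=+1$. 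We must also handle the degenerate case $S=\emptyset$. A support that is empty cannot be anchored to $f(\vec{x})=1$, so we use the constant literal $x_1=1$ and set $w_1=1$. This vector selects the whole cube, which coincides with the empty-support subcube. If instead $S\neq\emptyset$ and $1\in S$, the construction above still applies, since $x_1=1$ is a fixed coordinate. By Lemma~\ref{lem:extraction}, the set selected by $\vec{w}$ is exactly $\mathcal{C}(S,\vec{x}_S)$, so $\vec{w}$ inherits its relevance error and is $k$-sparse.

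\emph{Backward direction.} Suppose some feasible $\vec{w}$ with $\|\vec{w}\|_0\le k$ satisfies $\mathsf{R}(\vec{w})\le 1-\tau$. Lemma~\ref{lem:extraction} yields a subcube $\mathcal{C}(S,\vec{a})$, with $S=\mathrm{support}(\vec{w})$ of size at most $k$, whose relevance error is at most $1-\tau$. This subcube need not be anchored. Alignment, clause 1 of Corollary~\ref{cor:alignment}, then provides an anchored subcube of codimension at most $k$ that also meets the threshold, and clause 2 shows that the source instance is positive.

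The main obstacle is the bookkeeping of the constant coordinate $x_1$ against the circuit's variables. The construction's analysis, including Corollary~\ref{cor:shielding} and \eqref{eq:threshold}, is phrased for subcubes over genuine input variables. I would therefore treat $x_1$ as a dummy input on which $f$ does not depend, so that fixing it never changes a conditional law. Under $\mathcal{U}$ on the cube with $x_1$ forced, dropping $x_1$ from a subcube leaves its relevance error unchanged and only lowers its codimension. I would verify that with this convention Lemma~\ref{lem:extraction} and Corollary~\ref{cor:alignment} apply verbatim, with the distribution understood as uniform over the coordinates other than $x_1$.
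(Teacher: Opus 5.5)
Your proof is correct and follows the paper's argument essentially step for step: the same reduction from \textsc{E-Maj-Sat} through Corollary~\ref{cor:alignment} with $\mathcal{D}=\mathcal{U}$, and the same forward construction with $\vec{e}_1$ covering the empty support (the paper uses the equal weights $w_j = x_j/|S|$ rather than powers of two, but any choice $w_j = c_j x_j$ with all $c_j > 0$ forces $\vec{z}_S = \vec{x}_S$ for the same reason), followed by the same backward chain of Lemma~\ref{lem:extraction}, alignment, and clause~2. Your treatment of the constant literal as a dummy coordinate on which $f$ does not depend is a more explicit version of the paper's remark that fixing coordinate~$1$ is vacuous, so it can be dropped from the extracted subcube without increasing the codimension.
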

\begin{proof}
Let $f$, $\vec{x}$, and $k$ be produced by Corollary~\ref{cor:alignment} from an instance of \textsc{E-Maj-Sat}, and consider the \textsc{SLE} instance $(f, \vec{x}, \mathcal{U}, k, \tau)$. This instance is well defined, as $\mathcal{U}(\vec{x}) > 0$, and its construction is polynomial in size.

Suppose first that the \textsc{E-Maj-Sat} instance is positive, and let $\mathcal{C}(S, \vec{x}_S)$ be the anchored subcube given by clause 2 of Corollary~\ref{cor:alignment}. If $S$ is empty, set $\vec{w} = \vec{e}_1$; since $x_1 = 1$ is the constant literal, $\vec{w} \cdot \vec{z} = 1 = f(\vec{x})$ for every instance $\vec{z}$, so $\vec{w}$ selects the entire cube, which is precisely the subcube of empty support. Otherwise, set $w_j = x_j / |S|$ for $j \in S$ and $w_j = 0$ elsewhere; the nonzero coefficients are then equal in magnitude, so the condition $\vec{w} \cdot \vec{z} = \vec{w} \cdot \vec{x}$ reduces to $\sum_{j \in S} x_j z_j = |S|$, and since each term is at most one, this forces $\vec{z}_S = \vec{x}_S$: the set selected by $\vec{w}$ is exactly $\mathcal{C}(S, \vec{x}_S)$. In both cases $\vec{w} \cdot \vec{x} = 1 = f(\vec{x})$ and $\|\vec{w}\|_0 \leq k$, so $\vec{w}$ is a feasible explanation whose relevance error matches that of the subcube, hence is at most $1 - \tau$, and the \textsc{SLE} instance is positive.

Conversely, suppose the \textsc{SLE} instance is positive, witnessed by a feasible $\vec{w}$, and let $S = \mathrm{support}(\vec{w})$, so that $|S| \leq k$. Lemma~\ref{lem:extraction} guarantees the existence of a subcube $\mathcal{C}(S, \vec{a})$ whose relevance error is at most that of $\vec{w}$, and thus at most $1 - \tau$. If $1 \in S$, fixing the constant coordinate is vacuous, so this subcube coincides with $\mathcal{C}(S \setminus \{1\}, \vec{a}_{S \setminus \{1\}})$, of no larger codimension. Because the instance is aligned at level $\tau$, some anchored subcube of codimension at most $k$ also achieves this threshold, and clause 2 of Corollary~\ref{cor:alignment} then ensures the existence of a witness.

The two instances are therefore equivalent, and since \textsc{E-Maj-Sat} is \ClassNPPP-complete \citep{Littman.JAIR.1998}, the result follows.
\end{proof}


\section{Dealing with \ClassPP-Hardness}
\label{sec:pp}

Among the two sources of intractability highlighted in Section~\ref{sec:problem}, only one is out of the ordinary. The combinatorial search over candidate supports $S \subseteq [d]$ of size at most $k$ is the familiar cost of enforcing sparsity, and an extensive literature exists for addressing it---ranging from exact encodings to greedy and thresholding algorithms. The second source is more problematic: even for a \emph{fixed} candidate $\vec{w}$, determining whether its relevance error is at most $1 - \tau$ is \ClassPP-hard. This complexity places the problem beyond the scope of standard optimization techniques, which require that the objective be efficiently computable.

This section addresses the second obstacle by introducing the \emph{fidelity error}, an unconditional surrogate loss. Section~\ref{subsec:relevance_to_fidelity} formally relates relevance and fidelity errors within the parameterized family of neighborhood distributions, while Section~\ref{subsec:empirical_mip} demonstrates that the fidelity error can be efficiently approximated via sampling. This allows for a tractable Mixed Integer Programming (MIP) formulation with formal sample complexity guarantees.

\subsection{From Relevance to Fidelity}
\label{subsec:relevance_to_fidelity}

To circumvent the \ClassPP-hard evaluation of the conditional relevance error, we focus on the \emph{fidelity error}. Commonly used in model-agnostic explainability \citep{Lakkaraju.AIES.2019, Li.ICLR.2021, Ribeiro.KDD.2016}, the fidelity error assesses the expected loss of the linear explanation over the entire distribution, omitting the conditioning event $\vec{w} \cdot \vec{z} = \vec{w} \cdot \vec{x}$. The anchoring constraint $\vec{w} \cdot \vec{x} = f(\vec{x})$ remains an essential part of the definition and will continue to play a central role below. By evaluating the loss unconditionally, this surrogate metric becomes much more tractable for both theoretical analysis and empirical approximation. 

\begin{definition}[Fidelity Error]
    For a prediction model $f$, a reference instance $\vec{x}$, and a distribution $\mathcal{D}$, the fidelity error of a vector $\vec{w}$ under the normalized quadratic loss $\ell$ is given by:
    \begin{align}
        \label{eq:fidelity}
        \mathsf{F}_{f, \vec{x}, \mathcal{D}}(\vec{w}) = \mathbb{E}_{\vec{z} \sim \mathcal{D}} \left[ \ell\big(\vec{w} \cdot \vec{z}, f(\vec{z})\big) \right]
    \end{align}
\end{definition}

To bridge the theoretical gap between the conditional relevance error and the unconditional fidelity error, we rely on the parameterized family of localized distributions introduced in Section~\ref{sec:introduction}. Recall that for a given concentration parameter $\sigma \geq 0$ and a central reference instance $\vec{x}$, the probability of sampling an instance $\vec{z}$ decays exponentially with its Hamming distance $\tfrac{1}{2}\|\vec{x} - \vec{z}\|_1$. Formally, we define the \emph{neighborhood distribution} $\mathcal{D}_{\vec{x}, \sigma}$ as:
\begin{align}
    \label{eq:distribution}
\mathcal{D}_{\vec{x}, \sigma}(\vec{z}) = \frac{1}{Z_{\sigma}} e^{-\frac{\sigma}{2} \|\vec{x} - \vec{z}\|_1} \quad \text{where} \quad Z_{\sigma} = \sum_{j = 0}^{d} \binom{d}{j} e^{-\sigma j} = (1 + e^{-\sigma})^d
\end{align}

As previously noted, the parameter $\sigma$ directly controls the locality of the explanation. When $\sigma = 0$, the decay vanishes and $\mathcal{D}_{\vec{x}, 0}$ recovers the uniform distribution $\mathcal{U}$ over the entire instance space. Conversely, as $\sigma \to \infty$, the distribution concentrates entirely on the center $\vec{x}$.

Importantly, for neighborhood distributions, the fidelity error provides an upper bound for the relevance error of any explanation in $\mathcal{W}_{\vec{x}, k} = \mathcal{H}(\vec{x}, f(\vec{x})) \cap \mathcal{B}_0(k)$, modulated by the sparsity level $k$ and the concentration $\sigma$.

\begin{lemma}[Approximating Relevance via Fidelity]
    \label{lem:fidelity}
    Let $f: \{-1, +1\}^d \rightarrow [-1,+1]$ be a prediction model, let $\vec{x} \in \{-1,+1\}^d$ be a data instance, let $k \geq 1$ be a sparsity parameter, and let $\sigma \geq 0$ be a concentration parameter. Then, for any explanation $\vec{w} \in \mathcal W_{\vec{x}, k}$, its relevance error satisfies
    \begin{align*}
        \mathsf{R}_{f, \vec{x}, \mathcal{D}_{\vec{x}, \sigma}}(\vec{w}) \leq (1 + e^{-\sigma})^k \, \mathsf{F}_{f, \vec{x}, \mathcal{D}_{\vec{x}, \sigma}}(\vec{w}) 
    \end{align*}    
\end{lemma}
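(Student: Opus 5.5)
The plan is to write the relevance error as a ratio, bound the numerator by the fidelity error, and bound the denominator from below by the mass of the anchored subcube. Let $E = \{\vec{z} : \vec{w} \cdot \vec{z} = \vec{w} \cdot \vec{x}\}$ be the set selected by $\vec{w}$. By definition,
\begin{align*}
\mathsf{R}_{f, \vec{x}, \mathcal{D}_{\vec{x},\sigma}}(\vec{w}) = \frac{\mathbb{E}_{\vec{z} \sim \mathcal{D}_{\vec{x},\sigma}}\big[\ell(f(\vec{z}), f(\vec{x}))\,\mathbb{1}[\vec{z} \in E]\big]}{\mathbb{P}_{\vec{z} \sim \mathcal{D}_{\vec{x},\sigma}}[\vec{z} \in E]}
\end{align*}
I will treat the numerator and the denominator separately.

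\emph{Numerator.} The anchoring condition is the key ingredient. For $\vec{z} \in E$ we have $\vec{w} \cdot \vec{z} = \vec{w} \cdot \vec{x} = f(\vec{x})$, so $\ell(f(\vec{z}), f(\vec{x})) = \ell(\vec{w} \cdot \vec{z}, f(\vec{z}))$ on $E$, using the symmetry of $\ell$. Since $\ell \geq 0$, dropping the indicator gives
\begin{align*}
\mathbb{E}\big[\ell(f(\vec{z}), f(\vec{x}))\,\mathbb{1}[\vec{z} \in E]\big] = \mathbb{E}\big[\ell(\vec{w}\cdot\vec{z}, f(\vec{z}))\,\mathbb{1}[\vec{z} \in E]\big] \leq \mathsf{F}_{f, \vec{x}, \mathcal{D}_{\vec{x},\sigma}}(\vec{w}).
\end{align*}

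\emph{Denominator.} Let $S = \mathrm{support}(\vec{w})$, so $|S| \leq k$. As in the proof of Lemma~\ref{lem:extraction}, $E$ contains the anchored subcube $\mathcal{C}(S, \vec{x}_S)$. It then suffices to compute $\mathcal{D}_{\vec{x},\sigma}(\mathcal{C}(S,\vec{x}_S))$. The distribution factorizes over coordinates: each $z_j$ independently equals $x_j$ with probability $1/(1+e^{-\sigma})$ and $-x_j$ with probability $e^{-\sigma}/(1+e^{-\sigma})$. This follows from $e^{-\frac{\sigma}{2}\|\vec{x}-\vec{z}\|_1} = \prod_j e^{-\sigma \mathbb{1}[z_j \neq x_j]}$ together with $Z_\sigma = (1+e^{-\sigma})^d$. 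Hence the subcube has mass $(1+e^{-\sigma})^{-|S|} \geq (1+e^{-\sigma})^{-k}$. Combining the two bounds yields the claim. If the constant coordinate $1$ lies in $S$, the bound is only looser than needed, so there is no issue.

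I do not expect a serious obstacle. The one step requiring care is noticing that anchoring converts the relevance loss into the fidelity loss exactly on the conditioning event. The rest is the product-form mass computation. It is also worth checking that the bound is oriented correctly, i.e.\ that we use a lower bound on the conditioning probability, and that restricting to the anchored subcube only decreases that probability.
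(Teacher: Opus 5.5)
Your proposal is correct and follows the paper's proof essentially step for step: anchoring plus the symmetry of $\ell$ turns the relevance loss into the fidelity loss on the conditioning event, nonnegativity lets you drop the indicator, and the product form of $\mathcal{D}_{\vec{x},\sigma}$ lower-bounds the conditioning probability by the mass $(1+e^{-\sigma})^{-|S|}$ of the anchored subcube. Your side remark about the constant coordinate is unnecessary. As defined, $\mathcal{D}_{\vec{x},\sigma}$ ranges over all of $\{-1,+1\}^d$ and may flip $z_1$ as well, so the subcube mass is exactly $(1+e^{-\sigma})^{-|S|}$ and the bound holds as stated.
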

\begin{proof}
    Let $S = \mathrm{support}(\vec{w})$, so that $|S| \leq k$ since $\vec{w} \in \mathcal{B}_0(k)$, and let $A = \{\vec{z} \in \{-1,+1\}^d : \vec{w} \cdot \vec{z} = \vec{w} \cdot \vec{x}\}$ be the conditioning event. Since $\vec{w}$ is anchored (i.e., $\vec{w} \in \mathcal{H}(\vec{x}, f(\vec{x}))$), $f(\vec{x}) = \vec{w} \cdot \vec{x}$, hence $f(\vec{x}) = \vec{w} \cdot \vec{z}$ for every $\vec{z} \in A$. Therefore
    \begin{align*}
        \mathsf{R}_{f, \vec{x}, \mathcal{D}_{\vec{x}, \sigma}}(\vec{w})
        = \mathbb{E}\big[\ell\big(\vec{w} \cdot \vec{z}, f(\vec{z})\big) \mid A\big]
        = \frac{1}{\mathbb{P}[A]} \sum_{\vec{z} \in A} \mathcal{D}_{\vec{x}, \sigma}(\vec{z}) \, \ell\big(\vec{w} \cdot \vec{z}, f(\vec{z})\big)
        \leq \frac{\mathsf{F}_{f, \vec{x}, \mathcal{D}_{\vec{x}, \sigma}}(\vec{w})}{\mathbb{P}[A]},
    \end{align*}
    where the first equality uses the symmetry of $\ell$, and the inequality follows from the non-negativity of the summands. It remains to lower-bound $\mathbb{P}[A]$. As $\vec{w}$ vanishes outside $S$, the event $\vec{z}_S = \vec{x}_S$ implies $A$. Moreover, $\|\vec{x} - \vec{z}\|_1$ decomposes coordinatewise, so $\mathcal{D}_{\vec{x}, \sigma}$ is a product distribution with marginals $\mathbb{P}[z_j = x_j] = 1/(1 + e^{-\sigma})$. Hence
    \begin{align*}
        \mathbb{P}[A] \geq \mathbb{P}[\vec{z}_S = \vec{x}_S] = (1 + e^{-\sigma})^{-|S|} \geq (1 + e^{-\sigma})^{-k}. 
    \end{align*}
\end{proof}

It should be stressed that this relationship is one-sided. Decomposing the fidelity error over the conditioning event $A$ and its complement gives $\mathsf{F} = \mathbb{P}[A] \, \mathsf{R} + \mathbb{P}[\bar{A}] \, \mathbb{E}[\ell \mid \bar{A}]$, and the second term is not controlled by the relevance error: an explanation can be highly relevant yet incur a large fidelity error, if it behaves poorly outside the anchoring subcube. Minimizing fidelity is therefore a conservative surrogate---it certifies relevance, but may overlook relevant explanations. Lemma~\ref{lem:fidelity} quantifies the price of this conservatism, which the discussion below shows to be a small constant in the regimes of interest.

Lemma~\ref{lem:fidelity} has important practical consequences for our framework. The bound between the fidelity and relevance errors becomes tighter as the concentration parameter $\sigma$ increases, since the term $e^{-\sigma}$ drops off rapidly. Although a large sparsity level $k$ could, in principle, amplify the exponential factor $(1 + e^{-\sigma})^k$, such high values of $k$ are at odds with the very goal of interpretability. As discussed in Section~\ref{sec:introduction}, human cognitive limits necessitate highly sparse explanations, generally restricting $k$ to four or five features. In the regimes of interest, the factor therefore remains a small constant: for $k = 5$, it is about $4.8$ at $\sigma = 1$, $1.9$ at $\sigma = 2$, and $1.3$ at $\sigma = 3$. This alignment between cognitive constraints and mathematical properties ensures that, for real-world XAI applications, the fidelity error provides a reliable proxy for the computationally intractable relevance error.

\subsection{Empirical Approximation and MIP Formulation}
\label{subsec:empirical_mip}

Interestingly, the fidelity error in \eqref{eq:fidelity} involves an \emph{unconditional} expectation, which is highly amenable to sampling. 
To approximate this expectation, let $\{(\vec{z}_i,f(\vec{z}_i))\}_{i=1}^m$ be a sample set where each $\vec{z}_i$ is drawn independently at random according to $\mathcal{D}_{\vec{x}, \sigma}$, and its target value $f(\vec{z}_i)$ is obtained through query access to $f$. 
The corresponding \emph{empirical fidelity error} is given by:
\begin{align}
    \label{eq:emp_fidelity}
    \widehat{\mathsf{F}}_{f, \vec{x}, m}(\vec{w}) = \frac{1}{m} \sum_{i=1}^m \ell\big(\vec{w} \cdot \vec{z}_i, f(\vec{z}_i)\big) = \frac{1}{4m} \left\| \vec{Z}\vec{w} - \vec{y} \right\|_2^2
\end{align} 
where $\vec{Z} \in \{-1,+1\}^{m \times d}$ collects the sampled instances as rows and $\vec{y} = (f(\vec{z}_1), \dots, f(\vec{z}_m))$ collects their target values.

Based on this objective function, \eqref{pb:fidelity} takes the form of a variant of the well-studied problem known as \emph{sparse regression}, also referred to as \emph{best subset selection}, which dates back at least to \citep{Beale.Biometrika.1967, Hocking.Tech.1967}. While this problem is non-convex and \ClassNP-hard \citep{Natarajan.SJC.1995}, \citet{Bertsimas.AOS.2016} showed that Mixed Integer Programming (MIP) formulations make it practically solvable at scale, an approach that has since benefited from steady progress in branch-and-cut solvers. The following formulation is a variation of their parameter-free approach utilizing \emph{Specially Ordered Sets} (SOS) \citep{Bertsimas.Book.2005}:

\begin{equation}
    \label{pb:mip}
    \tag{MIP}
    \begin{aligned}
        \text{minimize}     \quad & \frac{1}{m} \sum_{i=1}^m \ell\big(\vec{w} \cdot \vec{z}_i, f(\vec{z}_i)\big)  \\
        \text{subject to}   \quad & \vec{w} \cdot \vec{x} = f(\vec{x})  \\
                            \quad & \vec{1} \cdot \vec{u} \leq k \\
                            \quad & \|(w_j, 1 - u_j)\|_0 \leq 1, \quad \text{for all } j \in [d] \\
                            \quad & u_j \in \{0,1\}, \quad \text{for all } j \in [d] \\
                            \quad & w_j \in [-1,+1], \quad \text{for all } j \in [d]
    \end{aligned}
\end{equation}

The constraint $\|(w_j, 1 - u_j)\|_0 \leq 1$ formally encodes a Specially Ordered Set of type 1: if $u_j = 0$ then $w_j$ is forced to zero, and if $u_j = 1$ then $w_j$ is free. Since $w_j \in [-1,+1]$, this is equivalent to the Big-M constraints $-u_j \leq w_j \leq u_j$, which modern MIP solvers handle with high efficiency. The formulation is also guaranteed to be feasible: as $k \geq 1$ and $|f(\vec{x})| \leq 1$, the vector $\vec{w} = f(\vec{x}) \, \vec{e}_1$ satisfies every constraint, and it will serve as the admissible starting point for the iterative method developed in Section~\ref{sec:np}.

Compared to \eqref{pb:fidelity}, this formulation introduces the bounding constraint $w_j \in [-1,+1]$ for all $j \in [d]$. From the standpoint of interpretability, the restriction is mild: since the output of $f$ ranges over $[-1,+1]$, a coefficient of magnitude greater than one would attribute to a single feature an effect exceeding the entire range of the model, which runs against the very purpose of feature attribution. Technically, however, the constraint is essential. Combined with the sparsity constraint, it guarantees that $\|\vec{w}\|_1 \leq \|\vec{w}\|_0 \leq k$, and it is this $L_1$ bound---the radius $B$ of Theorem~\ref{thm:sample_complexity} below---that governs the sample complexity of the formulation; without it, no finite sample size would suffice.

The following result shows that if the solver is supplied with a number of samples $m$ that is quartic in $k$ and logarithmic in $d$, then with high probability, the relevance error of \emph{every} feasible explanation---and hence, in particular, that of the solution returned by the solver---is upper-bounded by a function of its empirical fidelity. We state it in terms of a generic $L_1$ radius $B$, the box-constrained case of \eqref{pb:mip} corresponding to $B = k$; Section~\ref{sec:np} applies the same result with a larger radius.

\begin{theorem}[Approximating Relevance via Empirical Fidelity]
    \label{thm:sample_complexity}
    Let $f: \{-1, +1\}^d \rightarrow [-1,+1]$ be a prediction model, $\vec{x} \in \{-1, +1\}^d$ be a data instance, $\sigma \geq 0$ be a concentration parameter, $k \geq 1$ be a sparsity parameter, and $B \geq 1$ be a magnitude parameter. Then, for any $\delta \in (0,1]$ and any $\varepsilon \in (0,1]$, if 
    \begin{align*}
    m \geq \frac{(B + 1)^4}{4\,\varepsilon^2} \left(16 \ln(2d) + \ln\Bigl(\tfrac{2}{\delta}\Bigr)\right)
    \end{align*}
    then with probability at least $1 - \delta$ over the choice of an i.i.d. sample set of size $m$, every explanation $\vec{w} \in \mathcal W_{\vec{x}, k}$ with $\|\vec{w}\|_1 \leq B$ satisfies
    \begin{align*}
        \mathsf{R}_{f, \vec{x}, \mathcal{D}_{\vec{x}, \sigma}}(\vec{w}) \leq (1 + e^{-\sigma})^k \left( \widehat{\mathsf{F}}_{f, \vec{x}, m}(\vec{w})  + \varepsilon \right)
    \end{align*}    
\end{theorem}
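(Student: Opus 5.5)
The plan is to combine Lemma~\ref{lem:fidelity} with a uniform convergence bound for the fidelity error over the class $\mathcal{G}_B = \{\vec{w} : \|\vec{w}\|_1 \leq B\}$. By Lemma~\ref{lem:fidelity}, every $\vec{w} \in \mathcal W_{\vec{x}, k}$ satisfies $\mathsf{R} \leq (1+e^{-\sigma})^k \mathsf{F}$. It will therefore suffice to show that, with probability at least $1-\delta$,
\begin{align*}
\sup_{\|\vec{w}\|_1 \leq B} \bigl( \mathsf{F}_{f,\vec{x},\mathcal{D}_{\vec{x},\sigma}}(\vec{w}) - \widehat{\mathsf{F}}_{f,\vec{x},m}(\vec{w}) \bigr) \leq \varepsilon .
\end{align*}
This one-sided uniform deviation does not involve sparsity or anchoring. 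Enlarging the class to the whole $L_1$ ball is harmless, and it explains why $k$ enters the sample size only through $B$.

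For the uniform bound I will use the standard Rademacher argument for bounded losses. Since $\vec{z} \in \{-1,+1\}^d$, we have $|\vec{w}\cdot\vec{z}| \leq \|\vec{w}\|_1 \|\vec{z}\|_\infty \leq B$. Together with $|f(\vec{z})| \leq 1$, this gives $|\vec{w}\cdot\vec{z} - f(\vec{z})| \leq B+1$, so the loss $\ell$ takes values in $[0, (B+1)^2/4]$. McDiarmid's inequality then controls the supremum around its expectation, with a deviation term of order $\frac{(B+1)^2}{4}\sqrt{\ln(2/\delta)/(2m)}$. Symmetrization bounds that expectation by twice the Rademacher complexity of the loss class.

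Next I need the Rademacher complexity of the loss class. On the relevant range, the map $t \mapsto \frac14 (t - y)^2$ is $\frac{B+1}{2}$-Lipschitz in $t$. Talagrand's contraction lemma therefore reduces the problem to the linear class $\{\vec{z} \mapsto \vec{w}\cdot\vec{z} : \|\vec{w}\|_1 \leq B\}$. By $L_1$/$L_\infty$ duality and Massart's finite class lemma over the $2d$ signed coordinates, each with $|z_j| = 1$, this class has Rademacher complexity at most $B\sqrt{2\ln(2d)/m}$. Combining the pieces, the deviation is at most of the form
\begin{align*}
\frac{(B+1)^2}{2\sqrt{m}} \Bigl( c_1\sqrt{\ln(2d)} + c_2 \sqrt{\ln(2/\delta)} \Bigr).
\end{align*}
Using $B \leq B+1$ and $\sqrt{a}+\sqrt{b} \leq \sqrt{2(a+b)}$, I then solve for $m$ to make this at most $\varepsilon$. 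The result should be the stated threshold $\frac{(B+1)^4}{4\varepsilon^2}(16\ln(2d) + \ln(2/\delta))$.

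The main obstacle is the bookkeeping of constants. I need to check that the contraction constant, the factor 2 from symmetrization, and the McDiarmid range combine into coefficients of $16$ and $1$ (or better) in front of $\ln(2d)$ and $\ln(2/\delta)$. If the straightforward constants come out slightly off, I would tighten the argument in one of two ways. The first is to apply contraction to the centered loss. The second is to bound $\mathbb{E}\sup$ directly via a one-sided version, which avoids a union bound and so uses $\ln(1/\delta)$ rather than $\ln(2/\delta)$, leaving slack. The final step is routine: on the good event, every feasible $\vec{w}$ with $\|\vec{w}\|_1 \leq B$ has $\mathsf{F} \leq \widehat{\mathsf{F}} + \varepsilon$. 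Multiplying by $(1+e^{-\sigma})^k$ via Lemma~\ref{lem:fidelity} then yields the claim.
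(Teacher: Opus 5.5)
Your proposal is correct and follows the paper's route. Both combine Lemma~\ref{lem:fidelity} with a one-sided uniform convergence bound for the fidelity error over the whole ball $\mathcal{B}_1(B)$, using the same Lipschitz constant $(B+1)/2$ and loss bound $(B+1)^2/4$. The only difference is that the paper cites Theorem 26.15 of \citet{ShalevShwartz.Book.2014} directly, whereas you re-derive it through McDiarmid, symmetrization, contraction and Massart's lemma (which is how that theorem is proved); your one-sided McDiarmid term gives $\ln(1/\delta)$ with a smaller coefficient, so your constants fit inside the stated threshold with room to spare.
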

\begin{proof}
    The result relies on standard uniform convergence bounds for linear predictors over $L_1$ spaces \citep{ShalevShwartz.Book.2014, Kakade.NeurIPS.2008}. The underlying setting is as follows: for a scalar $B$, consider a bounded input space $\mathcal Z \subseteq \{\vec{z} \in \mathbb{R}^d: \| \vec z\|_{\infty} \leq 1\}$ and an output space $\mathcal Y \subseteq \mathbb{R}$, alongside the hypothesis class $\mathcal{B}_1(B) = \{\vec{w} \in \mathbb{R}^d: \| \vec{w} \|_1 \leq B \}$ of $L_1$-bounded linear functions. In addition, given two scalars $\rho$ and $c$, let $\ell: \mathcal Y \times \mathcal Y \to \mathbb R$ be a loss function such that, for all $y \in \mathcal Y$, the mapping $a \mapsto \ell(a,y)$ is $\rho$-Lipschitz, and the magnitude $\max_{a \in [-B,+B]} | \ell(a,y) |$ is upper bounded by $c$. Then, as established by Theorem 26.15 in \citet{ShalevShwartz.Book.2014}, for any distribution $\mathcal D$ over $\mathcal Z \times \mathcal Y$, with probability at least $1 - \delta$ over the choice of an i.i.d. sample set of size $m$, every $\vec{w} \in \mathcal{B}_1(B)$ satisfies:
    \begin{align*}
    \mathbb E_{(\vec z,y) \sim \mathcal D}[\ell(\vec w \cdot \vec z, y)] \leq \frac{1}{m} \sum_{i = 1}^m \ell(\vec w \cdot \vec z_i, y_i) 
    + 2\rho B \sqrt{\frac{2 \ln(2d)}{m}} + c\sqrt{\frac{2\ln(2/\delta)}{m}}
    \end{align*} 
    In the setting of our framework, $\mathcal Z = \{-1,+1\}^d$, which meets the above requirement since $\|\vec{z}\|_{\infty} = 1$ for all $\vec{z} \in \mathcal Z$.
    Furthermore, the explanations of interest belong to our primary hypothesis space $\mathcal W_{\vec{x}, k}$ and have an $L_1$ norm of at most $B$; they therefore lie in $\mathcal{B}_1(B)$, so the above bound applies to all of them with this radius, the anchoring and sparsity constraints defining $\mathcal W_{\vec{x}, k}$ only shrinking the class and hence only helping. Note that any $\vec{w}$ feasible for \eqref{pb:mip} satisfies $\|\vec{w}\|_1 \leq \|\vec{w}\|_0 \leq k$, since each non-zero coordinate is bounded by $1$ in absolute value; the box-constrained case is thus recovered by setting $B = k$. The radius is kept explicit because Section~\ref{sec:np} applies this result to iterates that are not confined to the unit box.

    It remains to instantiate the two constants of the loss $\ell(a, y) = \tfrac{1}{4}(a - y)^2$. For any $\vec{z} \in \{-1,+1\}^d$ and any such $\vec{w}$, the prediction $\vec{w} \cdot \vec{z}$ is bounded in $[-B, B]$ and the true target $f(\vec{z})$ is bounded in $[-1, +1]$, so their absolute difference is at most $B+1$. The magnitude of the loss is therefore bounded by $c = (B+1)^2/4$. In addition, the derivative of $\ell$ with respect to the prediction is $\tfrac{1}{2}(\vec{w} \cdot \vec{z} - f(\vec{z}))$, whose magnitude is at most $(B+1)/2$, so that $\rho = (B+1)/2$.
    
    Based on these observations, and upper-bounding $2\rho B = B(B+1)$ by $(B+1)^2$, it follows that with probability at least $1 - \delta$, every such explanation $\vec{w}$ satisfies:
    \begin{align*}
        \mathsf{F}_{f, \vec{x}, \mathcal{D}_{\vec{x}, \sigma}}(\vec{w})  \leq \widehat{\mathsf{F}}_{f, \vec{x}, m}(\vec{w}) + \frac{(B+1)^2}{\sqrt{m}} \left( \sqrt{2\ln(2d)} + \tfrac{1}{4}\sqrt{2\ln(2/\delta)} \right)
    \end{align*}
    Requiring the deviation term to be at most $\varepsilon$, solving for $m$, and applying the algebraic inequality $(a+b)^2 \leq 2a^2 + 2b^2$ yields the lower bound on $m$ stated in Theorem~\ref{thm:sample_complexity}. Under this condition, $\mathsf{F}_{f, \vec{x}, \mathcal{D}_{\vec{x}, \sigma}}(\vec{w}) \leq \widehat{\mathsf{F}}_{f, \vec{x}, m}(\vec{w}) + \varepsilon$ for every such explanation, and multiplying both sides by $(1 + e^{-\sigma})^k$ before applying Lemma~\ref{lem:fidelity} concludes the proof.
\end{proof}

Two important remarks follow. First, for a fixed radius $B$, the required sample size is independent of the concentration parameter $\sigma$;
a single bound applies across the entire family of neighborhood distributions, from the uniform case to sharply localized scenarios.
This is a genuine advantage of \eqref{pb:mip}: its box-constrained feasible set ensures $B = k$ for any $\sigma$;
Section~\ref{sec:np} shows that, in contrast, the polynomial-time alternative incurs a cost for locality, with its radius increasing alongside $\sigma$.
Second, since these guarantees are worst-case results reliant on uniform convergence, the constants involved are quite conservative. For example, setting $k = 5$, $d = 100$, $\varepsilon = 0.1$, and $\delta = 0.05$,
yields a theoretical requirement of roughly $2.9 \times 10^6$ samples. In practice, much smaller sample sizes are sufficient, as demonstrated empirically in Section~\ref{sec:experiments}.
Crucially, the dependence on dimensionality is only logarithmic, ensuring the bound degrades gracefully as $d$ increases.


\section{Dealing with \ClassNP-Hardness}
\label{sec:np}

While the MIP formulation presented in Section~\ref{subsec:empirical_mip} characterizes the exact optimum of the empirical problem, its worst-case exponential runtime reflects the fundamental \ClassNP-hardness of the underlying sparse regression problem. To transition from this computational bottleneck to a strictly polynomial-time approximation, we must move beyond combinatorial search and exploit the geometric structure of the empirical objective. To this end, we rely on the \emph{Restricted Strong Convexity} (RSC) and \emph{Restricted Strong Smoothness} (RSS) properties \citep{Negahban.NeurIPS.2009}. Widely utilized in the statistical learning literature for sparse recovery \citep{Agarwal.NeurIPS.2010, ShalevShwartz.SJO.2010, Jalali.NeurIPS.2011, Jain.NeurIPS.2014, Yuan.JMLR.2017}, these conditions ensure that the empirical loss function behaves much like a strongly convex and smooth function, provided the optimization trajectory is restricted to sparse vectors.

This section establishes that our parameterized family of neighborhood distributions $\mathcal{D}_{\vec{x}, \sigma}$ satisfies these RSC and RSS properties with high probability, provided the sample size scales appropriately with the concentration parameter $\sigma$. This geometric regularity is what allows us to deploy a variant of the \emph{Iterative Hard Thresholding} (IHT) algorithm \citep{Blumensath.JFAA.2008, Blumensath.ACHA.2009}, adapted to the anchoring constraint of our framework and equipped with convergence guarantees. The resulting method runs in polynomial time and offers a scalable alternative to the MIP formulation.

\subsection{Restricted Strong Convexity and Smoothness}
\label{subsec:rsc_rss}

In standard continuous optimization, strong convexity ensures that an objective function curves upward everywhere, allowing gradient-based methods to rapidly converge to a unique global minimum. In contrast, generating $k$-sparse linear explanations constrains the feasible space within the non-convex ball $\mathcal B_0(k)$. When instances are sampled uniformly at random ($\sigma = 0$), the data matrix satisfies, with high probability and for $m$ large enough, the classic \emph{Restricted Isometry Property} (RIP) after the usual normalization $(\sfrac{1}{\sqrt{m}}) \vec{Z}$---a foundational concept in compressed sensing that guarantees a matrix acts nearly as an isometry on sparse vectors \citep{Candes.TIT.2005}. Unfortunately, when we localize explanations ($\sigma > 0$), the sampled instances become biased toward the central instance $\vec{x}$. The individual features remain mutually independent, but they are no longer centered, so that the second-moment matrix ceases to be near-isotropic: it acquires a rank-one component along the single direction $\vec{x}$, which is what violates standard RIP assumptions. The RSC and RSS properties handle this issue by extending the isometry concept to more general loss functions---such as our empirical fidelity error---and, as we will see, the offending direction is precisely the one that the anchoring constraint removes from consideration. Geometrically, these properties guarantee that the loss function's curvature remains well-behaved (neither excessively flat nor steep) as long as updates are limited to a small number of features.

\begin{definition}[Orthogonal RSC/RSS]
    \label{def:rcs_rss}
Given a reference instance $\vec{x} \in \{-1, +1\}^d$, an integer $s \geq 1$, and two scalars $\alpha, \beta > 0$, a matrix $\vec{Z} \in \mathbb{R}^{m \times d}$ is said to satisfy the \emph{$\alpha$-RSC} and \emph{$\beta$-RSS} properties of \emph{order $s$ orthogonal to $\vec{x}$} if for any $\vec w \in \mathcal{B}_0(s)$ such that $\vec{w} \cdot \vec{x} = 0$, we have:
\begin{align*}
\alpha \|\vec{w}\|_2^2 \leq \tfrac{1}{m} \|\vec{Z}\vec{w} \|_2^2 \leq \beta \|\vec{w}\|_2^2
\end{align*}
The ratio $\beta/\alpha$ is called the \emph{restricted condition number}. Note that $\mathcal{B}_0(s') \subseteq \mathcal{B}_0(s)$ whenever $s' \leq s$, so these properties at a given order automatically hold at every smaller order.
\end{definition}

To ground these properties in our unified framework, recall that our goal is to minimize the empirical fidelity error \eqref{eq:emp_fidelity} subject to the size limit $\| \vec w \|_0 \leq k$ and the anchoring constraint $\vec w \cdot \vec x = f(\vec x)$. The reader may notice a discrepancy between this anchoring constraint and the condition $\vec{w} \cdot \vec{x} = 0$ in Definition~\ref{def:rcs_rss}. This distinction is intentional: in gradient-based optimization, we evaluate curvature along \emph{update directions}. If $\vec{w}_1$ and $\vec{w}_2$ are two feasible linear explanations, their difference $\Delta \vec{w} = \vec{w}_1 - \vec{w}_2$ is a $2k$-sparse vector that satisfies $\Delta \vec{w} \cdot \vec{x} = 0$. Thus, bounding the Hessian over these zero-anchored sparse directions directly dictates the stability of the algorithm. Because the empirical fidelity error is quadratic, its Hessian is the constant empirical second-moment matrix $\nabla^2 \widehat{\mathsf{F}} = (\sfrac{1}{2m}) \vec{Z}^{\top} \vec{Z}$. The scalars of Definition~\ref{def:rcs_rss} therefore translate into curvature bounds $\alpha/2$ and $\beta/2$ for $\widehat{\mathsf{F}}$ along sparse anchored directions, leaving the restricted condition number unchanged. Section~\ref{subsec:convergence} works instead with the rescaled objective $L = 2\widehat{\mathsf{F}}$, for which these curvature bounds are exactly $\alpha$ and $\beta$.

The lemma below establishes these bounds, together with a second, auxiliary property. The rank-one shift separating $\vec{Z}$ from its centered counterpart is annihilated along anchored directions, but not elsewhere: the residual of the empirical fidelity error carries a constant component, whose interaction with sparse directions is governed by the empirical mean of the centered rows. Property (ii) states that this mean is uniformly small along such directions, at a tolerance $\omega$ left free at this stage. It plays no part in the RSC/RSS geometry itself and will be invoked in Section~\ref{subsec:convergence}.

\begin{lemma}[Orthogonal RSC/RSS of Neighborhood Distributions]
\label{lem:rsc_rss}
Given a concentration parameter $\sigma \geq 0$, a confidence parameter $\delta \in (0,1]$, an order $s \geq 1$, a target condition scalar $\gamma > 1$, and a tolerance $\omega > 0$, write $\nu = \tanh(\sigma/2)$, $\lambda = \operatorname{sech}^2(\sigma/2)$, $\theta = \tfrac{\gamma-1}{2(\gamma+1)}$, and let $\overline{\vec{Z}} = \vec{Z} - \nu \vec{1} \vec{x}^{\top}$ be the centered data matrix. Then there exist absolute constants $c_0, c_1 > 0$ such that if the sample size satisfies
\begin{align*}
m \geq \max \left\{ \; c_0 \left( \tfrac{\gamma + 1}{\gamma - 1} \right)^2 \left( s \ln\left(\tfrac{ed}{s}\right) + \ln\left(\tfrac{2 c_1}{\delta}\right) \right) \cosh^4\Big(\tfrac{\sigma}{2}\Big), \quad \frac{2s}{\omega^2} \ln\left(\tfrac{4d}{\delta}\right) \; \right\}
\end{align*}
then, with probability at least $1 - \delta$, the following two properties hold simultaneously:
\begin{enumerate}
    \item[(i)] for every $\vec{w} \in \mathcal{B}_0(s)$, $\; \alpha \|\vec{w}\|_2^2 \leq \tfrac{1}{m} \|\overline{\vec{Z}}\vec{w}\|_2^2 \leq \beta \|\vec{w}\|_2^2$, where $\alpha = \lambda(1 - \theta)$ and $\beta = \lambda(1 + \theta)$, so that $\beta/\alpha = \tfrac{3\gamma+1}{\gamma+3} < \gamma$;
    \item[(ii)] $\displaystyle \sup_{\vec{u} \in \mathcal{B}_0(s), \|\vec{u}\|_2 = 1} \left| \tfrac{1}{m} \vec{1}^{\top} \overline{\vec{Z}} \vec{u} \right| \leq \omega$.
\end{enumerate}
In particular, since $\vec{Z}\vec{w} = \overline{\vec{Z}}\vec{w}$ for every $\vec{w}$ with $\vec{w} \cdot \vec{x} = 0$, property (i) implies that $\vec{Z}$ satisfies the $\alpha$-RSC and $\beta$-RSS properties of order $s$ orthogonal to $\vec{x}$.
\end{lemma}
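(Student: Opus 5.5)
The plan is to reduce both properties to the centered matrix $\overline{\vec{Z}}$. Its rows $\bar{\vec{z}}_i = \vec{z}_i - \nu \vec{x}$ are i.i.d., centered, and have independent coordinates.

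First compute the moments under $\mathcal{D}_{\vec{x},\sigma}$. It is a product distribution with $\mathbb{P}[z_j = x_j] = 1/(1+e^{-\sigma})$, so $\mathbb{E}[z_j] = x_j \tanh(\sigma/2) = \nu x_j$. Since $z_j^2 = 1$, we get $\mathrm{Var}(z_j) = 1 - \nu^2 = \operatorname{sech}^2(\sigma/2) = \lambda$. Hence $\mathbb{E}[\bar{\vec{z}}_i \bar{\vec{z}}_i^\top] = \lambda \vec{I}$. Moreover, each $\bar z_{ij}$ is a bounded centered variable taking values in an interval of length $2$, hence sub-Gaussian with constant $O(1)$ by Hoeffding's lemma.

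\emph{Property (i).} Normalize the rows: $\vec{g}_i = \bar{\vec{z}}_i/\sqrt{\lambda}$ are isotropic with independent coordinates. Their sub-Gaussian norm is $K \lesssim 1/\sqrt{\lambda} = \cosh(\sigma/2)$. I would invoke the standard RIP theorem for matrices with independent isotropic sub-Gaussian rows (Mendelson--Pajor--Tomczak-Jaegermann, or Vershynin's covering argument). That theorem states that $\sup_{\vec{w}\in\mathcal{B}_0(s),\|\vec{w}\|_2=1} |\tfrac{1}{m}\|\vec{G}\vec{w}\|_2^2 - 1| \leq \theta$ with probability $1-\delta$ once $m \gtrsim K^4 \theta^{-2}(s\ln(ed/s) + \ln(c_1/\delta))$. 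The proof behind it runs as follows. For fixed unit $\vec{w}$, $\langle \vec{g}_i,\vec{w}\rangle^2 - 1$ is sub-exponential with norm $O(K^2)$, so Bernstein's inequality applies. Then take a $1/4$-net on each of the $\binom{d}{s}$ unit spheres of dimension $s$; their total size is at most $(ed/s)^s 9^s$. Finally, extend from the net to the whole sphere by the usual quadratic-form bootstrap. The factor $K^4 = \cosh^4(\sigma/2)$ and $\theta^{-2} = 4((\gamma+1)/(\gamma-1))^2$ then match the stated first term of the maximum. Multiplying back by $\lambda$ gives $\alpha = \lambda(1-\theta)$ and $\beta = \lambda(1+\theta)$. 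The identity $\beta/\alpha = (1+\theta)/(1-\theta) = (3\gamma+1)/(\gamma+3)$ is direct algebra, and it is $<\gamma$ because $\gamma^2 + 3\gamma > 3\gamma + 1$.

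\emph{Property (ii).} Let $\vec{v} = \tfrac{1}{m}\overline{\vec{Z}}^\top \vec{1}$. Each $v_j$ is an average of $m$ independent centered variables in an interval of length $2$. Hoeffding gives $\mathbb{P}[|v_j| > t] \leq 2e^{-mt^2/2}$, and a union bound over $j\in[d]$ yields $\|\vec{v}\|_\infty \leq t$ with probability at least $1-\delta/2$ when $m \geq \tfrac{2}{t^2}\ln(4d/\delta)$. For $s$-sparse unit $\vec{u}$, Cauchy--Schwarz on the support gives $|\vec{v}\cdot\vec{u}| \leq \sqrt{s}\,\|\vec{v}\|_\infty$. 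Choosing $t = \omega/\sqrt{s}$ therefore recovers exactly the second term $\tfrac{2s}{\omega^2}\ln(4d/\delta)$.

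Allotting $\delta/2$ to each event, a union bound gives both properties simultaneously with probability $1-\delta$; constants are absorbed into $c_0$ and $c_1$. The final claim is immediate: $\vec{Z}\vec{w} = \overline{\vec{Z}}\vec{w} + \nu\vec{1}(\vec{x}\cdot\vec{w})$, and the last term vanishes when $\vec{w}\cdot\vec{x}=0$.

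The main obstacle is property (i). One must verify the sub-Gaussian constant of the rescaled rows precisely enough to get the $\cosh^4(\sigma/2)$ dependence; the $K^4$ comes from squaring the sub-exponential norm $K^2$ inside Bernstein's inequality in the small-deviation regime. One must also run the net argument carefully so that only a $\ln(c_1/\delta)$ term, and not an extra factor, appears. I would cite the sub-Gaussian RIP result directly, tracking $K$ explicitly, rather than rederive it.
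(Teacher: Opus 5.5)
Your proposal is correct and follows the paper's proof: the same moment computation, property (i) from a sub-Gaussian restricted-eigenvalue deviation bound over $s$-sparse directions, and property (ii) from Hoeffding's inequality with a union bound over coordinates and the estimate $\sqrt{s}\,\|\cdot\|_\infty$, with $\delta/2$ allotted to each event. The only difference is cosmetic: you rescale the rows to isotropy and use relative tolerance $\theta$ with sub-Gaussian norm $K \lesssim \cosh(\sigma/2)$, whereas the paper keeps sub-Gaussian parameter $1$, asks for absolute tolerance $t = \theta\lambda$, and cites Wainwright's Theorem 6.5 instead of sketching the net argument; both give the same $\theta^{-2}\cosh^4(\sigma/2)$ scaling.
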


\begin{proof}
    Since the Hamming distance decomposes additively across features as $\tfrac{1}{2}\|\vec{x} - \vec{z}\|_1 = \sum_{j=1}^d \mathbb{1}[x_j \neq z_j]$, the distribution $\mathcal{D}_{\vec{x}, \sigma}$ is a product distribution: the coordinates $z_1, \dots, z_d$ are mutually independent, with $\mathbb{P}[z_j = x_j] = 1/(1 + e^{-\sigma})$, hence
    \begin{align*}
        \mathbb{E}[z_j] = x_j \left( \tfrac{1}{1 + e^{-\sigma}} \right) - x_j \left( \tfrac{e^{-\sigma}}{1 + e^{-\sigma}} \right) = x_j \tfrac{1 - e^{-\sigma}}{1 + e^{-\sigma}} = \nu x_j , \qquad \operatorname{Var}(z_j) = 1 - \nu^2 = \lambda
    \end{align*}
    Writing $\vec{\mu} = \nu \vec{x}$ for the mean vector, so that $\overline{\vec{Z}} = \vec{Z} - \vec{1} \vec{\mu}^{\top}$, the covariance matrix of $\mathcal{D}_{\vec{x}, \sigma}$ is perfectly isotropic, while the second-moment matrix carries an additional rank-one term:
    \begin{align*}
        \vec{\Sigma} = \lambda \vec{I}, \qquad \vec{Q} = \mathbb{E}[\vec{z}\vec{z}^{\top}] = \vec{\Sigma} + \nu^2 \vec{x}\vec{x}^{\top}
    \end{align*}
    Locality thus distorts the geometry in two distinct ways: it shrinks the isotropic component by a factor $\lambda$, and it adds a bias along the single direction $\vec{x}$. The rows of $\overline{\vec{Z}}$ are i.i.d.\ copies of $\vec{z} - \vec{\mu}$; they are centered, have covariance $\vec{\Sigma}$, and each of their coordinates lies in an interval of length $2$, hence is sub-Gaussian with parameter $1$ by Hoeffding's Lemma.

    \emph{Property (i).} By standard restricted eigenvalue bounds for empirical covariance matrices of centered sub-Gaussian vectors \citep[Theorem 6.5]{Wainwright.CUP.2019}, for any tolerance $t \in (0,1]$ the uniform deviation over all $s$-sparse unit vectors satisfies
    \begin{align*}
        \sup_{\vec{w} \in \mathcal{B}_0(s), \|\vec{w}\|_2 = 1} \left| \tfrac{1}{m}\|\overline{\vec{Z}}\vec{w}\|_2^2 - \vec{w}^{\top} \vec{\Sigma} \vec{w} \right| \leq t
    \end{align*}
    with probability at least $1 - c_1 \exp(-c_2 m t^2 + s \ln(ed/s))$, where $c_1, c_2$ are absolute constants; the tolerance we select below is at most $\lambda/2 \leq 1/2$, so this quadratic regime indeed applies. Since $\vec{w}^{\top} \vec{\Sigma} \vec{w} = \lambda \|\vec{w}\|_2^2$ and both sides scale with $\|\vec{w}\|_2^2$, the bound extends to every $\vec{w} \in \mathcal{B}_0(s)$ in the form $| \tfrac{1}{m}\|\overline{\vec{Z}}\vec{w}\|_2^2 - \lambda\|\vec{w}\|_2^2 | \leq t \|\vec{w}\|_2^2$. Setting $t = \theta \lambda$ yields exactly the scalars $\alpha = \lambda(1-\theta)$ and $\beta = \lambda(1+\theta)$ announced in the statement, whose ratio is
    \begin{align*}
        \frac{\beta}{\alpha} = \frac{1+\theta}{1-\theta} = \frac{3\gamma+1}{\gamma+3} < \gamma
    \end{align*}
    the last inequality being equivalent to $\gamma^2 > 1$. The appearance of $\cosh^4(\sigma/2) = 1/\lambda^2$ in the sample size traces back to a mismatch of scales. The deviation bound above is stated in absolute terms, the rows of $\overline{\vec{Z}}$ being sub-Gaussian with parameter $1$ irrespective of $\sigma$, whereas the curvature to be estimated is itself of order $\lambda$. Securing a \emph{relative} precision $\theta$ on that curvature thus requires an absolute precision $\theta \lambda$, and the sample size scales with the inverse square of this quantity.\footnote{This dependence is not tight: the sub-Gaussian parameter obtained from Hoeffding's Lemma is loose for strongly biased coordinates, whose optimal proxy also decreases with $\sigma$. Sharpening it would complicate the analysis without altering the qualitative picture.} Formally, requiring $c_1 \exp(-c_2 m t^2 + s \ln(ed/s)) \leq \delta/2$, taking logarithms, isolating $m$, and substituting $t = \theta\lambda$ with $1/\lambda = \cosh^2(\sigma/2)$ gives
    \begin{align*}
        m \geq \tfrac{1}{c_2 t^2} \left( s \ln\left(\tfrac{ed}{s}\right) + \ln\left(\tfrac{2c_1}{\delta}\right) \right) = \tfrac{4}{c_2} \left( \tfrac{\gamma + 1}{\gamma - 1} \right)^2 \left( s \ln\left(\tfrac{ed}{s}\right) + \ln\left(\tfrac{2c_1}{\delta}\right) \right) \cosh^4\Big(\tfrac{\sigma}{2}\Big)
    \end{align*}
    which is the first term of the maximum, with $c_0 = 4/c_2$.

    \emph{Property (ii).} Let $\overline{\vec{\zeta}} = \tfrac{1}{m} \overline{\vec{Z}}^{\top} \vec{1}$ be the empirical mean of the centered rows, so that the supremum in (ii) equals the largest $\ell_2$ norm of any $s$ coordinates of $\overline{\vec{\zeta}}$, itself at most $\sqrt{s} \, \|\overline{\vec{\zeta}}\|_{\infty}$. Each coordinate is an average of $m$ i.i.d.\ centered variables that are sub-Gaussian with parameter $1$, so Hoeffding's inequality and a union bound over the $d$ coordinates give $\|\overline{\vec{\zeta}}\|_{\infty} \leq \sqrt{2\ln(4d/\delta)/m}$ with probability at least $1 - \delta/2$. The second term of the maximum is precisely the condition ensuring $\sqrt{2 s \ln(4d/\delta)/m} \leq \omega$. A union bound over the two events concludes the proof.
\end{proof}

Two consequences of Lemma~\ref{lem:rsc_rss} deserve emphasis, particularly as they formalize the fundamental computational trade-off with the MIP approach of Section~\ref{subsec:empirical_mip}. First, while MIP's sample complexity is entirely independent of $\sigma$, the required number of samples here scales with $\cosh^4(\sigma/2)$. This is a direct statistical consequence of the distribution's locality: as $\sigma \to \infty$, the variance $\lambda$ of each individual feature vanishes. Estimating a geometric curvature that is itself flattening out requires proportionally more data to separate the signal from the noise. Fortunately, within the cognitively meaningful range of concentration parameters discussed in Section~\ref{sec:pp}, this overhead remains manageable---the multiplier $\cosh^4(\sigma/2)$ is approximately $1.6$ at $\sigma = 1$, $5.7$ at $\sigma = 2$, and $31$ at $\sigma = 3$---only becoming prohibitive for $\sigma \gtrsim 5$. Second, and crucially, both $\alpha$ and $\beta$ shrink by the \emph{exact same} factor $\lambda$, meaning the restricted condition number $\beta/\alpha$ remains strictly bounded by $\gamma$, uniformly across all $\sigma$. In short, locality costs samples, not conditioning---and as the next subsection demonstrates, it is the conditioning that ultimately dictates the convergence guarantees of gradient-based methods.

As for property (ii), the tolerance $\omega$ enters the sample size only through the second term of the maximum, which is linear in $s$ and logarithmic in $d$. Section~\ref{subsec:convergence} calls for $\nu \omega = \Theta(\beta/\sqrt{k})$ at order $s = 3k$, under which this second term scales as $k^2 \sinh^2(\sigma) \ln(4d/\delta)$. It carries the same asymptotic dependence on $\sigma$ as the first term, since $\sinh^2(\sigma) \sim 4\cosh^4(\sigma/2)$, but a quadratic rather than linear dependence on the order; it vanishes altogether at $\sigma = 0$, where the rows are already centered and the property is void. Both terms are in turn dominated by the sample size required for generalization in Theorem~\ref{thm:end_to_end}, so the auxiliary property does not drive the overall complexity.

\subsection{The Iterative Hard Thresholding Explainer}
\label{subsec:iht}

With restricted convexity and smoothness ensured by our exponentially localized distributions, we can efficiently approximate the optimal sparse explanation using a tailored variant of the Iterative Hard Thresholding (IHT) algorithm. The classic IHT method \citep{Blumensath.JFAA.2008,Blumensath.ACHA.2009} operates by alternating between two steps: a standard gradient descent update to reduce the loss, followed by a nonlinear thresholding operation that retains only the $k$ largest-magnitude coefficients, projecting the weights back onto the $L_0$ ball.

However, standard IHT alone is insufficient for our explainability framework. A valid probabilistic linear explanation must not only be sparse but also exactly reconstruct the black-box prediction for the given instance---that is, it must satisfy the anchoring constraint $\vec{w} \cdot \vec{x} = f(\vec{x})$. To achieve this, our variant (described in Algorithm~\ref{alg:iht}) incorporates a specialized projection step: after each gradient update, the continuous weights are projected onto the geometric intersection
$\mathcal{W}_{\vec{x}, k} = \mathcal{H}(\vec{x}, f(\vec{x})) \cap \mathcal{B}_0(k)$ of the anchoring hyperplane and the $k$-sparse ball. The algorithm is initialized with the trivial anchored explanation $\vec{w}_0 = f(\vec{x}) \, \vec{e}_1$, which places the entire prediction on the intercept and attributes nothing to any feature. This $1$-sparse vector is already feasible---anchored and inside the unit box---so that every iterate, from $\vec{w}_0$ onwards, is a valid $k$-sparse anchored explanation. As we show below, this seemingly cosmetic choice is what keeps the iterates bounded, and hence the sample complexity under control.

\begin{algorithm}[t]
    \SetArgSty{textrm}
    \DontPrintSemicolon
    \SetKw{Input}{Input:}
    \BlankLine
    \Input{explanation query $(\vec x, f(\vec x))$, labeled sample set $\{(\vec z_i, f(\vec z_i))\}_{i=1}^m$, sparsity level $k$, step-size $\eta$, iteration count $T$}\;
    \BlankLine
    $\vec Z \gets (\vec z_1,\cdots,\vec z_m)$\;
    $\vec y \gets (f(\vec z_1),\cdots,f(\vec z_m))$\;
    $\vec w_0 \gets f(\vec x) \, \vec e_1$\;
    \BlankLine
    \For{$t = 1,2,\ldots,T$}
    {
        $\vec v_t  \gets \vec w_{t-1} - \tfrac{\eta}{m} \vec Z^{\top}(\vec Z\vec w_{t-1} - \vec y)$\;
        $\vec u_t \gets \vec v_t \odot \vec x$\;
        $\vec u^*_t \gets \textsc{gshp}(\vec u_t, k, f(\vec x))$\;
        $\vec w_t  \gets \vec u^*_t \odot \vec x$\;
    }
    \caption{Iterative Hard Thresholding Explainer}
    \label{alg:iht}
\end{algorithm}

The following result ensures that the projection step inside the loop operates in low polynomial time, circumventing the combinatorial explosion typically associated with sparse constrained optimization.

\begin{lemma}[Anchored Sparse Projection]
    \label{lem:projection}
    Let $(\vec x, f(\vec x)) \in \{-1,+1\}^d \times [-1,+1]$ be a labeled data instance and $k \geq 1$ be a sparsity level. Then, the Euclidean projection of any vector $\vec v \in \mathbb R^d$ onto $\mathcal{W}_{\vec{x}, k} = \mathcal H(\vec x, f(\vec x)) \cap \mathcal B_0(k)$ can be computed exactly in $\mathcal O(d \log d + k^2)$ time.
\end{lemma}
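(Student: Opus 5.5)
The plan is to reduce the problem to a projection with an all-ones constraint, characterize the optimal support by an exchange argument, and then enumerate a small family of candidate supports; this last part is the \textsc{gshp} routine called in Algorithm~\ref{alg:iht}.

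\textbf{Step 1: change of variables.} Since $\vec x \in \{-1,+1\}^d$, the map $\vec v \mapsto \vec v \odot \vec x$ is a linear isometry and an involution, and it preserves supports. It sends $\mathcal H(\vec x, f(\vec x)) \cap \mathcal B_0(k)$ onto $\mathcal H(\vec 1, c) \cap \mathcal B_0(k)$, where $c = f(\vec x)$. Projecting $\vec v$ onto $\mathcal W_{\vec x,k}$ is therefore the same as projecting $\vec u = \vec v \odot \vec x$ onto $\{\vec u' : \vec 1 \cdot \vec u' = c,\ \|\vec u'\|_0 \le k\}$ and multiplying back by $\vec x$. This is exactly what the two Hadamard steps of Algorithm~\ref{alg:iht} do. I assume $d \ge k$; otherwise the sparsity constraint is void and the answer is the ordinary hyperplane projection.

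\textbf{Step 2: fixed support.} Since $\mathcal B_0(k)$ is the union of the coordinate subspaces with $|S| = k$, we minimize over such $S$. For a fixed $S$, projecting onto the affine set $\{\vec u' : \mathrm{support}(\vec u') \subseteq S,\ \sum_{j\in S} u'_j = c\}$ is elementary: $u'_j = u_j + \tau_S$ on $S$ with $\tau_S = (c - \sum_{j\in S} u_j)/k$, and $u'_j = 0$ off $S$. The squared distance is
\[
\|\vec u\|_2^2 - \sum_{j\in S} u_j^2 + \tfrac{1}{k}\Big(c - \sum_{j\in S} u_j\Big)^2 .
\]
The task thus reduces to a combinatorial minimization of this quantity over $|S| = k$.

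\textbf{Step 3: structure of the optimal support (main obstacle).} This is the step that avoids the $\binom{d}{k}$ enumeration, and I would handle it with an exchange argument. Let $S$ be optimal with projected values $\vec u'$. Take $i \in S$ and $j \notin S$, and move the value $u'_i$ from coordinate $i$ to coordinate $j$. This keeps the sum equal to $c$ and the sparsity equal to $k$. The cost changes by
\[
(u'_i - u_j)^2 + u_i^2 - (u'_i - u_i)^2 - u_j^2 = 2u'_i\,(u_i - u_j).
\]
Optimality forces $u'_i(u_i - u_j) \ge 0$ for all such pairs. Hence every $i\in S$ with $u'_i > 0$ satisfies $u_i \ge u_j$ for all $j\notin S$, and every $i$ with $u'_i < 0$ satisfies $u_i \le u_j$. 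Coordinates with $u'_i = 0$ can be relabelled freely, with ties broken consistently. Consequently $S^c$ can be taken to be a contiguous block in the sorted order of $\vec u$. Equivalently, $S$ consists of the $p$ largest and the $k-p$ smallest entries for some $p \in \{0,\dots,k\}$.

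\textbf{Step 4: algorithm and running time.} Sort $\vec u$ once, in $\mathcal O(d\log d)$ time. For each of the $k+1$ splits $p$, evaluate the cost of Step~2 from prefix sums of the sorted values and of their squares. Keep the best split, apply the shift $\tau_S$ on it, and map back through $\odot\,\vec x$.

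Even if each candidate is evaluated naively in $\mathcal O(k)$ time, the enumeration costs only $\mathcal O(k^2)$, which gives the total $\mathcal O(d\log d + k^2)$. The cost is exact, since the minimum is attained among the $k+1$ candidates. The delicate points to check carefully are the tie and zero cases in Step~3, and the case where an optimal projection has fewer than $k$ nonzeros. That last case is covered because every smaller coordinate subspace is contained in some $k$-dimensional one.
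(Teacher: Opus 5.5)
Your proof is correct, and its first step is exactly the paper's: the sign flip $\vec{u} = \vec{v} \odot \vec{x}$ reduces everything to projecting onto $\mathcal{B}_0(k) \cap \mathcal{H}(\vec{1}, f(\vec{x}))$. From there the routes differ. The paper simply invokes \textsc{gshp} and its exactness theorem from \citet{Kyrillidis.ICML.2013}. You instead prove exactness yourself, in three moves: the closed-form shift on a fixed support, the exchange inequality $u'_i(u_i - u_j) \geq 0$, and the resulting fact that some optimal support consists of the $p$ largest and $k-p$ smallest entries. Scanning the $k+1$ splits with prefix sums then suffices.

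The citation is shorter and certifies the specific routine called in Algorithm~\ref{alg:iht}. Your argument is self-contained, shows why the search over supports collapses to $k+1$ candidates, and with prefix sums even removes the $k^2$ term. Be aware that your scan is not literally \textsc{gshp}, which grows a single support greedily. This is harmless for the lemma and for the convergence analysis, since both use only exactness of the projection.

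The zero-coordinate case you flag is cleanest if you argue on $A = \mathrm{support}(\vec{u}^\star)$ for an optimal $\vec{u}^\star$.
\begin{itemize}
\item The exchange inequality holds for all $i \in A$ and $j \notin A$, because moving a value into a zero coordinate is also feasible.
\item Hence indices where $\vec{u}^\star$ is positive carry $u$-values at least as large as every index outside $A$, and indices where it is negative carry values at most as large.
\item Pad $A$ to size $k$ with the largest-valued indices outside $A$. This gives a split support on which $\vec{u}^\star$ is feasible, hence optimal.
\item Ties do not matter, since the cost depends only on the multiset $\{u_j\}_{j \in S}$.
\end{itemize}
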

\begin{proof}
    As outlined in the projection phase of Algorithm~\ref{alg:iht}, the objective is to compute $\vec{w}_t = \Pi_{\mathcal{W}_{\vec{x}, k}}(\vec v_t)$.
    To leverage existing sparse projection techniques, we perform a change of variables by taking the Hadamard product $\vec u_t = \vec v_t \odot \vec x$. Let $\mathcal V$ denote the intersection of $\mathcal B_0(k)$ with the standard diagonal hyperplane $\mathcal H(\vec 1, f(\vec{x}))$.

    Because $\vec{x} \in \{-1, +1\}^d$, we have $\vec{x} \odot \vec{x} = \vec{1}$, meaning the Hadamard product is invertible via self-multiplication (i.e., $\vec{u}_t \odot \vec{x} = \vec{v}_t \odot \vec{x} \odot \vec{x} = \vec{v}_t$). Consequently, for any vector $\vec{w}'$ and its transformed counterpart $\vec{u}' = \vec{w}' \odot \vec{x}$, we have the algebraic equivalence $\vec u' \cdot \vec 1 = f(\vec{x})$ if and only if $\vec w' \cdot \vec x = f(\vec{x})$. This implies that $\vec u' \in \mathcal V$ if and only if $\vec w' \in \mathcal{W}_{\vec{x}, k}$.

    This structural equivalence, combined with the fact that Euclidean distance is preserved under this sign-flipping bijection ($\|\vec u' - \vec u_t\|_2 = \|\vec w' - \vec v_t\|_2$), implies that the target projection can be factored as:
    \begin{align*}
        \Pi_{\mathcal{W}_{\vec{x}, k}}(\vec v_t) = \left(\Pi_{\mathcal V}(\vec u_t)\right) \odot \vec x
    \end{align*}
    Let $\vec u^*_t$ be the projection of $\vec u_t$ onto $\mathcal V$. Since $\mathcal V$ is exactly the intersection of the $k$-sparse set with a diagonal hyperplane, computing $\vec u^*_t$ is an instance of the sparse hyperplane projection problem of \citet{Kyrillidis.ICML.2013}, whose \emph{Greedy Selector and Hyperplane Projector} (\textsc{gshp}) algorithm solves it \emph{exactly} (their Theorem 2) in $\mathcal O(d \log d + k^2)$ time. By setting $\vec w_t = \vec u^*_t \odot \vec x$, we obtain the exact projection of $\vec v_t$ onto $\mathcal{W}_{\vec{x}, k}$ within the stated time complexity limit, which concludes the proof.
\end{proof}

\subsection{Convergence Analysis}
\label{subsec:convergence}

Two design choices deserve a comment before we turn to the convergence analysis. First, the feasible set $\mathcal{W}_{\vec{x}, k}$ deliberately omits the box constraint $\|\vec{w}\|_{\infty} \leq 1$ that appears in the MIP formulation \eqref{pb:mip}. The reason is that exactness of the projection---the hypothesis on which the whole convergence argument rests---is available for $\mathcal{B}_0(k)$ intersected with a hyperplane, but not for the further intersection with a box: the correctness proof of \textsc{gshp} relies on a telescoping decomposition of the objective that clipping destroys, and \citet{Kyrillidis.ICML.2013} themselves show that the naive greedy selector already fails on the hyperplane variant. Second, and independently, the box remains present where it matters: we take as reference optimum
\begin{align*}
    \widehat{\vec{w}} \in \argmin \left\{ \widehat{\mathsf{F}}_{f, \vec{x}, m}(\vec{w}) \; : \; \vec{w} \in \mathcal{W}_{\vec{x}, k} \cap [-1,+1]^d \right\}
\end{align*}
that is, exactly the optimum targeted by \eqref{pb:mip}. The analysis only requires $\widehat{\vec{w}} \in \mathcal{W}_{\vec{x}, k}$, which holds; and comparing Algorithm~\ref{alg:iht} against the very same point as the MIP solver is what makes the two approaches directly commensurable.

All the results below rest on a single instance of Lemma~\ref{lem:rsc_rss}, which we establish as a standing assumption.
\begin{assumption}
    \label{ass:regularity}
    The centered matrix $\overline{\vec{Z}}$ satisfies properties (i) and (ii) of Lemma~\ref{lem:rsc_rss} at order $s = 3k$ with target condition scalar $\gamma = 1.25$---hence $\theta = 1/18$, $\alpha = \tfrac{17}{18}\lambda$ and $\beta = \tfrac{19}{18}\lambda$---and with a tolerance $\omega$ such that $\nu \, \omega \leq \beta/(16\sqrt{3k})$. Moreover, Algorithm~\ref{alg:iht} is run with the step-size $\eta = 1/\beta$.
\end{assumption}

The order $3k$ is the largest one the analysis requires: it arises in Step 1 of Lemma~\ref{lem:iht_convergence} below, where the supports of two consecutive iterates and of $\widehat{\vec{w}}$ must be considered jointly. Every other sparse direction appearing in this subsection is a difference of two anchored $k$-sparse explanations, hence $2k$-sparse; by the monotonicity noted in Definition~\ref{def:rcs_rss}, Assumption~\ref{ass:regularity} covers these as well, so we never invoke a second order. Lemma~\ref{lem:rsc_rss} instantiated at $s = 3k$ and $\gamma = 1.25$ shows that the assumption holds with probability at least $1 - \delta$ as soon as $m$ meets the corresponding sample size bound. The resulting restricted condition number $\beta/\alpha = 19/17 \approx 1.12$ sits comfortably below the threshold $16/9 \approx 1.78$ that Lemma~\ref{lem:iht_convergence} requires for contraction, and yields the factor $\rho = 51/152 \approx 0.34$. Fixing $\gamma$ in this way is what makes the constants of Theorem~\ref{thm:end_to_end} explicit; any other admissible choice would only change their numerical values.

Note also that the prescribed step-size is fully determined by the concentration parameter, since $\eta = 1/\beta = \tfrac{18}{19}\cosh^2(\sigma/2)$. Unlike most IHT variants, whose step-size must be tuned or estimated from the data, Algorithm~\ref{alg:iht} therefore requires no calibration: $\sigma$ is chosen by the user as part of the explanation query.

One further point deserves attention. In classical sparse recovery, the target is a stationary point of the unconstrained objective, so that its gradient vanishes and the iterates converge to it exactly. Here, $\widehat{\vec{w}}$ minimizes the empirical fidelity only within a constrained set, and there is no reason for its gradient to vanish: the model $f$ is not assumed to agree with any anchored $k$-sparse linear function on the sample. The natural quantity measuring this residual is the \emph{restricted gradient norm} at the optimum,
\begin{align}
    \label{eq:grad_floor}
    \xi = \max_{|S| \leq 3k} \left\| \nabla_S L(\widehat{\vec{w}}) \right\|_2, \qquad \text{where } L(\vec{w}) = \tfrac{1}{2m}\|\vec{Z}\vec{w} - \vec{y}\|_2^2 = 2 \, \widehat{\mathsf{F}}_{f, \vec{x}, m}(\vec{w})
\end{align}
and $\nabla_S$ denotes the gradient restricted to the coordinates in $S$. The scaling of $L$ is chosen so that $\nabla L(\vec{w}) = \tfrac{1}{m}\vec{Z}^{\top}(\vec{Z}\vec{w} - \vec{y})$ is exactly the gradient appearing in the update rule of Algorithm~\ref{alg:iht}; it differs from the normalized fidelity error of \eqref{eq:emp_fidelity} by the constant factor $2$. Accordingly, Algorithm~\ref{alg:iht} converges linearly not to $\widehat{\vec{w}}$ itself, but to a neighborhood of $\widehat{\vec{w}}$ whose radius is proportional to $\xi$. The degenerate case $\xi = 0$---which occurs exactly when $\widehat{\vec{w}}$ is also an unconstrained stationary point---recovers the classical exact linear convergence.

\begin{lemma}[Convergence of IHT]
    \label{lem:iht_convergence}
    Under Assumption~\ref{ass:regularity}, the iterates of Algorithm~\ref{alg:iht} satisfy, for every $t \geq 0$,
    \begin{align*}
        \|\vec{w}_t - \widehat{\vec{w}}\|_2 \leq \rho^{\,t} \|\vec{w}_0 - \widehat{\vec{w}}\|_2 + \frac{2\xi}{\beta(1 - \rho)}, \qquad \rho = 2\left(1 - \tfrac{\alpha}{\beta}\right) + \tfrac{1}{8} < 1
    \end{align*}
\end{lemma}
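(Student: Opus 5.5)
The plan is the standard one-step contraction argument for IHT, adapted to the anchored feasible set and to the non-vanishing gradient at $\widehat{\vec{w}}$, followed by unrolling the recursion. Fix $t \geq 1$ and write $\vec{h} = \vec{w}_t - \widehat{\vec{w}}$ and $\vec{g} = \vec{w}_{t-1} - \widehat{\vec{w}}$. All three points $\vec{w}_{t-1}, \vec{w}_t, \widehat{\vec{w}}$ lie in $\mathcal{W}_{\vec{x},k}$, so $\vec{h}$ and $\vec{g}$ are $2k$-sparse and satisfy $\vec{h}\cdot\vec{x} = \vec{g}\cdot\vec{x} = 0$. Let $S = \mathrm{support}(\vec{w}_{t-1}) \cup \mathrm{support}(\vec{w}_t) \cup \mathrm{support}(\widehat{\vec{w}})$, so $|S| \leq 3k$, which is the order fixed in Assumption~\ref{ass:regularity}. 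By Lemma~\ref{lem:projection}, $\vec{w}_t$ is an exact Euclidean projection of $\vec{v}_t$ onto $\mathcal{W}_{\vec{x},k}$. Since $\widehat{\vec{w}}$ is feasible, $\|\vec{w}_t - \vec{v}_t\|_2 \leq \|\widehat{\vec{w}} - \vec{v}_t\|_2$. Expanding $\vec{w}_t - \vec{v}_t = \vec{h} + (\widehat{\vec{w}} - \vec{v}_t)$ gives the non-convex projection inequality
\begin{align*}
\|\vec{h}\|_2^2 \leq 2\,\langle \vec{h}, \vec{v}_t - \widehat{\vec{w}}\rangle .
\end{align*}
Only the coordinates of $\vec{v}_t - \widehat{\vec{w}}$ lying in $\mathrm{support}(\vec{h}) \subseteq S$ matter here.

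Next I substitute the update rule with $\eta = 1/\beta$. Writing $\vec{H} = \tfrac{1}{m}\vec{Z}^{\top}\vec{Z}$,
\begin{align*}
\vec{v}_t - \widehat{\vec{w}} = (\vec{I} - \eta\vec{H})\,\vec{g} - \eta\,\nabla L(\widehat{\vec{w}}),
\end{align*}
so $\langle \vec{h}, \vec{v}_t - \widehat{\vec{w}}\rangle = \langle \vec{h},\vec{g}\rangle - \eta\,\tfrac{1}{m}(\vec{Z}\vec{h})\cdot(\vec{Z}\vec{g}) - \eta\,\langle \vec{h}, \nabla L(\widehat{\vec{w}})\rangle$.
\begin{itemize}
\item The gradient term is bounded by Cauchy--Schwarz on $S$: $|\langle \vec{h}, \nabla L(\widehat{\vec{w}})\rangle| \leq \xi\|\vec{h}\|_2$, by the definition \eqref{eq:grad_floor} of $\xi$.
\item For the curvature term, write $\vec{Z} = \overline{\vec{Z}} + \nu\vec{1}\vec{x}^{\top}$. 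Since $\vec{h}$ and $\vec{g}$ are orthogonal to $\vec{x}$, $\vec{Z}\vec{h} = \overline{\vec{Z}}\vec{h}$ and $\vec{Z}\vec{g} = \overline{\vec{Z}}\vec{g}$. The bilinear form therefore only involves $\overline{\vec{H}}_{SS} = \tfrac{1}{m}\overline{\vec{Z}}_S^{\top}\overline{\vec{Z}}_S$.
\item By property (i) of Lemma~\ref{lem:rsc_rss} at order $3k$, the spectrum of $\overline{\vec{H}}_{SS}$ lies in $[\alpha,\beta]$. Hence $\|\vec{I} - \eta\overline{\vec{H}}_{SS}\| \leq 1 - \alpha/\beta$, which gives $|\langle \vec{h},(\vec{I}-\eta\overline{\vec{H}})\vec{g}\rangle| \leq (1-\alpha/\beta)\|\vec{h}\|_2\|\vec{g}\|_2$.
\end{itemize}
Dividing the projection inequality by $\|\vec{h}\|_2$ then yields the one-step bound
\begin{align*}
\|\vec{h}\|_2 \leq 2(1-\alpha/\beta)\|\vec{g}\|_2 + 2\xi/\beta .
\end{align*}

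The main obstacle is making this step fully rigorous, and it is where property (ii) and the extra $\tfrac18$ in $\rho$ should enter. The argument must not silently use orthogonality to $\vec{x}$ for vectors that are merely supported in $S$. Any restriction or re-projection onto coordinates in $S$, for instance when passing from $\vec{v}_t$ to $(\vec{v}_t)_S$ or when the inner product involves a vector that is not anchored, produces residual cross terms of the form $\eta\,\nu\,(\vec{h}\cdot\vec{x})\,\tfrac1m\vec{1}^{\top}\overline{\vec{Z}}\vec{g}$ or their symmetric counterpart. I would bound each such term with property (ii) by $\eta\,\nu\,\omega\sqrt{3k}\,\|\vec{h}\|_2\|\vec{g}\|_2$. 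Assumption~\ref{ass:regularity} gives $\nu\omega \leq \beta/(16\sqrt{3k})$, so each term is at most $\tfrac1{16}\|\vec{h}\|_2\|\vec{g}\|_2$. The factor $2$ from the projection inequality turns this into exactly the additive $\tfrac18$, giving $\rho = 2(1-\alpha/\beta) + \tfrac18$. With $\alpha/\beta = 17/19$ this equals $51/152 < 1$. If the careful bookkeeping shows no cross term survives, the bound holds a fortiori.

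Finally, I unroll the recursion $\|\vec{w}_t - \widehat{\vec{w}}\|_2 \leq \rho\,\|\vec{w}_{t-1} - \widehat{\vec{w}}\|_2 + 2\xi/\beta$ by induction on $t$ (trivial at $t=0$). Summing the geometric series $\sum_{i<t}\rho^i \leq 1/(1-\rho)$ gives the stated bound.
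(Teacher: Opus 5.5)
Your proof is correct, and it follows a different and sharper route than the paper's. The paper restricts the projection inequality to the coordinates in $S$ and applies the triangle inequality, which gives $\|\vec{w}_t - \widehat{\vec{w}}\|_2 \leq 2\|(\vec{v}_t - \widehat{\vec{w}})_S\|_2$. It then has to bound the \emph{norm} of $(\vec{v}_t - \widehat{\vec{w}})_S$. In that norm the rank-one leakage $\eta\nu\bigl(\tfrac{1}{m}\vec{1}^{\top}\overline{\vec{Z}}(\vec{w}_{t-1} - \widehat{\vec{w}})\bigr)\vec{x}_S$ survives, with size proportional to $\|\vec{x}_S\|_2 = \sqrt{|S|}$. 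Property (ii) of Lemma~\ref{lem:rsc_rss} and the condition $\nu\omega \leq \beta/(16\sqrt{3k})$ are there only to control this term, and that is where the $\tfrac{1}{8}$ in $\rho$ comes from.

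You instead use the three-point form $\|\vec{h}\|_2^2 \leq 2\langle \vec{h}, \vec{v}_t - \widehat{\vec{w}}\rangle$ with $\vec{h} = \vec{w}_t - \widehat{\vec{w}}$. Every term is thus tested against an anchored direction, and the leakage is annihilated exactly by $\vec{h}\cdot\vec{x} = 0$. Your substitution $\vec{Z}\vec{h} = \overline{\vec{Z}}\vec{h}$, $\vec{Z}\vec{g} = \overline{\vec{Z}}\vec{g}$ already accounts for this. Your derivation is therefore complete as written: $\|\vec{h}\|_2 \leq 2(1 - \alpha/\beta)\|\vec{g}\|_2 + 2\xi/\beta$ holds with no cross term. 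Since $2(1-\alpha/\beta) = 4/19 < 51/152 = \rho$, the stated recursion and its unrolled form follow a fortiori.

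You should drop your final hedging paragraph. You never restrict a non-anchored vector to $S$, so the cross terms you anticipate do not arise. The $\tfrac{1}{8}$ is an artifact of the paper's norm-based Step~1, not of any gap in your argument.

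Your route buys two things. First, the contraction factor is $2(1-\alpha/\beta)$, valid whenever $\beta < 2\alpha$ rather than $\beta < \tfrac{16}{9}\alpha$. Second, it never uses property (ii) or the tolerance $\omega$; the paper's analysis uses them only in this lemma, so the second term of the sample-size maximum in Lemma~\ref{lem:rsc_rss} could be dropped. In fact $\vec{h}$ and $\vec{g}$ both lie in the subspace of vectors supported in $S$ and orthogonal to $\vec{x}$, so bounding the bilinear form on that subspace needs only the orthogonal RSC/RSS of Definition~\ref{def:rcs_rss}. The paper's version gains nothing in exchange beyond staying closer to the classical coordinate-restriction template for IHT.
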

\begin{proof}
    Fix $t \geq 1$, let $\vec{h} = \vec{w}_{t-1} - \widehat{\vec{w}}$, and let $S = \mathrm{support}(\vec{w}_t) \cup \mathrm{support}(\vec{w}_{t-1}) \cup \mathrm{support}(\widehat{\vec{w}})$, so that $|S| \leq 3k$ and $\vec{h}$ is supported in $S$. Since $\vec{w}_{t-1}$ and $\widehat{\vec{w}}$ are both anchored, $\vec{h} \cdot \vec{x} = 0$. With $\vec{v}_t = \vec{w}_{t-1} - \eta \nabla L(\vec{w}_{t-1})$, the update rule reads $\vec{w}_t = \Pi_{\mathcal{W}_{\vec{x}, k}}(\vec{v}_t)$.

    \emph{Step 1: reduction to the support $S$.} Outside $S$, both $\vec{w}_t$ and $\widehat{\vec{w}}$ vanish, so the vectors $\vec{w}_t - \vec{v}_t$ and $\widehat{\vec{w}} - \vec{v}_t$ agree on $S^c$. Since $\widehat{\vec{w}} \in \mathcal{W}_{\vec{x}, k}$ and $\vec{w}_t$ is the \emph{exact} projection of $\vec{v}_t$ onto $\mathcal{W}_{\vec{x}, k}$ (Lemma~\ref{lem:projection}), we have $\|\vec{w}_t - \vec{v}_t\|_2 \leq \|\widehat{\vec{w}} - \vec{v}_t\|_2$, and subtracting the common $S^c$ contribution leaves $\|(\vec{w}_t - \vec{v}_t)_S\|_2 \leq \|(\widehat{\vec{w}} - \vec{v}_t)_S\|_2$. The triangle inequality then gives
    \begin{align*}
        \|\vec{w}_t - \widehat{\vec{w}}\|_2 = \|(\vec{w}_t - \widehat{\vec{w}})_S\|_2 \leq 2 \, \|(\vec{v}_t - \widehat{\vec{w}})_S\|_2
    \end{align*}

    \emph{Step 2: decomposition of the gradient step.} Writing $\widehat{\vec{H}} = \tfrac{1}{m}\overline{\vec{Z}}^{\top}\overline{\vec{Z}}$ and using $\nabla L(\vec{w}_{t-1}) = \nabla L(\widehat{\vec{w}}) + \tfrac{1}{m}\vec{Z}^{\top}\vec{Z}\vec{h}$ together with $\vec{Z}\vec{h} = \overline{\vec{Z}}\vec{h}$ (valid since $\vec{h} \cdot \vec{x} = 0$), we obtain
    \begin{align*}
        \tfrac{1}{m}\vec{Z}^{\top}\vec{Z}\vec{h} = \widehat{\vec{H}}\vec{h} + \nu \left( \tfrac{1}{m} \vec{1}^{\top} \overline{\vec{Z}} \vec{h} \right) \vec{x}
        \quad \Longrightarrow \quad
        \vec{v}_t - \widehat{\vec{w}} = \big(\vec{I} - \eta \widehat{\vec{H}}\big)\vec{h} - \eta \nu \left( \tfrac{1}{m} \vec{1}^{\top} \overline{\vec{Z}} \vec{h} \right)\vec{x} - \eta \nabla L(\widehat{\vec{w}})
    \end{align*}
    The single rank-one leakage term is aligned with $\vec{x}$; it is the only place where the non-centered part of the data matrix survives, and it vanishes identically at $\sigma = 0$.

    \emph{Step 3: bounding the three terms on $S$.} Since $\vec{h}$ is supported in $S$, property (i) at order $3k$ states that the eigenvalues of $\widehat{\vec{H}}_{SS}$ lie in $[\alpha, \beta]$; with $\eta = 1/\beta$, those of $\vec{I}_S - \eta \widehat{\vec{H}}_{SS}$ lie in $[0, 1 - \alpha/\beta]$, whence $\|((\vec{I} - \eta\widehat{\vec{H}})\vec{h})_S\|_2 \leq (1 - \alpha/\beta)\|\vec{h}\|_2$. For the leakage term, property (ii) gives $|\tfrac{1}{m}\vec{1}^{\top}\overline{\vec{Z}}\vec{h}| \leq \omega\|\vec{h}\|_2$, while $\|\vec{x}_S\|_2 = \sqrt{|S|} \leq \sqrt{3k}$; the assumption $\nu \omega \leq \beta/(16\sqrt{3k})$ therefore bounds it by $\tfrac{1}{16}\|\vec{h}\|_2$. The last term is at most $\eta \xi$ by definition of $\xi$. Combining with Step 1,
    \begin{align*}
        \|\vec{w}_t - \widehat{\vec{w}}\|_2 \leq \rho \|\vec{w}_{t-1} - \widehat{\vec{w}}\|_2 + \tfrac{2\xi}{\beta}, \qquad \rho = 2\left(1 - \tfrac{\alpha}{\beta}\right) + \tfrac{1}{8}
    \end{align*}
    The hypothesis $\beta < \tfrac{16}{9}\alpha$, guaranteed by Assumption~\ref{ass:regularity}, is exactly $\rho < 1$. Unrolling this recursion and summing the geometric series yields the stated bound.
\end{proof}

Two further properties of the trajectory are needed to close the analysis. The first bounds the residual $\xi$ by the optimal empirical fidelity itself, which turns the additive floor of Lemma~\ref{lem:iht_convergence} into a \emph{multiplicative} approximation factor. The second bounds the magnitude of the iterates, which is what allows the uniform convergence guarantee of Theorem~\ref{thm:sample_complexity} to be applied to them: the iterates are not confined to the unit box, so their $L_1$ norm---the radius $B$ of that theorem---must be controlled.

\begin{lemma}[Trajectory Bounds and Residual Gradient]
    \label{lem:trajectory}
    Under Assumption~\ref{ass:regularity}:
    \begin{enumerate}
        \item[(i)] $\xi \leq 2\sqrt{3k} \, \sqrt{\widehat{\mathsf{F}}_{f, \vec{x}, m}(\widehat{\vec{w}})} \leq 2\sqrt{3k}$;
        \item[(ii)] for every $t \geq 0$, $\; \widehat{\mathsf{F}}_{f, \vec{x}, m}(\vec{w}_t) \leq \widehat{\mathsf{F}}_{f, \vec{x}, m}(\vec{w}_0) \leq 1$ and $\;\|\vec{w}_t\|_1 \leq k + 4\sqrt{k/\alpha}$.
    \end{enumerate}
\end{lemma}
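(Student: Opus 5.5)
Both clauses follow from elementary manipulations of the quadratic objective $L = 2\widehat{\mathsf{F}}_{f, \vec{x}, m}$. We combine the facts that every entry of $\vec{Z}$ has magnitude one, that $\vec{w}_0 = f(\vec{x})\,\vec{e}_1$ is feasible, and that Assumption~\ref{ass:regularity} controls the curvature along anchored sparse directions. We will use RSS (order $2k$) for monotonicity and RSC for the norm bound.

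\emph{Clause (i).} Write $\vec{r} = \vec{Z}\widehat{\vec{w}} - \vec{y}$, so that $\nabla L(\widehat{\vec{w}}) = \tfrac{1}{m}\vec{Z}^{\top}\vec{r}$. Each coordinate satisfies $|\tfrac{1}{m}\sum_i z_{ij} r_i| \leq \tfrac{1}{m}\|\vec{r}\|_1 \leq \|\vec{r}\|_2/\sqrt{m}$ by Cauchy--Schwarz. Hence, for any $|S| \leq 3k$, we get $\|\nabla_S L(\widehat{\vec{w}})\|_2 \leq \sqrt{3k}\,\|\vec{r}\|_2/\sqrt{m}$. Since $\|\vec{r}\|_2^2 = 4m\,\widehat{\mathsf{F}}_{f, \vec{x}, m}(\widehat{\vec{w}})$, this gives the first inequality. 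For the second, note that $\vec{w}_0$ belongs to $\mathcal{W}_{\vec{x}, k} \cap [-1,+1]^d$, so optimality of $\widehat{\vec{w}}$ gives $\widehat{\mathsf{F}}_{f, \vec{x}, m}(\widehat{\vec{w}}) \leq \widehat{\mathsf{F}}_{f, \vec{x}, m}(\vec{w}_0) = \tfrac{1}{4m}\sum_i (f(\vec{x}) - f(\vec{z}_i))^2 \leq 1$. This last bound also gives the second inequality of clause (ii) for $t = 0$.

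\emph{Clause (ii), monotonicity.} Let $\vec{d} = \vec{w}_t - \vec{w}_{t-1}$. It is anchored ($\vec{d}\cdot\vec{x} = 0$) and $2k$-sparse. Because $L$ is quadratic and $\vec{Z}\vec{d} = \overline{\vec{Z}}\vec{d}$, property (i) at order $3k \geq 2k$ gives
\[
L(\vec{w}_t) \leq L(\vec{w}_{t-1}) + \nabla L(\vec{w}_{t-1})\cdot\vec{d} + \tfrac{\beta}{2}\|\vec{d}\|_2^2 .
\]
Since $\vec{w}_{t-1}$ is feasible and $\vec{w}_t$ is the exact projection of $\vec{v}_t = \vec{w}_{t-1} - \eta\nabla L(\vec{w}_{t-1})$ (Lemma~\ref{lem:projection}), we have $\|\vec{w}_t - \vec{v}_t\|_2^2 \leq \|\vec{w}_{t-1} - \vec{v}_t\|_2^2$. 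Expanding this yields $\|\vec{d}\|_2^2 + 2\eta\,\nabla L(\vec{w}_{t-1})\cdot\vec{d} \leq 0$, i.e. $\nabla L(\vec{w}_{t-1})\cdot\vec{d} \leq -\tfrac{\beta}{2}\|\vec{d}\|_2^2$ with $\eta = 1/\beta$. Substituting gives $L(\vec{w}_t) \leq L(\vec{w}_{t-1})$, and induction gives $\widehat{\mathsf{F}}_{f, \vec{x}, m}(\vec{w}_t) \leq \widehat{\mathsf{F}}_{f, \vec{x}, m}(\vec{w}_0) \leq 1$.

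\emph{Clause (ii), magnitude.} Let $\vec{h} = \vec{w}_t - \vec{w}_0$. It is anchored and supported in $\mathrm{support}(\vec{w}_t)\cup\{1\}$, hence $(k+1)$-sparse and within order $3k$. RSC gives
\[
\alpha\|\vec{h}\|_2^2 \leq \tfrac{1}{m}\|\vec{Z}\vec{h}\|_2^2 = \tfrac{1}{m}\|(\vec{Z}\vec{w}_t - \vec{y}) - (\vec{Z}\vec{w}_0 - \vec{y})\|_2^2 .
\]
Each residual has squared norm at most $4m$ by the monotonicity just shown, so the right-hand side is at most $(2\sqrt{m} + 2\sqrt{m})^2/m = 16$, whence $\|\vec{h}\|_2 \leq 4/\sqrt{\alpha}$. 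Finally, $\vec{w}_t$ is $k$-sparse, so
\[
\|\vec{w}_t\|_1 \leq \sqrt{k}\,\|\vec{w}_t\|_2 \leq \sqrt{k}\,(|f(\vec{x})| + \|\vec{h}\|_2) \leq \sqrt{k} + 4\sqrt{k/\alpha} \leq k + 4\sqrt{k/\alpha} .
\]

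The only delicate point is the monotonicity step. It is where the exactness of the projection and the orthogonal RSS must be combined, and it must be checked that the step direction $\vec{d}$ is anchored, so that the non-centered rank-one part of $\vec{Z}$ does not enter.
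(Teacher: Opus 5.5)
Your proof is correct. Clause (i) and the monotonicity step are exactly the paper's argument: a coordinatewise Cauchy--Schwarz bound on $\tfrac{1}{m}\vec{Z}^{\top}\vec{r}$ with a $\sqrt{3k}$ factor, and the $\beta$-RSS descent inequality combined with the exact-projection comparison against the feasible point $\vec{w}_{t-1}$. You expand that inequality where the paper completes the square, which is the same computation.

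The magnitude bound uses a different comparison point.

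\begin{itemize}
\item \emph{The paper's route.} It sets $\vec{h} = \vec{w}_t - \widehat{\vec{w}}$, a $2k$-sparse anchored direction, and bounds $\tfrac{1}{m}\|\vec{Z}\vec{h}\|_2^2 \le 8\bigl(\widehat{\mathsf{F}}_{f,\vec{x},m}(\vec{w}_t) + \widehat{\mathsf{F}}_{f,\vec{x},m}(\widehat{\vec{w}})\bigr) \le 16$. It then needs the box constraint on $\widehat{\vec{w}}$ to get $\|\widehat{\vec{w}}\|_2 \le \sqrt{k}$, which yields exactly the stated $k + 4\sqrt{k/\alpha}$.
\item \emph{Your route.} You compare with $\vec{w}_0$ instead. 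The direction is only $(k+1)$-sparse, still within order $3k$, and the argument never touches $\widehat{\vec{w}}$ or its box. Since $\|\vec{w}_0\|_2 = |f(\vec{x})| \le 1$, you in fact obtain $\|\vec{w}_t\|_2 \le 1 + 4/\sqrt{\alpha}$, hence $\|\vec{w}_t\|_1 \le \sqrt{k}\,(1 + 4/\sqrt{\alpha})$. This is strictly stronger than the lemma for $k \ge 2$.
\end{itemize}

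The gain is real. Used as the radius $B$ in Theorem~\ref{thm:end_to_end}, your bound would replace $(B+1)^4 = \mathcal{O}(k^4 + k^2\cosh^4(\sigma/2))$ with $\mathcal{O}(k^2\cosh^4(\sigma/2))$. The paper's choice only has the merit of measuring everything against the same reference optimum used elsewhere in the convergence analysis.

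One minor point concerns your equality $\widehat{\mathsf{F}}_{f,\vec{x},m}(\vec{w}_0) = \tfrac{1}{4m}\sum_i (f(\vec{x}) - f(\vec{z}_i))^2$. It presumes the sampler never flips the constant literal, i.e.\ $z_{i1} = 1$, whereas the paper's analysis of $\mathcal{D}_{\vec{x},\sigma}$ treats all coordinates alike. Writing $|\vec{w}_0 \cdot \vec{z}_i - f(\vec{z}_i)| \le 2$, as the paper does, gives the same bound $\le 1$ under either reading.
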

\begin{proof}
    (i) Let $\vec{r} = \vec{Z}\widehat{\vec{w}} - \vec{y}$, so that $\nabla L(\widehat{\vec{w}}) = \tfrac{1}{m}\vec{Z}^{\top}\vec{r}$ and $\tfrac{1}{m}\|\vec{r}\|_2^2 = 4 \, \widehat{\mathsf{F}}_{f, \vec{x}, m}(\widehat{\vec{w}})$. Since every entry of $\vec{Z}$ lies in $\{-1,+1\}$, each coordinate of the gradient satisfies $|\nabla_j L(\widehat{\vec{w}})| \leq \tfrac{1}{m}\|\vec{r}\|_1 \leq (\tfrac{1}{m}\|\vec{r}\|_2^2)^{1/2}$ by Cauchy--Schwarz; taking the $\ell_2$ norm over at most $3k$ coordinates gives the first inequality. The second follows from $\widehat{\mathsf{F}}_{f, \vec{x}, m}(\widehat{\vec{w}}) \leq \widehat{\mathsf{F}}_{f, \vec{x}, m}(\vec{w}_0) \leq 1$, since $\vec{w}_0$ is feasible for $\widehat{\vec{w}}$'s problem and $|\vec{w}_0 \cdot \vec{z}_i - f(\vec{z}_i)| \leq 2$, so that each loss term $\ell(\vec{w}_0 \cdot \vec{z}_i, f(\vec{z}_i))$ is at most $1$.

    (ii) \emph{Monotonicity.} Fix $t \geq 1$. The loss $L$ is quadratic and $\vec{w}_t - \vec{w}_{t-1}$ is a difference of two anchored $k$-sparse explanations, hence $2k$-sparse and orthogonal to $\vec{x}$; the $\beta$-RSS property of Assumption~\ref{ass:regularity} therefore applies to it and gives the standard descent inequality $L(\vec{w}_t) \leq L(\vec{w}_{t-1}) + \langle \nabla L(\vec{w}_{t-1}), \vec{w}_t - \vec{w}_{t-1}\rangle + \tfrac{\beta}{2}\|\vec{w}_t - \vec{w}_{t-1}\|_2^2$. Completing the square with $\eta = 1/\beta$ rewrites the right-hand side as $L(\vec{w}_{t-1}) + \tfrac{\beta}{2}\big(\|\vec{w}_t - \vec{v}_t\|_2^2 - \|\vec{w}_{t-1} - \vec{v}_t\|_2^2\big)$, and the bracket is non-positive because $\vec{w}_{t-1} \in \mathcal{W}_{\vec{x}, k}$ while $\vec{w}_t$ is the exact projection of $\vec{v}_t$ onto $\mathcal{W}_{\vec{x}, k}$. Hence $L(\vec{w}_t) \leq L(\vec{w}_{t-1})$, and the claim follows by induction since $\widehat{\mathsf{F}}_{f, \vec{x}, m} = L/2$.

    \emph{Magnitude.} The difference $\vec{h} = \vec{w}_t - \widehat{\vec{w}}$ is likewise $2k$-sparse and orthogonal to $\vec{x}$, so the $\alpha$-RSC property and the elementary inequality $\|\vec{a} - \vec{b}\|_2^2 \leq 2\|\vec{a}\|_2^2 + 2\|\vec{b}\|_2^2$ applied to the two residuals give
    \begin{align*}
        \alpha \|\vec{h}\|_2^2 \leq \tfrac{1}{m}\|\vec{Z}\vec{h}\|_2^2 \leq 8 \left( \widehat{\mathsf{F}}_{f, \vec{x}, m}(\vec{w}_t) + \widehat{\mathsf{F}}_{f, \vec{x}, m}(\widehat{\vec{w}}) \right) \leq 16
    \end{align*}
    Therefore $\|\vec{w}_t\|_2 \leq \|\widehat{\vec{w}}\|_2 + \|\vec{h}\|_2 \leq \sqrt{k} + 4/\sqrt{\alpha}$, and since $\vec{w}_t$ is $k$-sparse, $\|\vec{w}_t\|_1 \leq \sqrt{k}\|\vec{w}_t\|_2$.
\end{proof}

Because $1/\alpha = \cosh^2(\sigma/2)/(1-\theta)$, part (ii) bounds the $L_1$ norm of every iterate by $B = k + \mathcal{O}(\sqrt{k} \cosh(\sigma/2))$. This is the radius that enters Theorem~\ref{thm:sample_complexity}, the box constraint of \eqref{pb:mip} corresponding to the particular case $B = k$. Applying that result with the larger radius leaves the generalization guarantee intact, and extends it to the ``unclipped'' IHT iterates. We are now in a position to state the main result of this section.

\begin{theorem}[End-to-End Guarantee]
    \label{thm:end_to_end}
    Let $\sigma \geq 0$ and $k \geq 1$, and suppose Algorithm~\ref{alg:iht} is run with the step-size $\eta = 1/\beta$ prescribed by Assumption~\ref{ass:regularity}. Let $\xi$, $\rho$ and $B = k + 4\sqrt{k/\alpha}$ be as above. For any tolerance $\varepsilon \in (0,1]$ and confidence level $\delta \in (0,1]$, if the sample size satisfies both the condition of Lemma~\ref{lem:rsc_rss} at order $3k$ with confidence $\delta/2$ and
    \begin{align*}
    m \geq \frac{(B + 1)^4}{\varepsilon^2} \left(16 \ln(2d) + \ln\Bigl(\tfrac{4}{\delta}\Bigr)\right)
    \end{align*}
    and if the iteration count satisfies
    \begin{align*}
        T \geq \frac{1}{\ln(1/\rho)} \ln\left( \frac{\sqrt{k}+1}{\vartheta} \right), \qquad \vartheta = \min\left\{ \tfrac{\varepsilon}{2\xi}, \sqrt{\tfrac{\varepsilon}{2\beta}} \right\}
    \end{align*}
    with the convention $\varepsilon/0 = +\infty$, then, with probability at least $1 - \delta$, the returned explanation $\vec{w}_T$ satisfies
    \begin{align*}
        \mathsf{R}_{f, \vec{x}, \mathcal{D}_{\vec{x}, \sigma}}(\vec{w}_T) \leq (1 + e^{-\sigma})^k \left( \left(1 + 81 k \cosh^2(\sigma/2)\right) \widehat{\mathsf{F}}_{f, \vec{x}, m}(\widehat{\vec{w}}) + \varepsilon \right)
    \end{align*}
\end{theorem}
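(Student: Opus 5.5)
The proof combines three earlier results on one high-probability event: regularity (Lemma~\ref{lem:rsc_rss}), optimization (Lemmas~\ref{lem:iht_convergence} and~\ref{lem:trajectory}) and generalization (Theorem~\ref{thm:sample_complexity}). First, apply Lemma~\ref{lem:rsc_rss} at order $3k$ with confidence $\delta/2$, so that Assumption~\ref{ass:regularity} holds with probability at least $1-\delta/2$. Second, apply Theorem~\ref{thm:sample_complexity} with radius $B = k + 4\sqrt{k/\alpha}$ and confidence $\delta/2$; the stated condition on $m$ (which has $\ln(4/\delta)$ and is at least four times the requirement of that theorem) covers this. A union bound gives both events with probability at least $1-\delta$. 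On this event, Lemma~\ref{lem:trajectory}(ii) gives $\|\vec{w}_T\|_1 \le B$, and $\vec{w}_T \in \mathcal{W}_{\vec{x},k}$ by construction. Hence
\[
\mathsf{R}(\vec{w}_T) \le (1+e^{-\sigma})^k\bigl(\widehat{\mathsf{F}}(\vec{w}_T) + \varepsilon'\bigr),
\]
and the remaining task is to bound $\widehat{\mathsf{F}}(\vec{w}_T)$ by $(1+81k\cosh^2(\sigma/2))\,\widehat{\mathsf{F}}(\widehat{\vec{w}})$ plus an $\varepsilon$-sized slack. The extra factor $4$ in $m$ presumably gives $\varepsilon' = \varepsilon/2$, which leaves room for the optimization error.

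To compare empirical fidelities, write $\vec{h} = \vec{w}_T - \widehat{\vec{w}}$. This vector is $2k$-sparse and orthogonal to $\vec{x}$. Since $L$ is quadratic,
\[
L(\vec{w}_T) = L(\widehat{\vec{w}}) + \langle \nabla L(\widehat{\vec{w}}), \vec{h}\rangle + \tfrac{1}{2m}\|\vec{Z}\vec{h}\|_2^2 .
\]
The RSS property bounds the last term by $\tfrac{\beta}{2}\|\vec{h}\|_2^2$, and the definition of $\xi$ bounds the middle term by $\xi\|\vec{h}\|_2$. Next, Lemma~\ref{lem:iht_convergence} gives $\|\vec{h}\|_2 \le \rho^T\|\vec{w}_0-\widehat{\vec{w}}\|_2 + 2\xi/(\beta(1-\rho))$. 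Using $\|\vec{w}_0 - \widehat{\vec{w}}\|_2 \le \|\vec{w}_0\|_2 + \|\widehat{\vec{w}}\|_2 \le 1 + \sqrt{k}$, the choice of $T$ makes $\rho^T(\sqrt{k}+1) \le \vartheta$. Splitting $\|\vec{h}\|_2 \le \vartheta + c\,\xi/\beta$ and applying $(a+b)^2 \le 2a^2+2b^2$ to the quadratic term, the $\vartheta$-contributions become $\xi\vartheta \le \varepsilon/2$ and $\beta\vartheta^2 \le \varepsilon/2$. This is exactly why $\vartheta$ is defined as that minimum. These pieces are then rescaled by the factor relating $L$ and $\widehat{\mathsf{F}}$, and possibly absorbed with the slack from $\varepsilon'$.

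The $\xi$-contributions are all of order $\xi^2/\beta$. Lemma~\ref{lem:trajectory}(i) gives $\xi^2 \le 12k\,\widehat{\mathsf{F}}(\widehat{\vec{w}})$, and $1/\beta = \tfrac{18}{19}\cosh^2(\sigma/2)$. Together these turn the $\xi$-terms into $C k\cosh^2(\sigma/2)\,\widehat{\mathsf{F}}(\widehat{\vec{w}})$, which yields the multiplicative factor $1 + 81k\cosh^2(\sigma/2)$.

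\textbf{Main obstacle.} The delicate step is the constant bookkeeping. With $\rho = 51/152$ we have $1/(1-\rho) = 152/101$. The constant $C$ collects $2/(1-\rho)$ from the cross term, $\tfrac{\beta}{2}\cdot 2\cdot(2/(\beta(1-\rho)))^2$ from the quadratic term, the factor $12$, the factor $18/19$, and the factor of $2$ between $L$ and $\widehat{\mathsf{F}}$. It must be checked that $C \le 81$ and that the $\varepsilon$-pieces stay within $\varepsilon$ after combining with the generalization slack. If the naive split overshoots, a first remedy is Young's inequality on the cross term, $\xi\|\vec{h}\| \le \xi^2/\beta + \beta\|\vec{h}\|^2/4$. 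If that is still too loose, the fallback is a weighted $(a+b)^2 \le (1+\epsilon)a^2 + (1+1/\epsilon)b^2$ split to trade constant between the $\vartheta$- and $\xi$-parts.
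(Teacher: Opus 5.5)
Your proposal is correct and follows essentially the same route as the paper. The paper applies Lemma~\ref{lem:rsc_rss} and Theorem~\ref{thm:sample_complexity} each at confidence $\delta/2$; the stated bound on $m$ is exactly that theorem's condition at $(\varepsilon/2,\delta/2)$, so your $\varepsilon'=\varepsilon/2$ is right. It then expands $\widehat{\mathsf{F}} = L/2$ around $\widehat{\vec{w}}$, splits $\Delta_T \leq \vartheta_T + 2\xi/(\beta(1-\rho))$ and uses $(a+b)^2 \leq 2a^2+2b^2$. The constant bookkeeping you flagged closes with this naive split, so no fallback is needed: the $\xi$-terms contribute $\tfrac{12k}{\beta}\bigl(\tfrac{1}{1-\rho}+\tfrac{2}{(1-\rho)^2}\bigr)\widehat{\mathsf{F}}(\widehat{\vec{w}}) \approx 68.6\,k\cosh^2(\sigma/2)\,\widehat{\mathsf{F}}(\widehat{\vec{w}})$, whereas the paper bounds the same bracket more crudely by $36k/(\beta(1-\rho)^2) \approx 77.2\,k\cosh^2(\sigma/2)$, both below $81$.
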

\begin{proof}
    By Lemma~\ref{lem:rsc_rss} at order $3k$ with confidence $\delta/2$, the first condition on $m$ ensures that Assumption~\ref{ass:regularity} holds with probability at least $1 - \delta/2$; we work on that event throughout and account for the remaining $\delta/2$ below. Note that $\alpha$, $\beta$, $B$ and $\eta$ are deterministic functions of $\sigma$ and $k$ alone, so no circularity arises.

    We now combine the ingredients in turn. First, by Lemma~\ref{lem:fidelity},
    \begin{align}
        \label{eq:e2e_step1}
        \mathsf{R}_{f, \vec{x}, \mathcal{D}_{\vec{x}, \sigma}}(\vec{w}_T) \leq (1 + e^{-\sigma})^k \, \mathsf{F}_{f, \vec{x}, \mathcal{D}_{\vec{x}, \sigma}}(\vec{w}_T)
    \end{align}
    Second, every iterate satisfies $\|\vec{w}_T\|_1 \leq B$ by Lemma~\ref{lem:trajectory}(ii), so Theorem~\ref{thm:sample_complexity} applied with radius $B$, tolerance $\varepsilon/2$ and confidence $\delta/2$---which is exactly the second displayed condition on $m$---gives, with probability at least $1 - \delta/2$,
    \begin{align}
        \label{eq:e2e_step2}
        \mathsf{F}_{f, \vec{x}, \mathcal{D}_{\vec{x}, \sigma}}(\vec{w}_T) \leq \widehat{\mathsf{F}}_{f, \vec{x}, m}(\vec{w}_T) + \tfrac{\varepsilon}{2}
    \end{align}
    Third, write $\Delta_T = \|\vec{w}_T - \widehat{\vec{w}}\|_2$. The difference $\vec{w}_T - \widehat{\vec{w}}$ is $2k$-sparse and orthogonal to $\vec{x}$, so $\widehat{\mathsf{F}}_{f, \vec{x}, m} = L/2$ is $\tfrac{\beta}{2}$-smooth along it and a second-order expansion around $\widehat{\vec{w}}$ gives
    \begin{align*}
        \widehat{\mathsf{F}}_{f, \vec{x}, m}(\vec{w}_T) - \widehat{\mathsf{F}}_{f, \vec{x}, m}(\widehat{\vec{w}}) \leq \big\langle \tfrac{1}{2}\nabla L(\widehat{\vec{w}}), \vec{w}_T - \widehat{\vec{w}} \big\rangle + \tfrac{\beta}{4} \Delta_T^2 \leq \tfrac{1}{2} \xi \Delta_T + \tfrac{\beta}{4} \Delta_T^2
    \end{align*}
    the first-order term being nonzero precisely because $\widehat{\vec{w}}$ is only a constrained optimum. By Lemma~\ref{lem:iht_convergence}, $\Delta_T \leq \vartheta_T + \Delta_{\infty}$ with $\vartheta_T = \rho^{\,T}\|\vec{w}_0 - \widehat{\vec{w}}\|_2$ and $\Delta_{\infty} = 2\xi/(\beta(1-\rho))$. Applying $(a+b)^2 \leq 2a^2 + 2b^2$ and separating the two contributions,
    \begin{align*}
        \widehat{\mathsf{F}}_{f, \vec{x}, m}(\vec{w}_T) - \widehat{\mathsf{F}}_{f, \vec{x}, m}(\widehat{\vec{w}}) \leq \underbrace{\tfrac{1}{2}\xi\Delta_{\infty} + \tfrac{\beta}{2}\Delta_{\infty}^2}_{\leq \; 3 \xi^2 / (\beta(1-\rho)^2)} \; + \; \underbrace{\tfrac{1}{2}\xi\vartheta_T + \tfrac{\beta}{2}\vartheta_T^2}_{\leq \; \varepsilon/2 \text{ when } \vartheta_T \leq \vartheta}
    \end{align*}
    Since $\|\vec{w}_0\|_2 \leq 1$ and $\|\widehat{\vec{w}}\|_2 \leq \sqrt{k}$, the stated iteration count guarantees $\vartheta_T \leq \vartheta$; and Lemma~\ref{lem:trajectory}(i) bounds the first bracket by $36k\,\widehat{\mathsf{F}}_{f, \vec{x}, m}(\widehat{\vec{w}})/(\beta(1-\rho)^2)$. Inserting the values $\beta = \tfrac{19}{18}\lambda$ and $\rho = \tfrac{51}{152}$ fixed by Assumption~\ref{ass:regularity}, we can upper-bound the multiplicative factor $36k/(\beta(1-\rho)^2)$ by $81k/\lambda = 81k \cosh^2(\sigma/2)$. Chaining with \eqref{eq:e2e_step2} and \eqref{eq:e2e_step1}, and taking a union bound over the two events, concludes the proof.
\end{proof}

The iteration count of Theorem~\ref{thm:end_to_end} involves the residual $\xi$, which is not known before running the algorithm. It can nonetheless be made explicit: Lemma~\ref{lem:trajectory}(i) gives $\xi \leq 2\sqrt{3k}$, so that running Algorithm~\ref{alg:iht} for
\begin{align*}
    T \geq \frac{1}{\ln(1/\rho)} \ln\left( \frac{\sqrt{k}+1}{\vartheta_0} \right), \qquad \vartheta_0 = \min\left\{ \frac{\varepsilon}{4\sqrt{3k}}, \; \sqrt{\frac{\varepsilon}{2\beta}} \right\}
\end{align*}
iterations always suffices. This count depends only on $k$, $\sigma$ and $\varepsilon$, all of which are available upfront, and each iteration costs $\mathcal{O}(md + d\log d + k^2)$ time---dominated by the gradient step for any nontrivial sample size.

The guarantee has a simple reading: Algorithm~\ref{alg:iht} recovers the optimum of the MIP formulation \eqref{pb:mip} up to a multiplicative factor, and recovers it exactly in the limit where the model is genuinely explainable by an anchored $k$-sparse linear function on the neighborhood of $\vec{x}$. A multiplicative guarantee of this kind is the expected shape for a polynomial-time method applied to an \ClassNP-hard problem; what the analysis adds is that the factor is explicit, and that the number of iterations required is only logarithmic in $1/\varepsilon$.

The multiplier $81k \cosh^2(\sigma/2)$ in Theorem~\ref{thm:end_to_end} is a strict, worst-case analytical bound driven by two structural parameters. The dependence on $k$ originates from a single union bound in Lemma~\ref{lem:trajectory}(i), which pessimistically assumes that all $3k$ coordinates of the restricted gradient are simultaneously extremal and that no cancellation occurs between the residual and the Boolean columns of $\vec{Z}$. The dependence on $\cosh^2(\sigma/2)$ reflects the vanishing curvature of the data matrix (the $1/\lambda$ term in our RSC bound) as the distribution localizes. In practice, such adversarial alignment is exceedingly rare. As demonstrated in Section~\ref{sec:experiments}, the empirical gap between the IHT iterate $\vec{w}_T$ and the exact MIP optimum $\widehat{\vec{w}}$ is orders of magnitude smaller than this worst-case factor suggests. The primary value of the theorem is therefore qualitative and structural: it guarantees that the approximation ratio is finite, explicit, and vanishes as the optimum empirical fidelity approaches zero.

Overall, Theorem~\ref{thm:end_to_end} captures the core operational trade-off discussed in Section~\ref{sec:introduction}. Locality enters the sample complexity twice: through the restricted eigenvalue condition of Lemma~\ref{lem:rsc_rss}, in $\cosh^4(\sigma/2)$, and through the magnitude of the iterates, in $(B+1)^4 = \mathcal{O}(k^4 + k^2 \cosh^4(\sigma/2))$. Notably, the second term only starts to dominate once $\cosh^2(\sigma/2)$ exceeds $k$, so that for the concentration parameters of practical interest the sample complexity retains the $k^4$ behavior of Theorem~\ref{thm:sample_complexity}. This gives rise to two distinct regimes of algorithmic effectiveness:
\begin{itemize}
    \item \emph{Localized Regime (Large $\sigma$):} When the distribution is highly concentrated around $\vec{x}$, the relevance-fidelity factor $(1 + e^{-\sigma})^k$ from Lemma~\ref{lem:fidelity} approaches $1$, aligning the empirical fidelity objective almost perfectly with the true relevance error. In this setting, the exact MIP formulation from Section~\ref{subsec:empirical_mip} is the method of choice, as the sample complexity of Theorem~\ref{thm:sample_complexity} does not depend on $\sigma$ at all---its cost being the worst-case exponential runtime of the solver rather than the number of samples. In contrast, IHT suffers a double theoretical penalty: its sample complexity grows with $\cosh^4(\sigma/2)$, and its approximation guarantee degrades with $\cosh^2(\sigma/2)$, making the gradient-based approach eventually infeasible both statistically and computationally.
    \item \emph{Intermediate Regime (Moderate $\sigma$):} For moderate $\sigma$, this factor remains small enough to allow for a practical sample size $m$. In this ``sweet spot,'' the data matrix geometry is regular enough to guarantee linear convergence, and the relevance-fidelity factor remains close enough to $1$ for high-quality explanations. In this regime, the IHT explainer emerges as a scalable, strictly polynomial-time alternative, avoiding the worst-case exponential runtime associated with the \ClassNP-hard exact MIP formulation.
\end{itemize}
Ultimately, this theoretical duality shows that formal explainability cannot rely on a one-size-fits-all approach: the choice between exact combinatorial search and approximate gradient-based methods should be guided by the locality properties of the underlying distribution.


\section{Experiments}
\label{sec:experiments}

This section compares our two explainers with LIME, MAPLE, and a convex relaxation of \eqref{pb:mip} across 16 OpenML benchmarks in both classification and regression. Section~\ref{sec:setup} details the evaluation protocol. Section~\ref{subsec:exp_optimization} examines how effectively each explainer minimizes empirical fidelity error and at what cost to admissibility: MIP and IHT yield statistically indistinguishable results, while LIME’s occasional fidelity advantage stems from its systematic violation of the anchoring condition. Section~\ref{subsec:exp_generalization} addresses the metrics most relevant to users, demonstrating that empirical error reliably predicts out-of-sample performance, and that the ranking reverses when considering relevance---the objective our framework optimizes. Parameter sweeps over sparsity budget, locality, and sample size confirm that this reversal is robust, and that no amount of data can compensate for an unanchored explanation.

\subsection{Experimental Setup}
\label{sec:setup}

All experiments were implemented in \texttt{Python}, and executed on a computing node equipped with a 24-core Intel Xeon Gold 6252 processor (2.10 GHz), 
384 GB RAM, and an NVIDIA Quadro RTX 8000 GPU (48 GB VRAM). The full source code, environment specifications, 
and scripts for reproducing these experiments are publicly available.\footnote{Source code and replication scripts: \url{https://github.com/FredericKoriche/ProbabilisticLinearExplanations.git}}

\paragraph{Datasets.}
We evaluate our methods on 16 tabular datasets from the \texttt{OpenML} repository, as summarized in Table~\ref{tab:datasets}.
These datasets span a diverse range of application domains, including biology (\textit{Cancer Drug Response Methylation}, \textit{Musk}, \textit{NCI 60 Thioguanine}),
healthcare (\textit{Heart Disease}, \textit{Medical Charges}, \textit{Postoperative}), environment (\textit{Forest Fires}, \textit{Seoul Bike Sharing}, \textit{Nomao}),
sociology (\textit{Speed Dating}, \textit{Student Performance}), finance (\textit{Adult}, \textit{California Housing}, \textit{Credit Approval}),
and law (\textit{COMPAS}, \textit{Communities and Crime}).

The datasets are divided into two groups: the first eight correspond to binary classification tasks, while the remaining eight are used for continuous regression. To create the Boolean hypercube representation required by our framework, we encode each attribute as a set of indicator columns, mapping $\{0, 1\}$ to $\{-1, +1\}$. Attributes with at most $K = 4$ distinct values are one-hot encoded regardless of type, so a numerical attribute with few values is left undiscretized. Categorical attributes with more than $K$ categories are reduced to the $K - 1$ most frequent, with all others grouped into a residual category. Numerical attributes with more than $K$ values are discretized into $K$ quantile-based bins using a standard $K$-bins strategy. We discard constant columns, as well as numerical attributes whose quantile edges are not all distinct, a situation that can occur for highly skewed or zero-inflated variables. This explains why the indicator counts in Table~\ref{tab:datasets} may be smaller than the raw attribute count from the original data. Each discretized numerical attribute contributes exactly $K$ indicators, while a retained categorical attribute contributes at most $K$; the quantity $d - (K \cdot \textit{Num})$ therefore measures the contribution of the categorical attributes alone, which ranges from two indicators per attribute (\textit{COMPAS}) to three (\textit{Speed Dating}). For regression tasks, target variables are rescaled to $[-1, +1]$ to match the assumed codomain. After preprocessing, the average dimensionality is $d \approx 220$ (median $44$), ranging from $d = 12$ (\textit{Medical Charges}) to $d = 1652$ (\textit{Cancer Drug Response Methylation}).

\begin{table}[ht]
\centering
\caption{Overview of the 16 OpenML benchmark datasets utilized in our experiments. \emph{Total Raw} denotes the number of original attributes in each OpenML file; \emph{Cat} and \emph{Num} indicate the counts of categorical and numerical attributes retained after preprocessing, encoded via one-hot encoding and $K = 4$ quantile-based binning, respectively. \emph{Dim} represents the resulting Boolean dimension. The first eight datasets are classification tasks, and the last eight are regression tasks. The final column shows the neural network's generalization error, assessed by 5-fold cross-validation using the normalized quadratic loss $\ell(y, \hat{y}) = \frac{1}{4}(y - \hat{y})^2$.}
\label{tab:datasets}
\begin{tabular}{lrrrrrrr}
\toprule
Dataset & ID & Instances & Cat & Num & Total Raw & Dim ($d$) & CV Loss \\
\midrule
\multicolumn{8}{l}{\textbf{Classification}} \\
Credit Approval & 29 & 690 & 4 & 2 & 15 & 17 & 0.145 \\
COMPAS & 42192 & 5278 & 8 & 1 & 13 & 20 & 0.405 \\
Postoperative & 40683 & 88 & 8 & 0 & 8 & 23 & 0.373 \\
Heart Disease & 43672 & 1190 & 6 & 5 & 11 & 37 & 0.111 \\
Adult & 179 & 48842 & 9 & 2 & 14 & 42 & 0.163 \\
Nomao & 1486 & 34465 & 41 & 6 & 118 & 149 & 0.054 \\
Speed Dating & 40536 & 8378 & 58 & 2 & 120 & 180 & 0.169 \\
Musk & 1116 & 6598 & 1 & 160 & 167 & 644 & 0.015 \\
\midrule
\multicolumn{8}{l}{\textbf{Regression}} \\
Medical Charges & 44146 & 163065 & 0 & 3 & 3 & 12 & 0.003 \\
California Housing & 44024 & 20640 & 0 & 8 & 8 & 32 & 0.020 \\
Forest Fires & 44962 & 517 & 2 & 8 & 12 & 40 & 0.005 \\
Seoul Bike Sharing & 46328 & 8760 & 5 & 8 & 17 & 46 & 0.008 \\
Student Performance & 42352 & 395 & 20 & 4 & 32 & 69 & 0.021 \\
NCI 60 Thioguanine & 46132 & 60 & 0 & 47 & 48 & 188 & 0.027 \\
Communities and Crime & 46286 & 1994 & 1 & 91 & 122 & 367 & 0.030 \\
Cancer Drug Resp. Meth. & 46139 & 475 & 0 & 413 & 808 & 1652 & 0.035 \\
\bottomrule
\end{tabular}
\end{table}

\paragraph{Models.}
For each dataset, we employ a Multi-Layer Perceptron (MLP) implemented using \texttt{Scikit-Learn} as the black-box model $f$. For classification, we use the default \texttt{MLPClassifier} settings: a single hidden layer of 100 neurons, ReLU activation, the Adam optimizer, and up to 200 training iterations; the model outputs its predicted label, mapped to $\{-1, +1\}$. For regression, we use \texttt{MLPRegressor} with three hidden layers (150, 100, and 50 neurons), ReLU activation, the Adam optimizer, and up to 500 training iterations; the real-valued outputs are clipped to $[-1, +1]$ to ensure $f$ matches the assumed codomain. Model quality is evaluated using the normalized quadratic loss $\ell(y, \hat{y}) = \frac{1}{4}(y - \hat{y})^2$, which, in the binary case, coincides with the zero-one error. The resulting 5-fold cross-validation estimates are given in the final column of Table~\ref{tab:datasets}.

\paragraph{Explanation Tasks.}
In our experiments, an \emph{explanation task} is defined by a tuple $(f, \vec{x}, k, m, \sigma)$, where $f$ denotes the neural network trained on the benchmark and $\vec{x}$ is a reference instance sampled uniformly at random from the test set. Sampling from $\mathcal D_{\vec x, \sigma}$ is exact and requires no burn-in: conditioned on $\vec{x}$, the coordinates of $\vec{z}$ are mutually independent, each retaining its reference value $x_i$ with probability $1 / (1 + e^{-\sigma})$ and flipping with probability $p_\sigma = e^{-\sigma} / (1 + e^{-\sigma})$. This interpretation provides an intuitive understanding of the locality parameter: $\sigma = 0$ yields the uniform distribution over the hypercube ($p_\sigma = \frac{1}{2}$), $\sigma = 1$ perturbs about a quarter of the coordinates ($p_\sigma \approx 0.27$), and $\sigma = 1.75$ perturbs about one coordinate in seven ($p_\sigma \approx 0.15$). Our experiments span small sparsity budgets ($k \leq 8$), locality regimes ranging from uniform to sharply concentrated neighborhoods, and sample sizes up to $m = 10^5$; the specific grid for each sweep is detailed in the corresponding subsection. For each configuration, all explainers receive the same sample so that performance differences reflect optimization rather than sampling variability. Finally, because the difficulty of an explanation task depends on the reference instance, we select 10 independent instances $\vec{x}$ per benchmark and, for each configuration $(f, k, m, \sigma)$, report the mean and dispersion of each explainer's performance across these instances.

\paragraph{Explainers.}
To solve the explanation tasks, we employ two explainers: MIP, which solves the mixed-integer formulation \eqref{pb:mip}, and IHT, a scalable Iterative Hard Thresholding algorithm. The MIP formulation is solved using the \texttt{Gurobi} solver with default parameters, imposing a strict 120-second time limit. If the time limit is reached, the solver returns its incumbent solution, which remains admissible but may not be optimal; we therefore report the frequency of such timeouts alongside the affected results. The IHT explainer is implemented in \texttt{PyTorch}, leveraging \texttt{CUDA}-accelerated tensor operations for efficient gradient computations and sparse projections. Consequently, wall-clock comparisons between the two methods should be interpreted with the CPU/GPU asymmetry in mind. For IHT, we use the constant step size $\eta = \frac{18}{19} \cosh^{2}(\sigma / 2)$, which is exactly the inverse restricted smoothness constant $1 / \beta$ prescribed by Assumption~\ref{ass:regularity}. The algorithm is run for 5000 iterations, initialized at the admissible explanation $\vec{w}_0 = f(\vec{x}) \, \vec{e}_1$, in accordance with our convergence analysis.

We compare our MIP and IHT methods with three complementary baselines. The first, \MIPRelax, serves as an ablation baseline by implementing the continuous convex relaxation of \eqref{pb:mip}. This approach is modeled in \texttt{CVXPY} and solved using the \texttt{Gurobi} backend. \MIPRelax substitutes the combinatorial sparsity constraint with the continuous surrogate $\|\vec{w}\|_1 \leq k$, while directly enforcing the anchoring hyperplane $\vec{w} \cdot \vec{x} = f(\vec{x})$ and the variable bounds $\vec{w} \in [-1, +1]^d$. The other two baselines are state-of-the-art model-agnostic explainers: LIME \citep{Ribeiro.KDD.2016} and MAPLE \citep{Plumb.NeurIPS.2018}.\footnote{Because the official MAPLE repository has not been updated since its original release, we manually ported its source code to Python 3.14 to ensure compatibility with our experimental environment. The port, included in our replication code, contains only compatibility fixes.}

Neither LIME nor MAPLE was developed for our hypothesis space, and as such, neither is guaranteed to produce an admissible explanation: the anchoring condition $\vec{w} \cdot \vec{x} = f(\vec{x})$ is not inherently enforced by either method, and while LIME permits a hard sparsity constraint $\|\vec{w}\|_0 \leq k$, MAPLE does not. We intentionally retain both methods at their default algorithmic settings to assess them as originally published. As a result, they optimize a less constrained objective and may sometimes report lower fidelity errors than methods restricted to $\mathcal{W}_{\vec{x}, k}$---a benefit that is only meaningful if the explanation is admissible. For each configuration, we therefore report the rate at which each baseline violates the anchoring and sparsity constraints, and interpret its fidelity accordingly.

\subsection{Optimization Quality and Admissibility}
\label{subsec:exp_optimization}

This initial set of experiments focuses on the optimization aspect: given a fixed sample, how effectively does each explainer minimize the empirical fidelity error $\widehat{\mathsf{F}}$, and at what cost to admissibility and computation? We set the sparsity budget to $k = 5$, the sample size to $m = 5000$, and the locality parameter to $\sigma = 1.0$, running all five explainers on ten reference instances for each of the 16 benchmarks---resulting in 160 explanation tasks in total. Table~\ref{tab:fidelity} presents the resulting fidelity errors, with datasets ordered by dimensionality $d$ as in Table~\ref{tab:datasets}. To avoid multiple tables, support sizes, anchoring violation rates, and runtimes are discussed in the main text and shown in Figure~\ref{fig:runtime}.

\begin{table}[ht]
\centering
\caption{Empirical fidelity error ($\widehat{\mathsf{F}}$) across classification and regression benchmarks ($k=5$, $m = 5000$, $\sigma=1.0$), ordered by dimensionality ($d$). The last two columns report methods that do not satisfy the sparsity budget; \MIPRelax\ is a relaxation, and its value is a valid lower bound on the optimum of \eqref{pb:mip} rather than an achievable score.}
\label{tab:fidelity}
\begin{tabular}{lrrrrr}
\toprule
 & \multicolumn{3}{c}{Sparse ($\|\vec{w}\|_0 \leq k$)} & \multicolumn{2}{c}{Dense} \\
\cmidrule(lr){2-4} \cmidrule(lr){5-6}
Dataset & IHT & MIP & LIME & \MIPRelax & MAPLE \\
\midrule
\multicolumn{6}{l}{\textbf{Classification}} \\
Credit Approval & $0.107 \pm 0.010$ & $0.107 \pm 0.010$ & $0.108 \pm 0.009$ & $0.090 \pm 0.005$ & $0.284 \pm 0.060$ \\
COMPAS & $0.105 \pm 0.068$ & $0.104 \pm 0.068$ & $0.094 \pm 0.048$ & $0.081 \pm 0.044$ & $0.199 \pm 0.179$ \\
Postoperative & $0.141 \pm 0.048$ & $0.141 \pm 0.048$ & $0.130 \pm 0.027$ & $0.108 \pm 0.024$ & $0.246 \pm 0.102$ \\
Heart Disease & $0.162 \pm 0.029$ & $0.162 \pm 0.030$ & $0.162 \pm 0.026$ & $0.121 \pm 0.015$ & $0.225 \pm 0.046$ \\
Adult & $0.152 \pm 0.044$ & $0.152 \pm 0.044$ & $0.144 \pm 0.029$ & $0.114 \pm 0.021$ & $0.269 \pm 0.066$ \\
Nomao & $0.226 \pm 0.064$ & $0.224 \pm 0.063$ & $0.190 \pm 0.039$ & $0.124 \pm 0.023$ & $0.188 \pm 0.036$ \\
Speed Dating & $0.205 \pm 0.070$ & $0.203 \pm 0.071$ & $0.181 \pm 0.044$ & $0.126 \pm 0.032$ & $0.179 \pm 0.041$ \\
Musk & $0.124 \pm 0.074$ & $0.124 \pm 0.074$ & $0.118 \pm 0.062$ & $0.077 \pm 0.040$ & $0.095 \pm 0.035$ \\
\midrule
\multicolumn{6}{l}{\textbf{Regression}} \\
Medical Charges & $0.001 \pm 0.000$ & $0.001 \pm 0.000$ & $0.001 \pm 0.000$ & $0.001 \pm 0.000$ & $0.013 \pm 0.008$ \\
California Housing & $0.020 \pm 0.002$ & $0.020 \pm 0.001$ & $0.019 \pm 0.001$ & $0.012 \pm 0.001$ & $0.027 \pm 0.003$ \\
Forest Fires & $0.005 \pm 0.000$ & $0.005 \pm 0.000$ & $0.004 \pm 0.000$ & $0.003 \pm 0.000$ & $0.005 \pm 0.001$ \\
Seoul Bike Sharing & $0.015 \pm 0.009$ & $0.015 \pm 0.009$ & $0.010 \pm 0.002$ & $0.007 \pm 0.002$ & $0.014 \pm 0.006$ \\
Student Performance & $0.012 \pm 0.003$ & $0.012 \pm 0.003$ & $0.011 \pm 0.001$ & $0.007 \pm 0.000$ & $0.017 \pm 0.004$ \\
NCI 60 Thioguanine & $0.022 \pm 0.021$ & $0.022 \pm 0.021$ & $0.012 \pm 0.001$ & $0.007 \pm 0.001$ & $0.009 \pm 0.002$ \\
Communities and Crime & $0.023 \pm 0.006$ & $0.023 \pm 0.006$ & $0.018 \pm 0.003$ & $0.011 \pm 0.001$ & $0.014 \pm 0.002$ \\
Cancer Drug Resp. Meth. & $0.019 \pm 0.009$ & $0.018 \pm 0.008$ & $0.009 \pm 0.001$ & $0.003 \pm 0.000$ & $0.008 \pm 0.001$ \\
\bottomrule
\end{tabular}
\end{table}

\paragraph{Admissibility comes first.}
Two explainers can be fairly compared on $\widehat{\mathsf{F}}$ only if they search the same solution space, so we begin by reporting where each explainer actually lands. IHT, MIP, and LIME consistently yield exactly five nonzero coefficients across all 160 tasks. By construction, \MIPRelax\ produces dense solutions: its $L_1$ surrogate disperses weight across the entire hypercube, resulting in supports that range from 13 coefficients on \textit{Medical Charges} to $1412.6 \pm 54.8$ on \textit{Cancer Drug Resp.\ Meth.} MAPLE is also dense, but much less predictable---its support size varies from a single coefficient on \textit{Medical Charges} to $598.0 \pm 648.7$ on \textit{Cancer Drug Resp.\ Meth.}, with standard deviations often matching the mean. This variability makes it difficult for users to anticipate the length of the returned explanation.

The anchoring constraint clearly distinguishes the methods. IHT, MIP, and \MIPRelax\ enforce this condition structurally and satisfy it exactly on every task. LIME violates the constraint in all 160 cases: its average gap $|\vec{w} \cdot \vec{x} - f(\vec{x})|$ ranges from $0.026$ on \textit{Medical Charges} to $0.783$ on \textit{Nomao}, and in 7 out of 80 classification tasks, it exceeds $1.0$---meaning the surrogate at $\vec{x}$ is further from $f(\vec{x})$ than the distance between the two classes. MAPLE violates anchoring on $84\%$ of the tasks; its only clean benchmark, \textit{Medical Charges}, is also where it collapses to a one-coefficient constant model and where its fidelity error is an order of magnitude higher than all other methods.

By contrast, one constraint is rarely active. Across all 160 tasks, the largest coefficient produced by IHT is $0.935$, and by MIP is $0.956$. For MIP, this indicates that the bound $\|\vec{w}\|_\infty \leq 1$ is simply inactive at the optimum. For IHT, the measurement reveals something more: Section~\ref{subsec:convergence} deliberately omits the box from the feasible set, retaining it only on the reference optimum $\widehat{\vec{w}}$, so the iterates are free to leave it---but they never do. This is the empirical counterpart of Lemma~\ref{lem:trajectory}(ii), which controls the magnitude without imposing it. Only \MIPRelax\ reaches the bound exactly, as expected for an $L_1$ surrogate that distributes weight across many coordinates. LIME, though unconstrained in this regard, also stays within the box on every task (maximum $0.917$), whereas MAPLE exceeds the bound on four occasions, reaching up to $1.270$. At these sparsity levels, the box constraint does not distinguish between the explainers; instead, the admissibility question reduces to sparsity and anchoring.

\paragraph{IHT matches the exact solver.}
With these considerations in mind, Table~\ref{tab:fidelity} reveals that our two explainers are nearly indistinguishable in terms of optimization quality. Their fidelity errors match to three decimal places on 12 out of 16 benchmarks. On the remaining four, IHT lags by at most $6\%$ in relative terms (\textit{Cancer Drug Resp.\ Meth.}, $0.019$ versus $0.018$)---a difference smaller than the natural variability across reference instances. Furthermore, for three of these four benchmarks, the MIP solver was halted by the time limit, meaning its reported value is merely the incumbent rather than a certified optimum. Thus, the apparent advantage is not even definitive.

The \MIPRelax\ column serves a distinct purpose. Any $\vec{w}$ meeting $\|\vec{w}\|_0 \leq k$ and $\|\vec{w}\|_\infty \leq 1$ automatically satisfies $\|\vec{w}\|_1 \leq k$, so the relaxation expands the feasible set and its optimum provides a valid lower bound for the optimum of \eqref{pb:mip}. This makes the ablation informative: on benchmarks where MIP cannot certify optimality, the two columns bracket the unknown optimum---for example, on \textit{Musk}, it lies within $[0.077, 0.124]$. Although this bracket is loose, as expected from an $L_1$ surrogate at $k = 5$, it demonstrates that neither explainer is leaving significant fidelity unachieved.

\paragraph{LIME's advantage is an anchoring artifact.}
LIME reports a lower fidelity error than MIP on several benchmarks---for example, $0.094$ versus $0.104$ on \textit{COMPAS} and $0.144$ versus $0.152$ on \textit{Adult}. This result warrants an explanation rather than a disclaimer, and the answer is straightforward. When the MIP is solved to optimality, it achieves the minimum $\widehat{\mathsf{F}}$ over $\mathcal{W}_{\vec{x}, k}$; thus, any $\vec{w}$ with a strictly lower score must fall outside that feasible set. Since LIME adheres to the sparsity budget and, as discussed above, remains well within the box constraint, the only remaining explanation is anchoring. This advantage does not indicate that LIME is a better optimizer---it simply means LIME is optimizing a different, less constrained problem.

Figure~\ref{fig:anchoring_correlation} quantifies this trade-off. For each task, we plot LIME's relative fidelity advantage over MIP, $1 - \widehat{\mathsf{F}}(\vec{w}_{\mathrm{LIME}}) / \widehat{\mathsf{F}}(\vec{w}_{\mathrm{MIP}})$, against its anchoring gap. These two quantities are strongly correlated across both task families ($r = 0.92$), and the scatter cloud passes through the origin: when LIME produces an explanation close to the anchoring hyperplane, its advantage disappears or even turns slightly negative, as predicted by theory. Across the 160 tasks, the advantage is actually negative for 56 of them: despite exploring a larger solution space, LIME fails to outperform the anchored optimum in over a third of the cases. Conversely, the largest advantages correspond to anchoring gaps approaching or exceeding $1.0$---that is, to explanations that misstate the very prediction they are supposed to explain. The color scale confirms that this effect is not a consequence of dimensionality: it appears at every scale, and the largest gaps occur on high-dimensional benchmarks where a sparse anchored fit is most challenging. All tasks are shown without filtering; the few points with both a small gap and a sizeable relative advantage originate from \textit{Medical Charges} and \textit{Forest Fires}, the two benchmarks where every explainer is already within $5 \times 10^{-3}$ of a perfect fit, making the ratio of two near-zero errors difficult to interpret meaningfully. Together with the elimination argument above, these measurements quantify the magnitude of the discrepancy and establish that this deviation is what yields LIME's apparent fidelity advantage.

\begin{figure}[pos=htbp]
    \centering
    \begin{subfigure}{0.48\textwidth}
        \includegraphics[width=\linewidth]{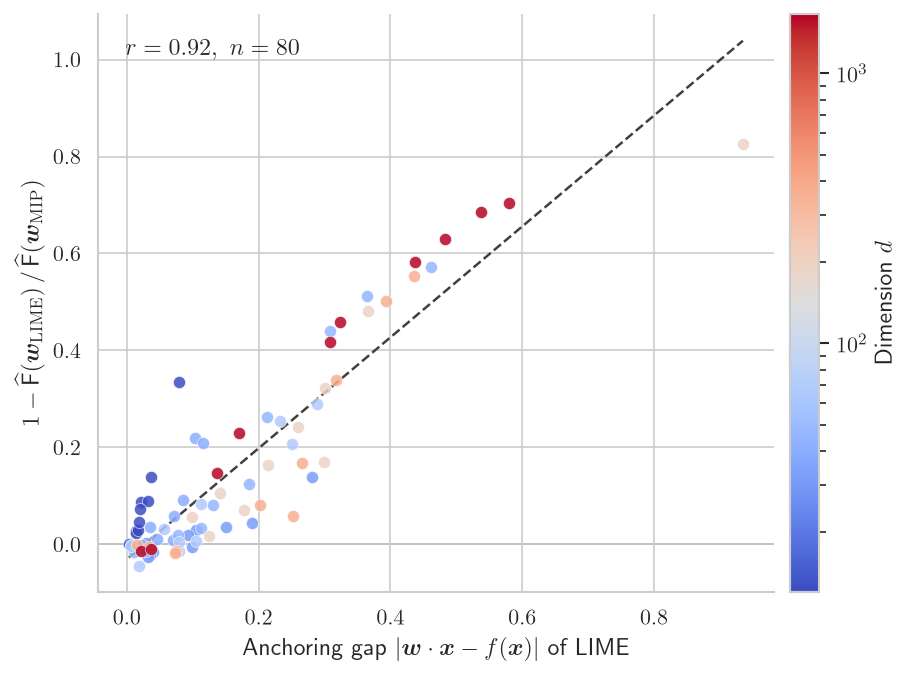}
        \caption{Classification Tasks}
    \end{subfigure}
    \hfill
    \begin{subfigure}{0.48\textwidth}
        \includegraphics[width=\linewidth]{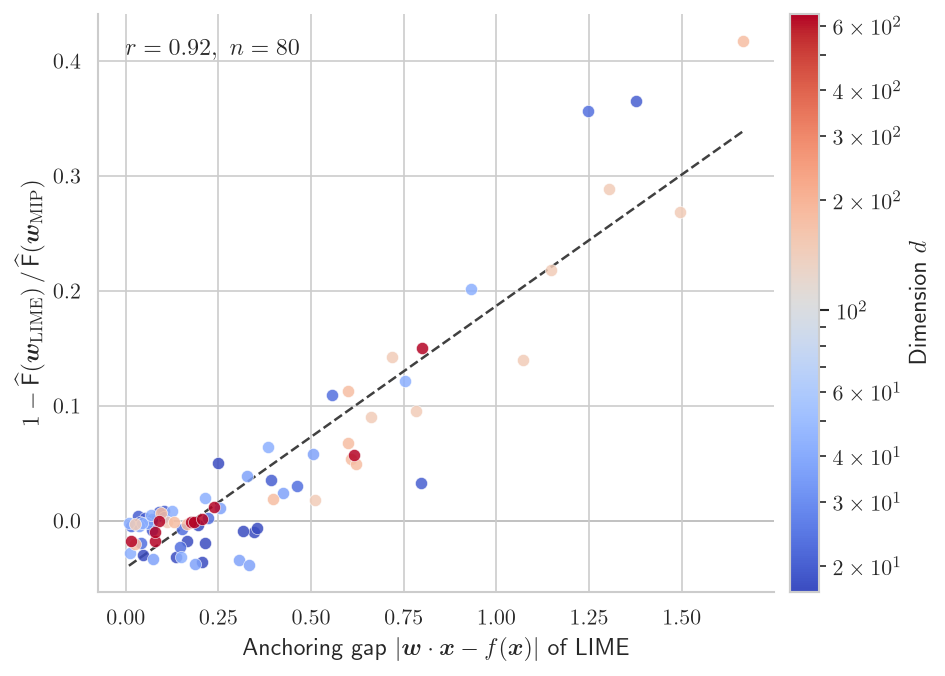}
        \caption{Regression Tasks}
    \end{subfigure}
    \caption{LIME's relative fidelity advantage over MIP, $1 - \widehat{\mathsf{F}}(\vec{w}_{\mathrm{LIME}}) / \widehat{\mathsf{F}}(\vec{w}_{\mathrm{MIP}})$, is plotted against its anchoring gap $|\vec{w} \cdot \vec{x} - f(\vec{x})|$. Each point corresponds to a single explanation task, with all 80 tasks from each family shown, and color indicates the dimension $d$. The advantage grows with anchoring violation and vanishes as the gap closes, illustrating that LIME's lower fidelity error arises from leaving the hypothesis space, not from more effective optimization within it.}
    \label{fig:anchoring_correlation}
\end{figure}

\paragraph{Scalability.} 
The two explainers we propose differ not in solution quality, but in computational cost. Figure~\ref{fig:runtime} presents wall-clock time versus dimension on a doubly logarithmic scale. IHT shows nearly constant runtime: from $d = 12$ to $d = 1652$, it stays between $0.055$ and $0.141$ seconds, a result of a fixed number of iterations over tensors whose size scales linearly with $d$. In contrast, MIP demonstrates the steepest increase. It is the fastest for small instances ($0.088 \pm 0.013$ seconds on \textit{Credit Approval}), overtakes LIME and MAPLE around $d \approx 40$, and then saturates at the time limit for larger dimensions. The certification rate highlights this transition: the solver proves optimality on all ten instances of every benchmark with $d \leq 69$, but only two of the 60 instances drawn from benchmarks with $d \geq 149$---both on \textit{Nomao}, the smallest of them---and none at all beyond $d = 180$. LIME and MAPLE occupy the middle ground, with MAPLE consistently more computationally demanding, reaching $57.7$ seconds on \textit{Cancer Drug Response Methylation}, while IHT completes the same task in just $0.137$ seconds.

This practical division of labor aligns closely with our theoretical expectations. Below the certification threshold, MIP is both exact and efficient, making it the natural choice. Beyond that threshold, it becomes an uncertified anytime heuristic, while IHT still provides explanations of statistically indistinguishable quality in just a tenth of a second, across all tested dimensions.

\begin{figure}[pos=htbp]
    \centering
    \begin{subfigure}{0.48\textwidth}
        \includegraphics[width=\linewidth]{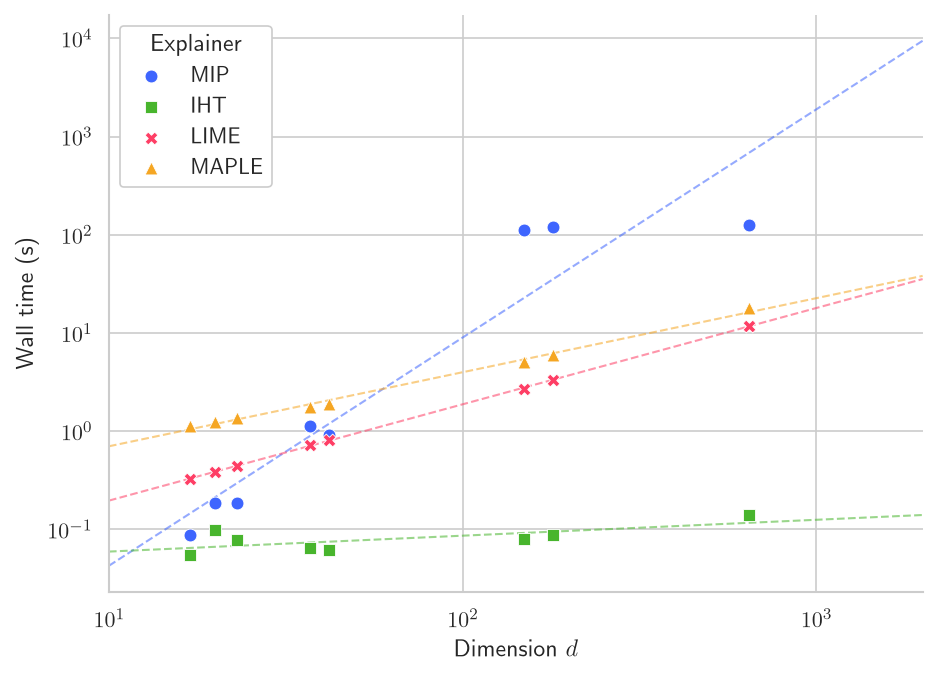}
        \caption{Classification Tasks}
    \end{subfigure}
    \hfill
    \begin{subfigure}{0.48\textwidth}
        \includegraphics[width=\linewidth]{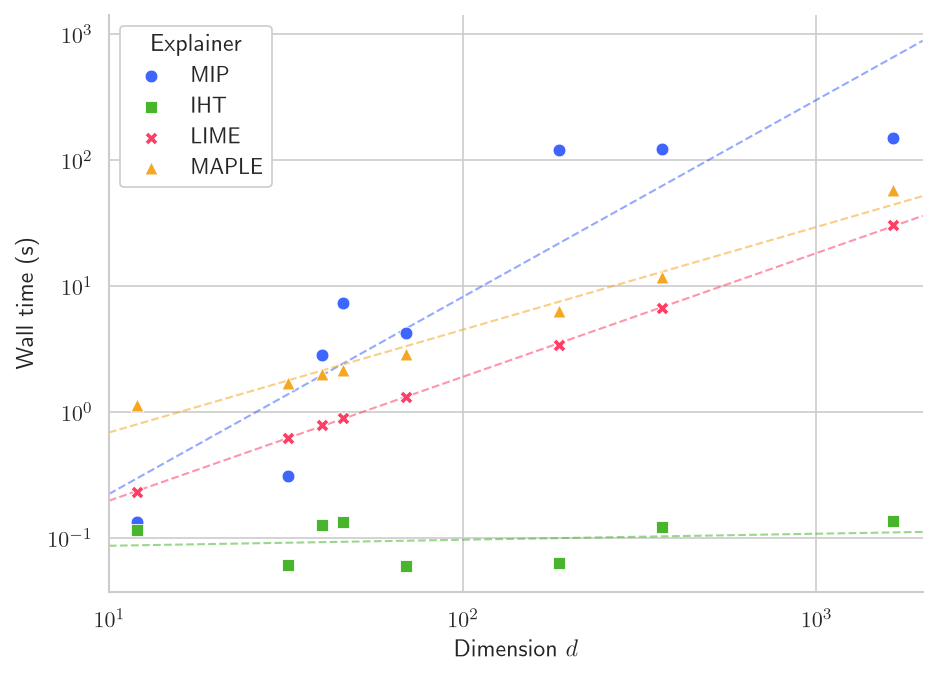}
        \caption{Regression Tasks}
    \end{subfigure}
    \caption{Wall-clock time per explanation task as a function of Boolean dimension $d$, displayed on a doubly logarithmic scale. Each point corresponds to a benchmark, with a least-squares fit shown for each explainer. \MIPRelax\ is omitted for clarity. MIP points clustered near the horizontal band at $120$ seconds represent instances stopped by the time limit. IHT stands out in that its computational cost remains nearly constant as $d$ increases.}
    \label{fig:runtime}
\end{figure}

\subsection{Generalization: Fidelity and Relevance}
\label{subsec:exp_generalization}

The previous subsection compared explainers on the empirical objective they all minimize. We now turn to the two questions that matter most to users: does a low empirical error lead to a low error on the neighborhood distribution $\mathcal{D}_{\vec{x}, \sigma}$, and does a faithful explanation provide insight into the associated prediction? Table~\ref{tab:generalization} addresses both, under the same configuration as before ($k = 5$, $m = 5000$, $\sigma = 1.0$), focusing on the three explainers that respect the sparsity budget; \MIPRelax\ and MAPLE, which return dense explanations, are omitted from this subsection.

\begin{table}[ht]
\centering
\caption{Out-of-sample fidelity error ($\mathsf{F}$) and relevance error ($\mathsf{R}$) for the three explainers that respect the sparsity budget ($k=5$, $m=5000$, $\sigma=1.0$), ordered by dimensionality. \MIPRelax\ and MAPLE are omitted, as neither returns a $k$-sparse explanation. Lower is better in both blocks.}
\label{tab:generalization}
\begin{tabular}{lrrrrrr}
\toprule
 & \multicolumn{3}{c}{Fidelity ($\mathsf{F}$)} & \multicolumn{3}{c}{Relevance ($\mathsf{R}$)} \\
\cmidrule(lr){2-4} \cmidrule(lr){5-7}
Dataset & IHT & MIP & LIME & IHT & MIP & LIME \\
\midrule
\multicolumn{7}{l}{\textbf{Classification}} \\
Credit Approval & $0.107 \pm 0.009$ & $0.107 \pm 0.009$ & $0.108 \pm 0.008$ & $0.008 \pm 0.017$ & $0.007 \pm 0.015$ & $0.043 \pm 0.047$ \\
COMPAS & $0.106 \pm 0.069$ & $0.106 \pm 0.068$ & $0.095 \pm 0.047$ & $0.057 \pm 0.127$ & $0.070 \pm 0.130$ & $0.137 \pm 0.239$ \\
Postoperative & $0.140 \pm 0.046$ & $0.140 \pm 0.046$ & $0.130 \pm 0.026$ & $0.070 \pm 0.099$ & $0.069 \pm 0.098$ & $0.142 \pm 0.207$ \\
Heart Disease & $0.163 \pm 0.029$ & $0.162 \pm 0.030$ & $0.161 \pm 0.025$ & $0.067 \pm 0.050$ & $0.070 \pm 0.048$ & $0.124 \pm 0.073$ \\
Adult & $0.153 \pm 0.044$ & $0.152 \pm 0.043$ & $0.144 \pm 0.030$ & $0.097 \pm 0.097$ & $0.094 \pm 0.095$ & $0.136 \pm 0.159$ \\
Nomao & $0.228 \pm 0.064$ & $0.227 \pm 0.064$ & $0.190 \pm 0.039$ & $0.228 \pm 0.131$ & $0.225 \pm 0.125$ & $0.405 \pm 0.244$ \\
Speed Dating & $0.210 \pm 0.072$ & $0.208 \pm 0.074$ & $0.182 \pm 0.045$ & $0.190 \pm 0.159$ & $0.196 \pm 0.160$ & $0.257 \pm 0.222$ \\
Musk & $0.127 \pm 0.076$ & $0.127 \pm 0.075$ & $0.119 \pm 0.062$ & $0.103 \pm 0.094$ & $0.102 \pm 0.094$ & $0.133 \pm 0.128$ \\
\midrule
\multicolumn{7}{l}{\textbf{Regression}} \\
Medical Charges & $0.001 \pm 0.000$ & $0.001 \pm 0.000$ & $0.001 \pm 0.000$ & $0.001 \pm 0.001$ & $0.001 \pm 0.001$ & $0.001 \pm 0.001$ \\
California Housing & $0.020 \pm 0.002$ & $0.020 \pm 0.001$ & $0.019 \pm 0.001$ & $0.017 \pm 0.004$ & $0.017 \pm 0.003$ & $0.022 \pm 0.007$ \\
Forest Fires & $0.005 \pm 0.000$ & $0.005 \pm 0.000$ & $0.004 \pm 0.000$ & $0.005 \pm 0.001$ & $0.005 \pm 0.001$ & $0.006 \pm 0.001$ \\
Seoul Bike Sharing & $0.015 \pm 0.009$ & $0.015 \pm 0.009$ & $0.010 \pm 0.002$ & $0.022 \pm 0.019$ & $0.022 \pm 0.019$ & $0.026 \pm 0.019$ \\
Student Performance & $0.012 \pm 0.003$ & $0.012 \pm 0.003$ & $0.011 \pm 0.001$ & $0.014 \pm 0.006$ & $0.014 \pm 0.006$ & $0.019 \pm 0.008$ \\
NCI 60 Thioguanine & $0.023 \pm 0.021$ & $0.023 \pm 0.022$ & $0.013 \pm 0.001$ & $0.038 \pm 0.056$ & $0.039 \pm 0.056$ & $0.052 \pm 0.066$ \\
Communities and Crime & $0.024 \pm 0.006$ & $0.024 \pm 0.006$ & $0.018 \pm 0.003$ & $0.032 \pm 0.017$ & $0.032 \pm 0.017$ & $0.038 \pm 0.017$ \\
Cancer Drug Resp. Meth. & $0.020 \pm 0.010$ & $0.020 \pm 0.009$ & $0.009 \pm 0.001$ & $0.039 \pm 0.028$ & $0.038 \pm 0.027$ & $0.047 \pm 0.030$ \\
\bottomrule
\end{tabular}
\end{table}

\paragraph{The empirical error is a faithful proxy.}
Across all $16 \times 3$ entries of the fidelity block, the out-of-sample error $\mathsf{F}$ never deviates from its empirical counterpart $\widehat{\mathsf{F}}$ in Table~\ref{tab:fidelity} by more than $0.005$, with the largest difference observed on \textit{Speed Dating}. This consistency holds across three orders of magnitude in dimension, from $d = 12$ to $d = 1652$, and the gap shows no discernible trend with increasing $d$. Empirical risk minimization over $\mathcal{W}_{\vec{x}, k}$ is thus already tight at $m = 5000$, so the optimization quality measured in Section~\ref{subsec:exp_optimization} directly translates to generalization quality; the $m$-sweep below shows how much earlier this regime is reached.

\paragraph{Fidelity and relevance do not rank the explainers the same way.}
LIME keeps its fidelity advantage out of sample, for the reason established in Section~\ref{subsec:exp_optimization}: it optimizes over a strictly larger space. The relevance block reverses the verdict. LIME is worse than MIP on 15 of the 16 benchmarks, and ties on the sixteenth, \textit{Medical Charges}, where every explainer is already at an error of $0.001$. The margin is substantial rather than marginal: LIME's relevance error is $1.3$ times that of MIP at the median, twice as large on \textit{Nomao} ($0.405$ against $0.225$) and on \textit{Postoperative} ($0.142$ against $0.069$), and six times as large on \textit{Credit Approval} ($0.043$ against $0.007$).

This reversal is the empirical content of the relevance-to-fidelity lemma of Section~\ref{subsec:relevance_to_fidelity}. Fidelity averages the surrogate's error over the whole neighborhood, so an explanation can be faithful on average while being wrong precisely where the user reads it, namely on the subcube its own support designates. Anchoring is what forbids that failure mode, and the fidelity LIME gains by abandoning it is paid for here.

\paragraph{The relevance bound is conservative, and it needs admissibility.} 
At $k = 5$ and $\sigma = 1.0$, the multiplicative factor $(1 + e^{-\sigma})^k$ from Lemma~\ref{lem:fidelity} is $4.79$. The measured ratio $\mathsf{R} / \mathsf{F}$ remains well below this value for both of our explainers: it never exceeds $1.95$ for IHT and $1.90$ for MIP, and stays below $1$ on every classification benchmark. The bound is thus conservative, as expected from a worst-case change-of-measure argument, and a small fidelity error is, in practice, a reliable certificate of relevance at the sparsity budgets explanations can afford.

LIME falls outside the scope of Lemma~\ref{lem:fidelity}, and the measurements confirm this is not a mere technicality. Its ratio rises to $4.00$ on \textit{NCI 60 Thioguanine}, just under the bound, and to $5.22$ on \textit{Cancer Drug Resp.\ Meth.}, exceeding it. On the highest-dimensional benchmark of the study, the guarantee is not just unavailable to LIME---it is invalid. The proof makes the reason clear: relevance measures the error against $f(\vec{x})$ on the subcube singled out by the support, and anchoring ensures the surrogate matches $f(\vec{x})$ there. Without anchoring, the conditioned quantity is no longer the one fidelity controls. An unanchored explanation may be faithful on average over the neighborhood, yet say nothing reliable on the subcube its own support selects---and no amount of fidelity can repair this disconnect. Admissibility is not just what makes an explanation legible to the reader; it is what makes the cheaper criterion a true proxy for the more demanding one.

\begin{figure}[pos=t]
    \centering
    \begin{subfigure}{0.48\textwidth}
        \includegraphics[width=\linewidth]{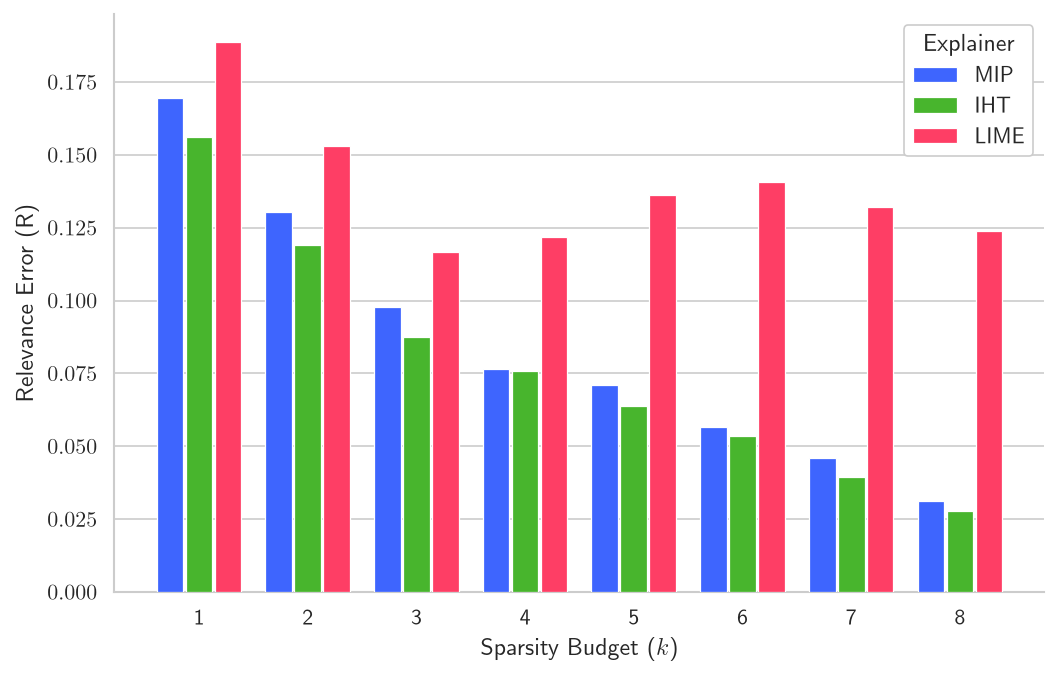}
        \caption{COMPAS (classification, $d = 20$)}
    \end{subfigure}
    \hfill
    \begin{subfigure}{0.48\textwidth}
        \includegraphics[width=\linewidth]{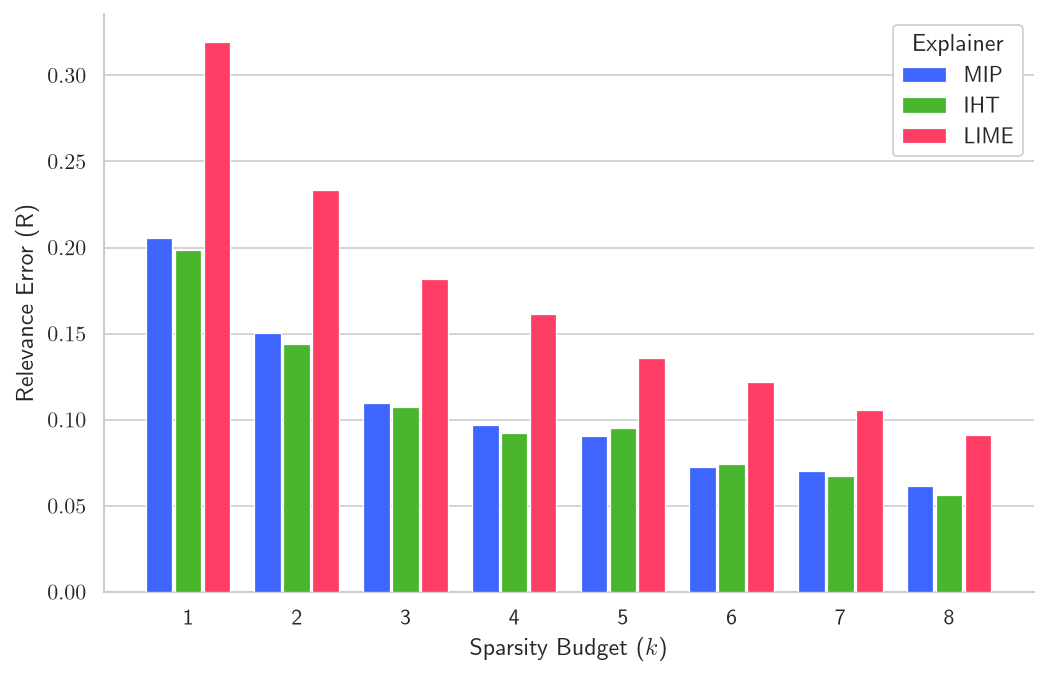}
        \caption{Adult (classification, $d = 42$)}
    \end{subfigure}

    \vspace{0.6em}
    \begin{subfigure}{0.48\textwidth}
        \includegraphics[width=\linewidth]{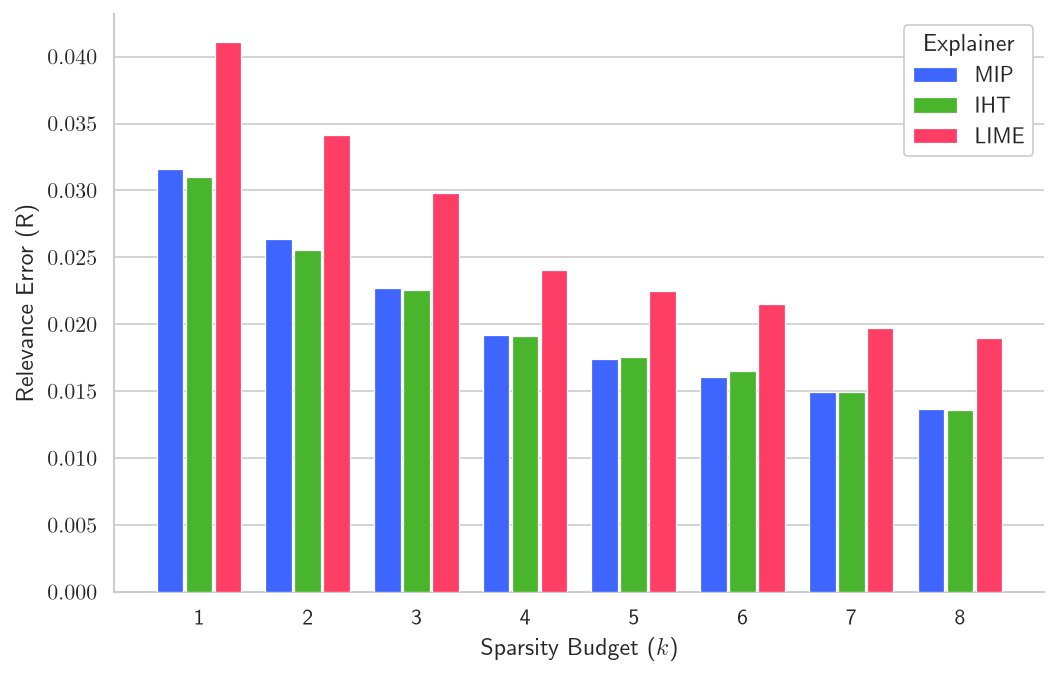}
        \caption{California Housing (regression, $d = 32$)}
    \end{subfigure}
    \hfill
    \begin{subfigure}{0.48\textwidth}
        \includegraphics[width=\linewidth]{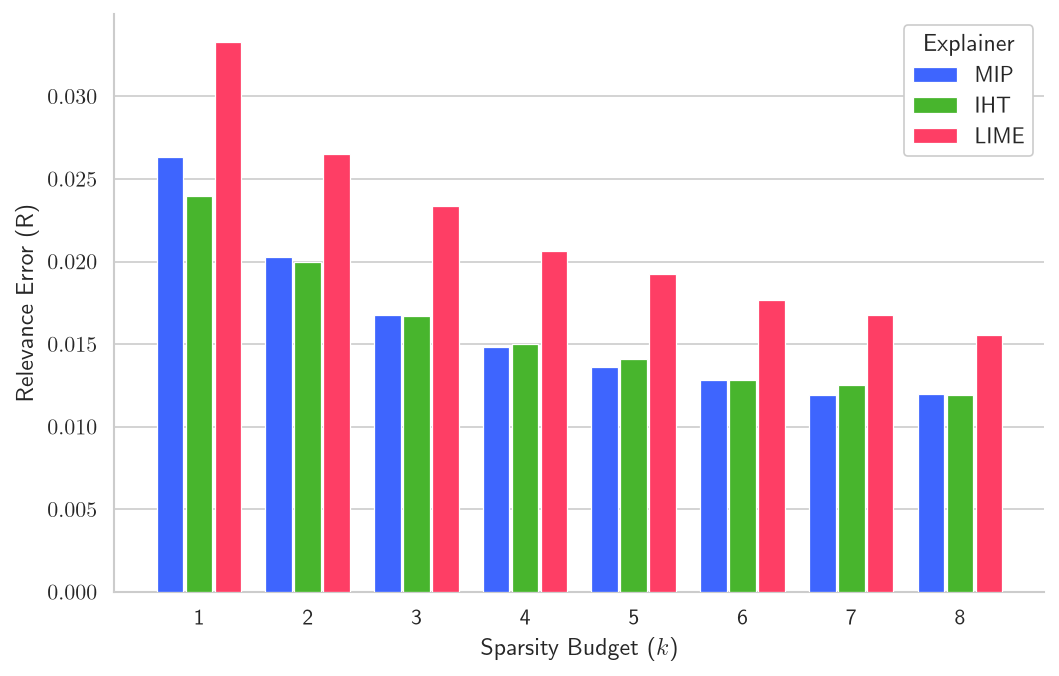}
        \caption{Student Performance (regression, $d = 69$)}
    \end{subfigure}
    \caption{Relevance error $\mathsf{R}$ as a function of sparsity budget $k$, with $m = 5000$ and $\sigma = 1.0$, averaged over ten reference instances per benchmark. Top row: classification; bottom row: regression. Note the differing vertical scales between rows.}
    \label{fig:k_sweep}
\end{figure}

\paragraph{Varying the sparsity budget.} Figure~\ref{fig:k_sweep} varies $k$ from $1$ to $8$ across four representative benchmarks, two from each task family, with $m$ and $\sigma$ held constant. For both IHT and MIP, the relevance error decreases monotonically throughout this range on all benchmarks. This deserves explicit mention, as it is not guaranteed: while a larger budget allows for a better fit, it also reduces the size of the subcube on which relevance is measured by a factor of $1 + e^{-\sigma}$ per additional coordinate, causing the bound in Lemma~\ref{lem:fidelity} to degrade exponentially. Across the budgets a practical explanation can offer, the benefit of increased fit prevails, and the lemma’s exponential warning does not materialize.

LIME behaves differently. Its relevance error exceeds both anchored explainers at every budget and on every benchmark, by a factor that stays around $1.3$ on \textit{California Housing} and \textit{Student Performance} and around $1.5$ on \textit{Adult}. On \textit{COMPAS}, its curve is not even monotonic, and at $k = 8$, its relevance error is roughly four times that of MIP. Thus, increasing the budget does not repair an unanchored explanation and may worsen it: the anchoring gap is not a fixed penalty that a richer hypothesis space eventually eliminates.

For our two explainers, the sweep reinforces the findings of Section~\ref{subsec:exp_optimization}: their curves closely track each other at every budget, with IHT slightly lower at the smallest budgets on some benchmarks and both converging as $k$ increases. Exact coincidence is not expected, since MIP is optimal for empirical fidelity error while the sweep evaluates relevance; nonetheless, the remaining differences are well within the dispersion across reference instances.

\begin{figure}[pos=t]
    \centering
    \begin{subfigure}{0.48\textwidth}
        \includegraphics[width=\linewidth]{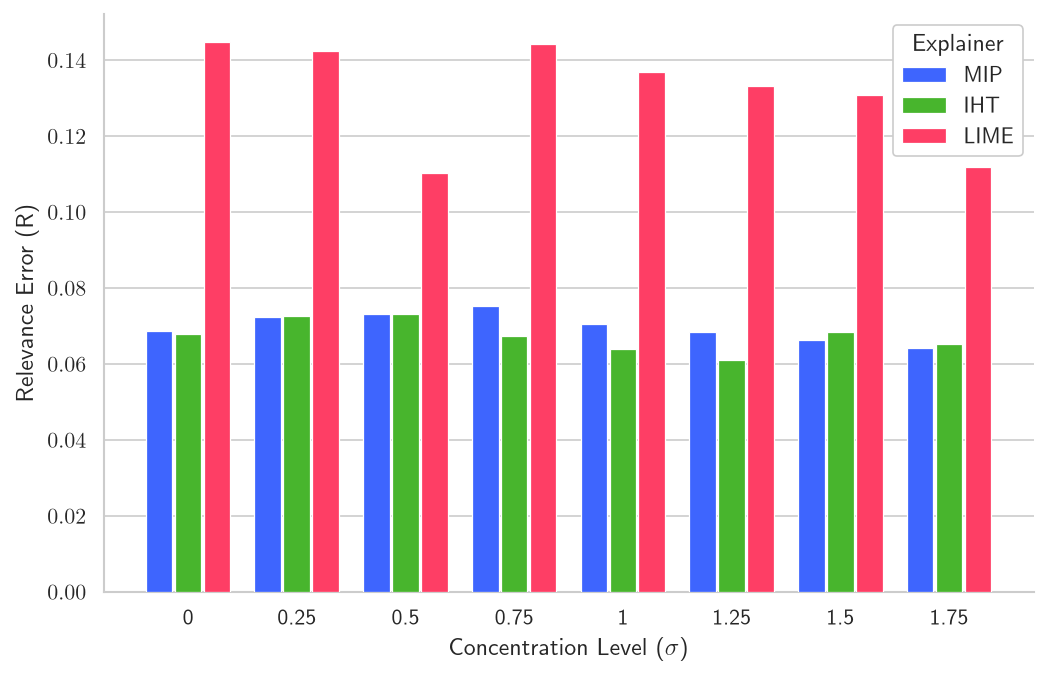}
        \caption{COMPAS (classification, $d = 20$)}
    \end{subfigure}
    \hfill
    \begin{subfigure}{0.48\textwidth}
        \includegraphics[width=\linewidth]{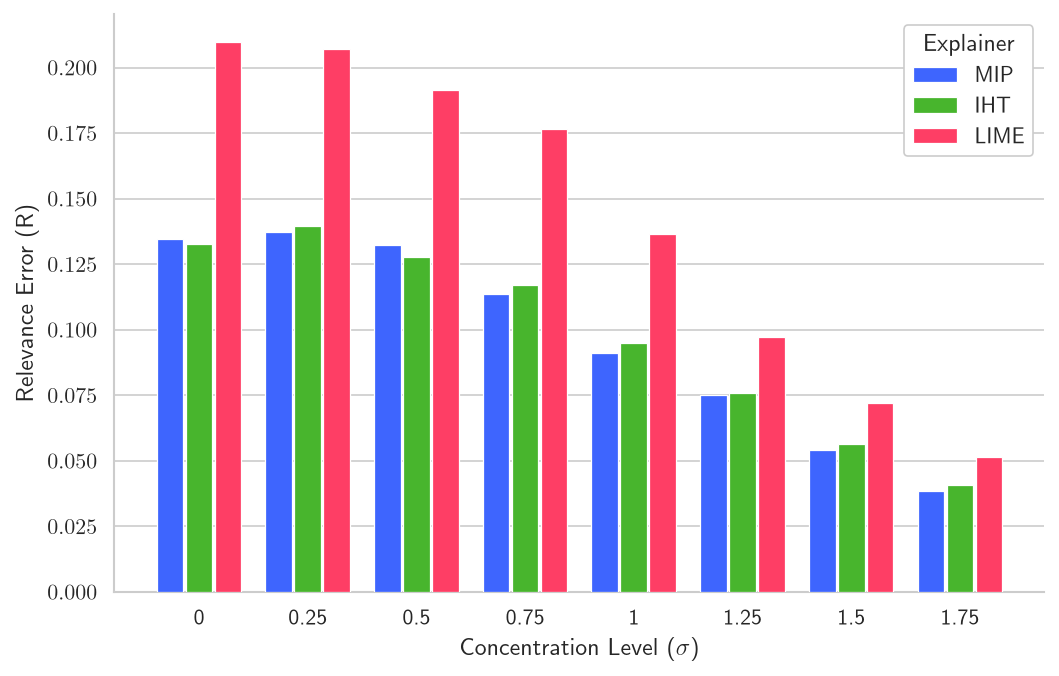}
        \caption{Adult (classification, $d = 42$)}
    \end{subfigure}

    \vspace{0.6em}
    \begin{subfigure}{0.48\textwidth}
        \includegraphics[width=\linewidth]{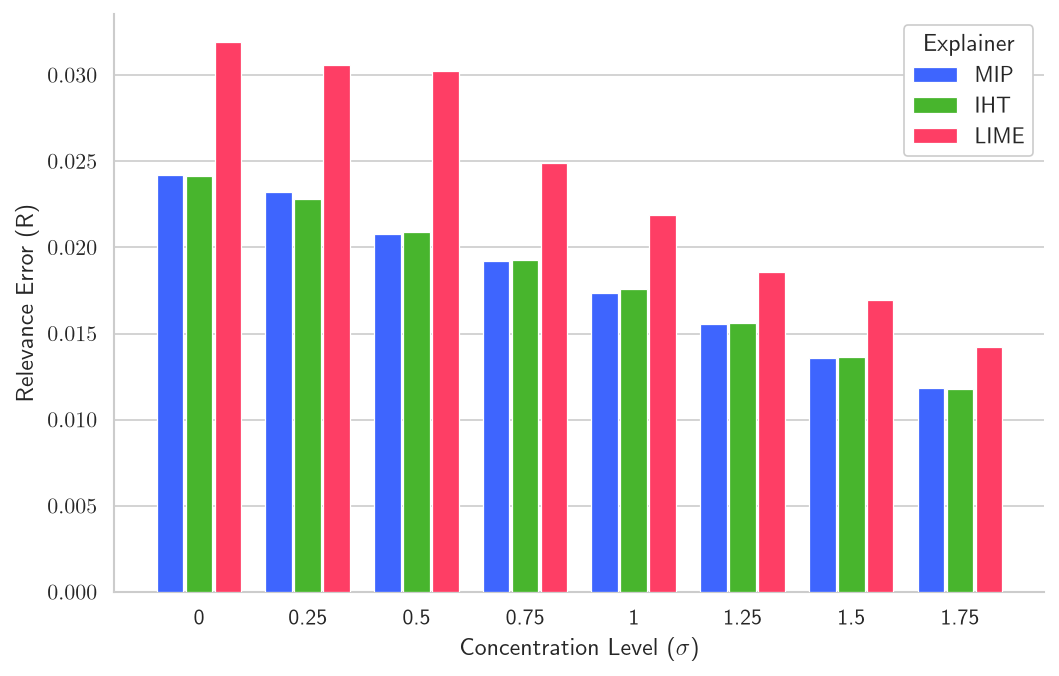}
        \caption{California Housing (regression, $d = 32$)}
    \end{subfigure}
    \hfill
    \begin{subfigure}{0.48\textwidth}
        \includegraphics[width=\linewidth]{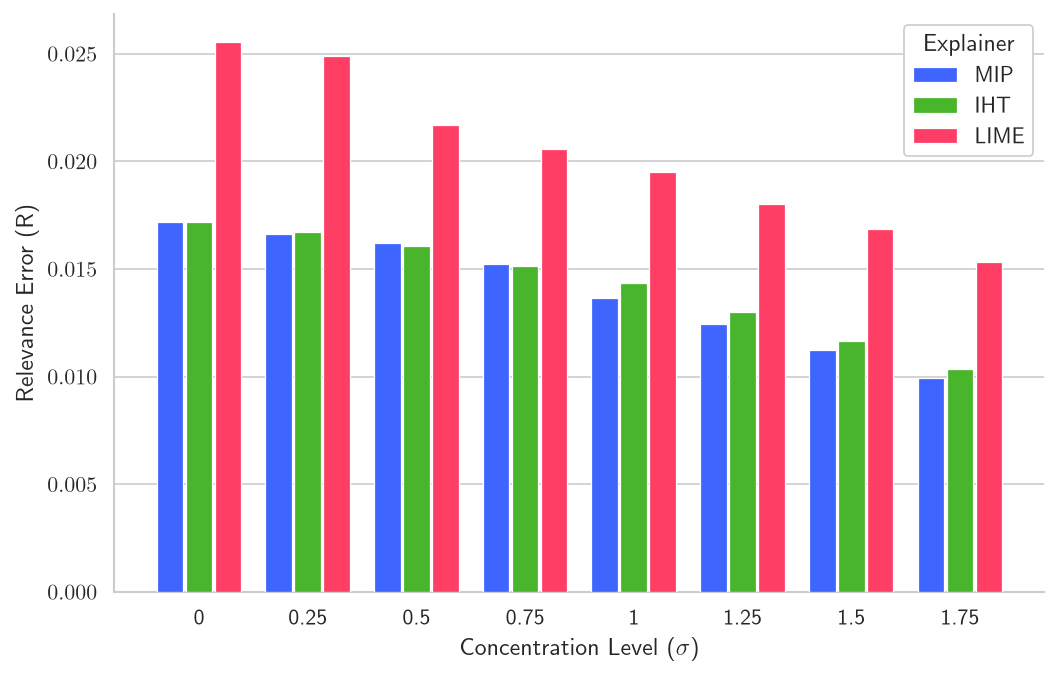}
        \caption{Student Performance (regression, $d = 69$)}
    \end{subfigure}
    \caption{Relevance error $\mathsf{R}$ as a function of the concentration parameter $\sigma$, with $k = 5$ and $m = 5000$, averaged over ten reference instances per benchmark. Top row: classification; bottom row: regression. Note that vertical scales differ between panels.}
    \label{fig:sigma_sweep}
\end{figure}

\paragraph{Varying the locality.} Figure~\ref{fig:sigma_sweep} varies $\sigma$ from $0$ to $1.75$, moving from the uniform distribution to a neighborhood that perturbs roughly one coordinate in seven, with $k = 5$ and $m = 5000$. For three of the four benchmarks, the relevance error for every explainer decreases steadily with increasing $\sigma$: by a factor of about three on \textit{Adult} and $1.7$ on the two regression tasks. A more concentrated neighborhood is easier to track with five coefficients, and the conversion factor $(1 + e^{-\sigma})^k$ from Lemma~\ref{lem:fidelity} tightens accordingly, dropping from $32$ at $\sigma = 0$ to $2.23$ at $\sigma = 1.75$. Guarantee and measurement thus improve together as the explanation becomes more local. \textit{COMPAS} is the exception; Table~\ref{tab:datasets} suggests why: it has by far the largest cross-validation error in the study, so the model exhibits little local structure for any explainer to capture, and increasing locality reveals none.

LIME remains above both anchored explainers at every value of $\sigma$ and across all benchmarks, by a factor ranging from about $1.2$ to $1.6$. The uniform case $\sigma = 0$ is particularly telling: the penalty is not an artifact of a specific locality regime that LIME was not designed for---it is already present, and on \textit{Adult}, it is widest when the neighborhood covers the entire hypercube.

Our two explainers, meanwhile, remain indistinguishable throughout the entire range. This serves as a mild but useful validation of Section~\ref{subsec:convergence}: the step size $\eta$ prescribed by Assumption~\ref{ass:regularity} depends on $\sigma$ via $\cosh^2(\sigma/2)$, and IHT follows the exact solver at every point on the grid with no additional tuning beyond that formula.

\begin{figure}[pos=t]
    \centering
    \begin{subfigure}{0.48\textwidth}
        \includegraphics[width=\linewidth]{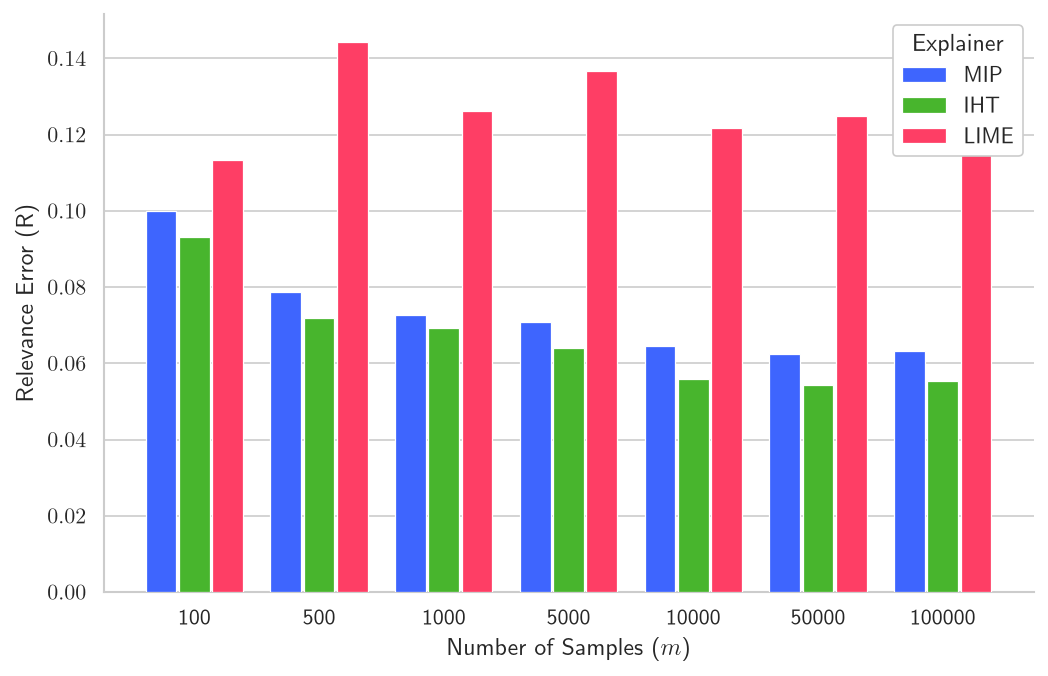}
        \caption{COMPAS (classification, $d = 20$)}
    \end{subfigure}
    \hfill
    \begin{subfigure}{0.48\textwidth}
        \includegraphics[width=\linewidth]{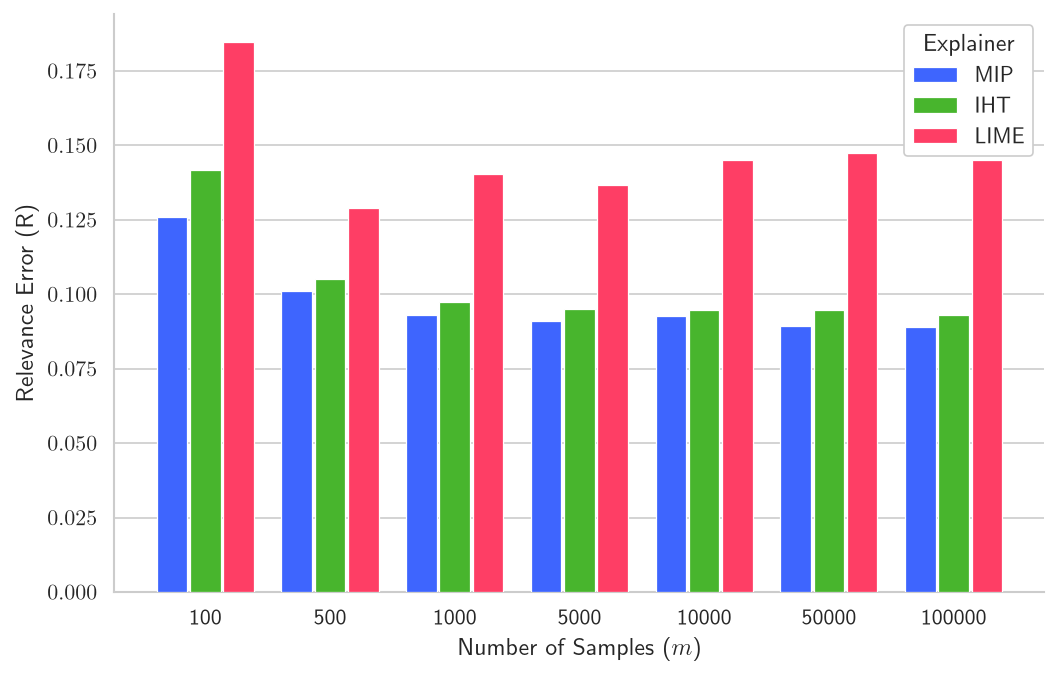}
        \caption{Adult (classification, $d = 42$)}
    \end{subfigure}

    \vspace{0.6em}
    \begin{subfigure}{0.48\textwidth}
        \includegraphics[width=\linewidth]{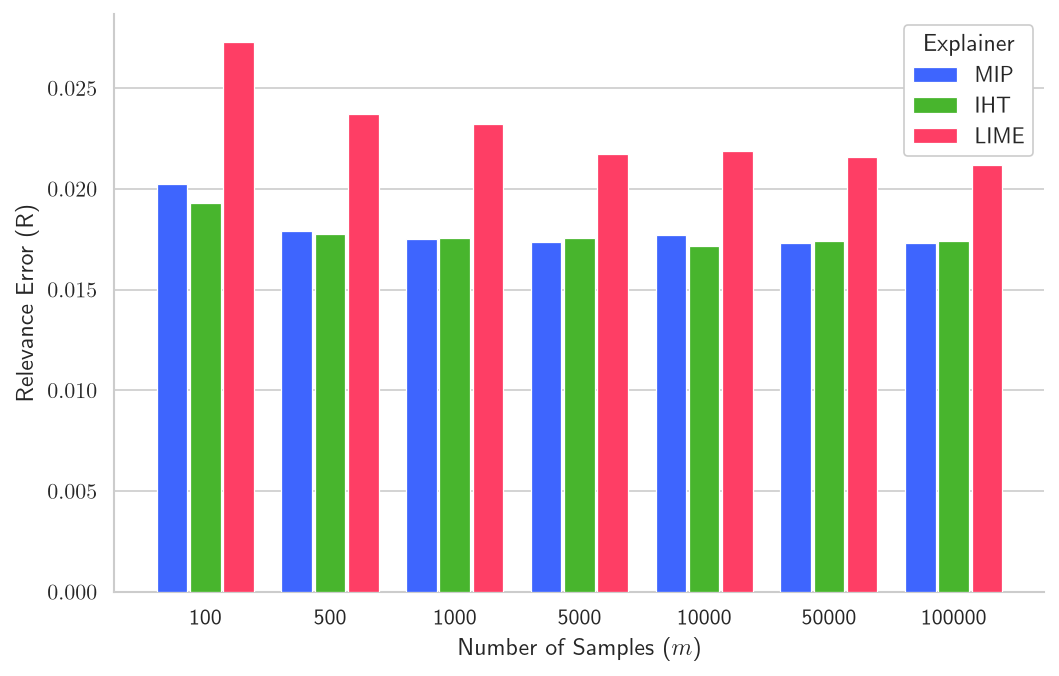}
        \caption{California Housing (regression, $d = 32$)}
    \end{subfigure}
    \hfill
    \begin{subfigure}{0.48\textwidth}
        \includegraphics[width=\linewidth]{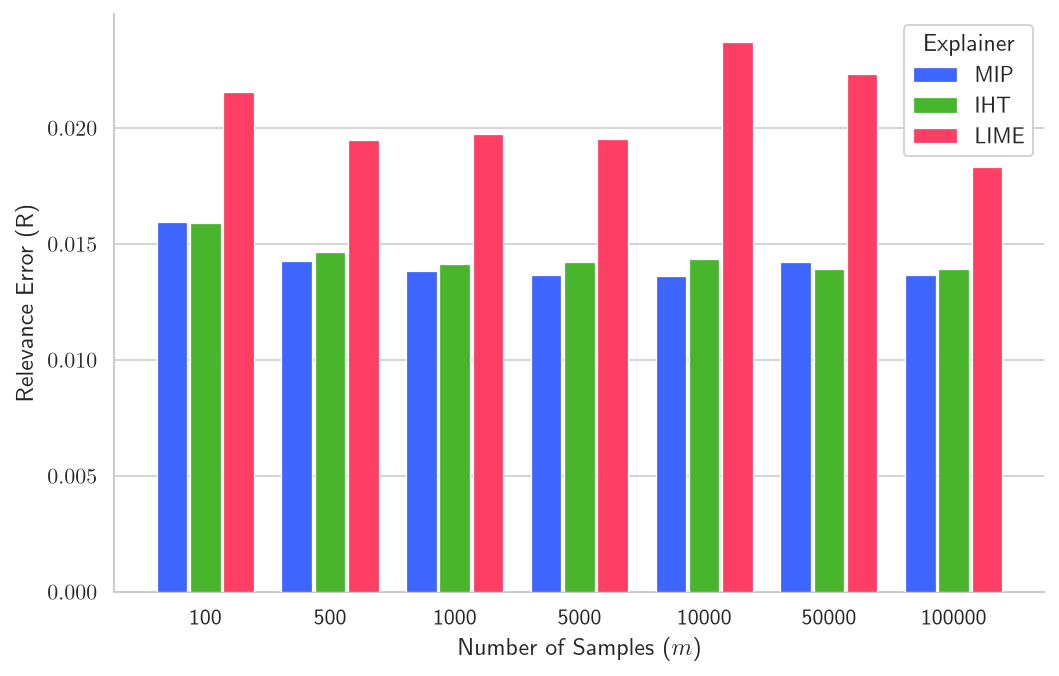}
        \caption{Student Performance (regression, $d = 69$)}
    \end{subfigure}
    \caption{Relevance error $\mathsf{R}$ as a function of sample size $m$, shown on a logarithmic grid, with $k = 5$ and $\sigma = 1.0$, averaged over ten reference instances per benchmark. Top row: classification; bottom row: regression. Note that vertical scales differ between panels.}
    \label{fig:m_sweep}
\end{figure}

\paragraph{Varying the sample size.}
Figure~\ref{fig:m_sweep} sweeps $m$ from $10^2$ to $10^5$ with $k = 5$ and $\sigma = 1.0$. For IHT and MIP, the results align with predictions from empirical risk minimization: relevance error drops sharply during the first decade, then levels off, plateauing at $m = 1000$ for \textit{California Housing} and \textit{Student Performance}, and at $m = 5000$ for \textit{Adult} and \textit{COMPAS}. Beyond this point, estimation error essentially disappears, leaving only the approximation error of a five-coefficient linear model, which no amount of additional sampling can reduce. This context also clarifies the guarantee in Theorem~\ref{thm:end_to_end}: for \textit{Adult}, at $\varepsilon = 10^{-2}$ and $\delta = 0.05$, the required sample size is on the order of $10^{11}$---eight orders of magnitude above the $10^3$ sufficient in practice. The $(B+1)^4$ term, with $B = k + 4\sqrt{k/\alpha}$, accounts for most of this gap: the analysis pays for a bound holding uniformly over all $k$-sparse explanations and the worst-case restricted curvature, neither of which is realized in these instances.

LIME's curves behave quite differently: they remain flat from the outset, are irregular, and on \textit{COMPAS} and \textit{Adult}, are not even monotonic---the value at $m = 10^5$ is no better than at $m = 500$. This contrast underscores the subsection's central point. For anchored explanations, variance is the limiting factor, which can be reduced with more samples; for LIME, it is the violated constraint, which no amount of data can fix. The comparison at both ends of the grid makes this concrete: on all four benchmarks, IHT and MIP with just $m = 100$ samples already achieve lower relevance error than LIME with $m = 10^5$, illustrating a thousandfold increase in data that yields no benefit for LIME.


\section{Conclusion}
\label{sec:conclusion}

We have introduced a unified framework for probabilistic explainability that encompasses both binary classification and continuous regression within a single formalism. Our framework produces sparse, anchored linear explanations over the Boolean hypercube, which generalize traditional subset-based approaches by incorporating both the magnitude and direction of each feature’s contribution. Identifying the optimal explanation is \ClassNPPP-hard for neural networks, so we address the two sources of this complexity separately. The counting challenge is managed by introducing the fidelity error, an unconditional surrogate whose gap to the relevance error is bounded by $(1 + e^{-\sigma})^k$ over a parameterized family of neighborhood distributions. The combinatorial aspect is tackled either exactly with a Mixed Integer Programming formulation or approximately with an Iterative Hard Thresholding method, whose projection onto the anchoring hyperplane is both exact and efficiently computable. Both methods offer sample complexity guarantees that scale polynomially with the sparsity budget and logarithmically with the dimensionality. Experimentally, the two approaches yield nearly identical solution quality, differing primarily in computational cost: MIP certifies optimality in moderate dimensions, while IHT efficiently scales to the largest benchmarks. Compared to LIME and MAPLE, our results support the core claim of the framework: explanations that depart from the hypothesis space may appear more faithful, but they are less relevant---and no amount of data can compensate for that loss.

This work opens several avenues for future research, of which we highlight three.

\paragraph{Richer explanation classes.} Linear explanations are effective at capturing the magnitude and direction of individual feature contributions, but they cannot model interactions between features. A natural next step is to explore sparse, low-degree polynomial explanations, where a small set of monomials---such as binomials reflecting pairwise correlations---replace the linear terms. In the space $\{-1,+1\}^d$, products of coordinates also take values in $\{-1,+1\}$, so these polynomials can be interpreted as sparse, anchored linear functions over an expanded set of features. Both \eqref{pb:mip} and Algorithm~\ref{alg:iht} can be applied directly to this lifted problem. Lemma~\ref{lem:fidelity} generalizes as well, with the exponent $k$ replaced by $q k$ for degree-$q$ monomials, since fixing the support literals determines all relevant monomials. However, the analysis in Section~\ref{subsec:rsc_rss} does not transfer straightforwardly: the lifted features introduce dependencies, resulting in a non-isotropic covariance structure and requiring new restricted eigenvalue bounds. Whether these analytical challenges can be overcome without substantially increasing sample complexity for small degrees remains open.

\paragraph{Optimizing relevance directly.} Our approach achieves relevance by optimizing fidelity, prompting the question of whether this indirect route is necessary. Specifically, could one minimize the \emph{empirical relevance error}---the average loss over samples satisfying $\vec{w} \cdot \vec{z} = \vec{w} \cdot \vec{x}$---directly? Statistically, this seems feasible: 
Lemma~\ref{lem:fidelity} shows that the conditioning event has probability at least $(1 + e^{-\sigma})^{-k}$, so estimation only increases sample size requirements by a constant factor. The main challenge, however, is algorithmic. The conditioning event depends discontinuously on $\vec{w}$, making the empirical objective piecewise constant across a complex arrangement of hyperplanes---devoid of gradients, and unsuitable for methods relying on convexity or thresholding. An exact formulation is equally problematic, requiring an indicator for each sample tied to an equality constraint, which in turn renders the objective a ratio of linear forms in these indicators. Underlying all of this is the \ClassNPPP-hardness established in Theorem~\ref{thm:hardness}, which persists even with empirical relaxation. While the surrogate fidelity-based approach appears more promising in practice, discovering a tractable formulation for the direct problem---even under strong restrictions on $f$---would significantly advance our understanding.

\paragraph{Beyond binarized domains.} Our framework currently operates on the Boolean hypercube, mapping categorical and numerical attributes via the binarization process outlined in Section~\ref{sec:setup}. While this approach is common in model-agnostic explainability, it creates a disconnect between the sparsity motivated by cognitive considerations and user reasoning: the sparsity budget $k$ counts indicator literals, whereas users typically think in terms of attributes. Bridging this gap would require extending the framework to a product space $\prod_{i} \Omega_i$ of finite attribute domains, allowing explanations to be expressed at the attribute level. The core analytical tools---relying only on the product structure of $\mathcal{D}_{\vec{x}, \sigma}$ and the additive decomposition of Hamming distance---remain applicable under this generalization. The key modification lies in the sparsity constraint, which becomes a group constraint over the one-hot encoded block of each attribute, necessitating a group analogue of the \textsc{gshp} projection step. A further, more ambitious generalization would be to relax the assumption of coordinate independence, enabling the analysis to follow the data manifold rather than rely on metric balls around $\vec{x}$. However, this would break the covariance isotropy underpinning Section~\ref{subsec:rsc_rss}, requiring a fundamentally new restricted eigenvalue analysis adapted to the resulting dependence structure.

\paragraph*{Acknowledgments.}
This work has benefited from the support of the AI Chair EXPEKCTATION (ANR-19-CHIA-0005-01) of the French National Research Agency. It was also partially supported by TAILOR, a project funded by EU Horizon 2020 research and innovation programme under GA No 952215.

\bibliographystyle{cas-model2-names}
\bibliography{aij2026_bibliography}

\begin{thebibliography}{66}
\expandafter\ifx\csname natexlab\endcsname\relax\def\natexlab#1{#1}\fi
\providecommand{\url}[1]{\texttt{#1}}
\providecommand{\href}[2]{#2}
\providecommand{\path}[1]{#1}
\providecommand{\DOIprefix}{doi:}
\providecommand{\ArXivprefix}{arXiv:}
\providecommand{\URLprefix}{URL: }
\providecommand{\Pubmedprefix}{pmid:}
\providecommand{\doi}[1]{\href{http://dx.doi.org/#1}{\path{#1}}}
\providecommand{\Pubmed}[1]{\href{pmid:#1}{\path{#1}}}
\providecommand{\bibinfo}[2]{#2}
\ifx\xfnm\relax \def\xfnm[#1]{\unskip,\space#1}\fi
\bibitem[{Agarwal et~al.(2010)Agarwal, Negahban and Wainwright}]{Agarwal.NeurIPS.2010}
\bibinfo{author}{Agarwal, A.}, \bibinfo{author}{Negahban, S.}, \bibinfo{author}{Wainwright, M.J.}, \bibinfo{year}{2010}.
\newblock \bibinfo{title}{Fast global convergence rates of gradient methods for high-dimensional statistical recovery}, in: \bibinfo{booktitle}{Advances in Neural Information Processing Systems 23: Proceedings of the Annual Conference on Neural Information Processing Systems (NeurIPS)}.
\bibitem[{Agarwal et~al.(2021)Agarwal, Jabbari, Agarwal, Upadhyay, Wu and Lakkaraju}]{Agarwal.ICML.2021}
\bibinfo{author}{Agarwal, S.}, \bibinfo{author}{Jabbari, S.}, \bibinfo{author}{Agarwal, C.}, \bibinfo{author}{Upadhyay, S.}, \bibinfo{author}{Wu, S.}, \bibinfo{author}{Lakkaraju, H.}, \bibinfo{year}{2021}.
\newblock \bibinfo{title}{Towards the unification and robustness of perturbation and gradient based explanations}, in: \bibinfo{booktitle}{Proceedings of the 38th International Conference on Machine Learning (ICML)}, pp. \bibinfo{pages}{110--119}.
\bibitem[{Alvarez-Melis and Jaakkola(2018)}]{Alvarez-Melis.WHI.2018}
\bibinfo{author}{Alvarez-Melis, D.}, \bibinfo{author}{Jaakkola, T.}, \bibinfo{year}{2018}.
\newblock \bibinfo{title}{On the robustness of interpretability methods}, in: \bibinfo{booktitle}{Proceedings of the 2018 ICML Workshop on Human Interpretability in Machine Learning (WHI)}.
\bibitem[{Arenas et~al.(2022)Arenas, Barcel{\'{o}}, Orth and Subercaseaux}]{Arenas.NeurIPS.2022}
\bibinfo{author}{Arenas, M.}, \bibinfo{author}{Barcel{\'{o}}, P.}, \bibinfo{author}{Orth, M.A.R.}, \bibinfo{author}{Subercaseaux, B.}, \bibinfo{year}{2022}.
\newblock \bibinfo{title}{On computing probabilistic explanations for decision trees,}, in: \bibinfo{booktitle}{Advances in Neural Information Processing Systems 35: Proceedings of the Annual Conference on Neural Information Processing Systems (NeurIPS)}.
\bibitem[{Audemard et~al.(2022)Audemard, Bellart, Bounia, Koriche, Lagniez and Marquis}]{Audemard.AAAI.2022}
\bibinfo{author}{Audemard, G.}, \bibinfo{author}{Bellart, S.}, \bibinfo{author}{Bounia, L.}, \bibinfo{author}{Koriche, F.}, \bibinfo{author}{Lagniez, J.}, \bibinfo{author}{Marquis, P.}, \bibinfo{year}{2022}.
\newblock \bibinfo{title}{Trading complexity for sparsity in random forest explanations}, in: \bibinfo{booktitle}{Annual {AAAI} Conference on Artificial Intelligence}, pp. \bibinfo{pages}{5461--5469}.
\bibitem[{Barcel{\'{o}} et~al.(2020)Barcel{\'{o}}, Monet, P{\'{e}}rez and Subercaseaux}]{Barcelo.NeurIPS.2020}
\bibinfo{author}{Barcel{\'{o}}, P.}, \bibinfo{author}{Monet, M.}, \bibinfo{author}{P{\'{e}}rez, J.}, \bibinfo{author}{Subercaseaux, B.}, \bibinfo{year}{2020}.
\newblock \bibinfo{title}{Model interpretability through the lens of computational complexity}, in: \bibinfo{booktitle}{Advances in Neural Information Processing Systems 33: Proceedings of the Annual Conference on Neural Information Processing Systems (NeurIPS)}.
\bibitem[{Bartlett et~al.(2006)Bartlett, Jordan and McAuliffe}]{Bartlett.JASA.2006}
\bibinfo{author}{Bartlett, P.L.}, \bibinfo{author}{Jordan, M.I.}, \bibinfo{author}{McAuliffe, J.D.}, \bibinfo{year}{2006}.
\newblock \bibinfo{title}{Convexity, classification, and risk bounds}.
\newblock \bibinfo{journal}{Journal of the American Statistical Association} \bibinfo{volume}{101}, \bibinfo{pages}{138--156}.
\bibitem[{Bassan et~al.(2026)Bassan, Huang and Katz}]{Bassan.ICLR.2026}
\bibinfo{author}{Bassan, S.}, \bibinfo{author}{Huang, X.}, \bibinfo{author}{Katz, G.}, \bibinfo{year}{2026}.
\newblock \bibinfo{title}{Unifying formal explanations: A complexity-theoretic perspective}, in: \bibinfo{booktitle}{International Conference on Learning Representations (ICLR)}.
\bibitem[{Beale et~al.(1967)Beale, Kendall and Mann}]{Beale.Biometrika.1967}
\bibinfo{author}{Beale, E.M.L.}, \bibinfo{author}{Kendall, M.G.}, \bibinfo{author}{Mann, D.W.}, \bibinfo{year}{1967}.
\newblock \bibinfo{title}{The discarding of variables in multivariate analysis}.
\newblock \bibinfo{journal}{Biometrika} \bibinfo{volume}{54}, \bibinfo{pages}{357--366}.
\bibitem[{Bertsimas et~al.(2016)Bertsimas, King and Mazumder}]{Bertsimas.AOS.2016}
\bibinfo{author}{Bertsimas, D.}, \bibinfo{author}{King, A.}, \bibinfo{author}{Mazumder, R.}, \bibinfo{year}{2016}.
\newblock \bibinfo{title}{Best subset selection via a modern optimization lens}.
\newblock \bibinfo{journal}{The Annals of Statistics} \bibinfo{volume}{44}, \bibinfo{pages}{813--852}.
\bibitem[{Bertsimas and Weismantel(2005)}]{Bertsimas.Book.2005}
\bibinfo{author}{Bertsimas, D.}, \bibinfo{author}{Weismantel, R.}, \bibinfo{year}{2005}.
\newblock \bibinfo{title}{Optimization Over Integers}.
\newblock \bibinfo{publisher}{Dynamic Ideas}.
\bibitem[{Blanc et~al.(2021)Blanc, Lange and Tan}]{Blanc.NeurIPS.2021}
\bibinfo{author}{Blanc, G.}, \bibinfo{author}{Lange, J.}, \bibinfo{author}{Tan, L.}, \bibinfo{year}{2021}.
\newblock \bibinfo{title}{Provably efficient, succinct, and precise explanations}, in: \bibinfo{booktitle}{Advances in Neural Information Processing Systems 34: Proceedings of the Annual Conference on Neural Information Processing Systems (NeurIPS)}, pp. \bibinfo{pages}{6129--6141}.
\bibitem[{Blumensath and Davies(2008)}]{Blumensath.JFAA.2008}
\bibinfo{author}{Blumensath, T.}, \bibinfo{author}{Davies, M.E.}, \bibinfo{year}{2008}.
\newblock \bibinfo{title}{Iterative thresholding for sparse approximations}.
\newblock \bibinfo{journal}{Journal of Fourier Analysis and Applications} \bibinfo{volume}{14}, \bibinfo{pages}{629--654}.
\bibitem[{Blumensath and Davies(2009)}]{Blumensath.ACHA.2009}
\bibinfo{author}{Blumensath, T.}, \bibinfo{author}{Davies, M.E.}, \bibinfo{year}{2009}.
\newblock \bibinfo{title}{Iterative hard thresholding for compressed sensing}.
\newblock \bibinfo{journal}{Applied and Computational Harmonic Analysis} \bibinfo{volume}{27}, \bibinfo{pages}{265--274}.
\bibitem[{Bounia and Koriche(2023)}]{Bounia.UAI.2023}
\bibinfo{author}{Bounia, L.}, \bibinfo{author}{Koriche, F.}, \bibinfo{year}{2023}.
\newblock \bibinfo{title}{Approximating probabilistic explanations via supermodular minimization}, in: \bibinfo{booktitle}{Proceedings of the 39th International Conference on Uncertainty in Artificial Intelligence (UAI)}, pp. \bibinfo{pages}{216--225}.
\bibitem[{Burkart and Huber(2021)}]{Burkart.JAIR.2021}
\bibinfo{author}{Burkart, N.}, \bibinfo{author}{Huber, M.F.}, \bibinfo{year}{2021}.
\newblock \bibinfo{title}{A survey on the explainability of supervised machine learning}.
\newblock \bibinfo{journal}{Journal of Artificial Intelligence Research} \bibinfo{volume}{70}, \bibinfo{pages}{245--317}.
\bibitem[{Candes and Tao(2005)}]{Candes.TIT.2005}
\bibinfo{author}{Candes, E.}, \bibinfo{author}{Tao, T.}, \bibinfo{year}{2005}.
\newblock \bibinfo{title}{Decoding by linear programming}.
\newblock \bibinfo{journal}{IEEE Transactions on Information Theory} \bibinfo{volume}{51}, \bibinfo{pages}{4203--4215}.
\bibitem[{Cooper and Marques-Silva(2023)}]{Cooper.AI.2023}
\bibinfo{author}{Cooper, M.}, \bibinfo{author}{Marques-Silva, J.}, \bibinfo{year}{2023}.
\newblock \bibinfo{title}{Tractability of explaining classifier decisions}.
\newblock \bibinfo{journal}{Artificial Intelligence} \bibinfo{volume}{316}, \bibinfo{pages}{103841}.
\bibitem[{Darwiche and Hirth(2020)}]{Darwiche.ECAI.2020}
\bibinfo{author}{Darwiche, A.}, \bibinfo{author}{Hirth, A.}, \bibinfo{year}{2020}.
\newblock \bibinfo{title}{On the reasons behind decisions}, in: \bibinfo{booktitle}{European Conference on Artificial Intelligence ({ECAI})}, pp. \bibinfo{pages}{712--720}.
\bibitem[{Fligner and Verducci(1993)}]{Fligner.Book.1993}
\bibinfo{author}{Fligner, M.}, \bibinfo{author}{Verducci, J.}, \bibinfo{year}{1993}.
\newblock \bibinfo{title}{Probability models and statistical analyses for ranking data}. volume~\bibinfo{volume}{80}.
\newblock \bibinfo{publisher}{Springer}.
\bibitem[{Garreau and von Luxburg(2020)}]{Garreau.AISTATS.2020}
\bibinfo{author}{Garreau, D.}, \bibinfo{author}{von Luxburg, U.}, \bibinfo{year}{2020}.
\newblock \bibinfo{title}{Explaining the explainer: {A} first theoretical analysis of {LIME}}, in: \bibinfo{booktitle}{Proceedings of the 23rd International Conference on Artificial Intelligence and Statistics (AISTATS)}, pp. \bibinfo{pages}{1287--1296}.
\bibitem[{Ghorbani et~al.(2019)Ghorbani, Abid and Zou}]{Ghorbani.AAAI.2019}
\bibinfo{author}{Ghorbani, A.}, \bibinfo{author}{Abid, A.}, \bibinfo{author}{Zou, J.}, \bibinfo{year}{2019}.
\newblock \bibinfo{title}{Interpretation of neural networks is fragile}, in: \bibinfo{booktitle}{Proceedings of the 33rd Conference on Artificial Intelligence (AAAI)}, pp. \bibinfo{pages}{3681--3688}.
\bibitem[{Guidotti(2024)}]{Guidotti.DMKD.2024}
\bibinfo{author}{Guidotti, R.}, \bibinfo{year}{2024}.
\newblock \bibinfo{title}{Counterfactual explanations and how to find them: literature review and benchmarking}.
\newblock \bibinfo{journal}{Data Mining and Knowledge Discovery} \bibinfo{volume}{38}, \bibinfo{pages}{2770–2824}.
\bibitem[{Halford et~al.(1998)Halford, Wilson and Phillips}]{Halford.BBS.1998}
\bibinfo{author}{Halford, G.}, \bibinfo{author}{Wilson, W.}, \bibinfo{author}{Phillips, S.}, \bibinfo{year}{1998}.
\newblock \bibinfo{title}{Relational complexity: A measure of capacity limitations in processing}.
\newblock \bibinfo{journal}{Behavioral and Brain Sciences} \bibinfo{volume}{21}, \bibinfo{pages}{803--831}.
\bibitem[{Hocking and Leslie(1967)}]{Hocking.Tech.1967}
\bibinfo{author}{Hocking, R.R.}, \bibinfo{author}{Leslie, R.N.}, \bibinfo{year}{1967}.
\newblock \bibinfo{title}{Selection of the best subset in regression analysis}.
\newblock \bibinfo{journal}{Technometrics} \bibinfo{volume}{9}, \bibinfo{pages}{531--540}.
\bibitem[{Hui and Belkin(2021)}]{Hui.ICLR.2021}
\bibinfo{author}{Hui, L.}, \bibinfo{author}{Belkin, M.}, \bibinfo{year}{2021}.
\newblock \bibinfo{title}{Evaluation of neural architectures trained with square loss vs cross-entropy in classification tasks}, in: \bibinfo{booktitle}{Proceedings of the 9th International Conference on Learning Representations (ICLR)}.
\bibitem[{Ignatiev(2020)}]{Ignatiev.IJCAI.2020}
\bibinfo{author}{Ignatiev, A.}, \bibinfo{year}{2020}.
\newblock \bibinfo{title}{Towards trustable explainable {AI}}, in: \bibinfo{booktitle}{Proceedings of the 29th International Joint Conference on Artificial Intelligence (IJCAI)}, pp. \bibinfo{pages}{5154--5158}.
\bibitem[{Ignatiev et~al.(2020)Ignatiev, Cooper, Siala, Hebrard and Marques-Silva}]{Ignatiev.CP.2020}
\bibinfo{author}{Ignatiev, A.}, \bibinfo{author}{Cooper, M.}, \bibinfo{author}{Siala, M.}, \bibinfo{author}{Hebrard, E.}, \bibinfo{author}{Marques-Silva, J.}, \bibinfo{year}{2020}.
\newblock \bibinfo{title}{Towards formal fairness in machine learning}, in: \bibinfo{booktitle}{International Conference on Principles and Practiceof Constraint Programming (CP)}, p. \bibinfo{pages}{846–867}.
\bibitem[{Ignatiev et~al.(2019)Ignatiev, Narodytska and Marques{-}Silva}]{Ignatiev.AAAI.2019}
\bibinfo{author}{Ignatiev, A.}, \bibinfo{author}{Narodytska, N.}, \bibinfo{author}{Marques{-}Silva, J.}, \bibinfo{year}{2019}.
\newblock \bibinfo{title}{Abduction-based explanations for machine learning models}, in: \bibinfo{booktitle}{Annual {AAAI} Conference on Artificial Intelligence}, pp. \bibinfo{pages}{1511--1519}.
\bibitem[{Izza et~al.(2023)Izza, Huang, Ignatiev, Narodytska, Cooper and Marques-Silva}]{Izza.JAR.2023}
\bibinfo{author}{Izza, Y.}, \bibinfo{author}{Huang, X.}, \bibinfo{author}{Ignatiev, A.}, \bibinfo{author}{Narodytska, N.}, \bibinfo{author}{Cooper, M.}, \bibinfo{author}{Marques-Silva, J.}, \bibinfo{year}{2023}.
\newblock \bibinfo{title}{On computing probabilistic abductive explanations}.
\newblock \bibinfo{journal}{International Journal of Approximate Reasoning} \bibinfo{volume}{159}, \bibinfo{pages}{108939}.
\bibitem[{Jain et~al.(2014)Jain, Tewari and Kar}]{Jain.NeurIPS.2014}
\bibinfo{author}{Jain, P.}, \bibinfo{author}{Tewari, A.}, \bibinfo{author}{Kar, P.}, \bibinfo{year}{2014}.
\newblock \bibinfo{title}{On iterative hard thresholding methods for high-dimensional m-estimation}, in: \bibinfo{booktitle}{Advances in Neural Information Processing Systems 27: Proceedings of the Annual Conference on Neural Information Processing Systems (NeurIPS)}, pp. \bibinfo{pages}{685--693}.
\bibitem[{Jalali et~al.(2011)Jalali, Johnson and Ravikumar}]{Jalali.NeurIPS.2011}
\bibinfo{author}{Jalali, A.}, \bibinfo{author}{Johnson, C.}, \bibinfo{author}{Ravikumar, P.}, \bibinfo{year}{2011}.
\newblock \bibinfo{title}{On learning discrete graphical models using greedy methods}, in: \bibinfo{booktitle}{Advances in Neural Information Processing Systems 24: Proceedings of the Annual Conference on Neural Information Processing Systems (NeurIPS)}, pp. \bibinfo{pages}{1935--1943}.
\bibitem[{Johnson-Laird(2010)}]{JohnsonLaird.PNAS.2010}
\bibinfo{author}{Johnson-Laird, P.}, \bibinfo{year}{2010}.
\newblock \bibinfo{title}{Mental models and human reasoning}.
\newblock \bibinfo{journal}{Proceedings of the National Academy of Sciences} \bibinfo{volume}{107}, \bibinfo{pages}{18243--18250}.
\bibitem[{Kakade et~al.(2008)Kakade, Sridharan and Tewari}]{Kakade.NeurIPS.2008}
\bibinfo{author}{Kakade, S.M.}, \bibinfo{author}{Sridharan, K.}, \bibinfo{author}{Tewari, A.}, \bibinfo{year}{2008}.
\newblock \bibinfo{title}{On the complexity of linear prediction: Risk bounds, margin bounds, and regularization}, in: \bibinfo{booktitle}{Advances in Neural Information Processing Systems 21: Proceedings of the Annual Conference on Neural Information Processing Systems (NeurIPS)}, pp. \bibinfo{pages}{793--800}.
\bibitem[{Koriche et~al.(2024)Koriche, Lagniez, Mengel and Tran}]{Koriche.ECML.2024}
\bibinfo{author}{Koriche, F.}, \bibinfo{author}{Lagniez, J.M.}, \bibinfo{author}{Mengel, S.}, \bibinfo{author}{Tran, C.}, \bibinfo{year}{2024}.
\newblock \bibinfo{title}{Learning model agnostic explanations via constraint programming}, in: \bibinfo{booktitle}{Machine Learning and Knowledge Discovery in Databases. Research Track (ECML/PKDD)}, pp. \bibinfo{pages}{437--453}.
\bibitem[{Kozachinskiy(2023)}]{Kozachinskiy.arXiv.2023}
\bibinfo{author}{Kozachinskiy, A.}, \bibinfo{year}{2023}.
\newblock \bibinfo{title}{Inapproximability of sufficient reasons for decision trees}.
\newblock \bibinfo{journal}{CoRR} \bibinfo{volume}{abs/2304.02781}.
\newblock \URLprefix \url{https://doi.org/10.48550/arXiv.2304.02781}, \href{http://arxiv.org/abs/2304.02781}{\tt arXiv:2304.02781}.
\bibitem[{Kyrillidis et~al.(2013)Kyrillidis, Becker, Cevher and Koch}]{Kyrillidis.ICML.2013}
\bibinfo{author}{Kyrillidis, A.}, \bibinfo{author}{Becker, S.}, \bibinfo{author}{Cevher, V.}, \bibinfo{author}{Koch, C.}, \bibinfo{year}{2013}.
\newblock \bibinfo{title}{Sparse projections onto the simplex}, in: \bibinfo{booktitle}{Proceedings of the 30th International Conference on Machine Learning (ICML)}, pp. \bibinfo{pages}{235--243}.
\bibitem[{Lage et~al.(2019)Lage, Chen, He, Narayanan, Kim, Gershman and Doshi{-}Velez}]{Lage.HCOMP.2019}
\bibinfo{author}{Lage, I.}, \bibinfo{author}{Chen, E.}, \bibinfo{author}{He, J.}, \bibinfo{author}{Narayanan, M.}, \bibinfo{author}{Kim, B.}, \bibinfo{author}{Gershman, S.J.}, \bibinfo{author}{Doshi{-}Velez, F.}, \bibinfo{year}{2019}.
\newblock \bibinfo{title}{Human evaluation of models built for interpretability}, in: \bibinfo{booktitle}{Annual {AAAI} Conference on Human Computation and Crowdsourcing (HCOMP)}, pp. \bibinfo{pages}{59--67}.
\bibitem[{Lakkaraju et~al.(2019)Lakkaraju, Kamar, Caruana and Leskovec}]{Lakkaraju.AIES.2019}
\bibinfo{author}{Lakkaraju, H.}, \bibinfo{author}{Kamar, E.}, \bibinfo{author}{Caruana, R.}, \bibinfo{author}{Leskovec, J.}, \bibinfo{year}{2019}.
\newblock \bibinfo{title}{Faithful and customizable explanations of black box models}, in: \bibinfo{booktitle}{Proceedings of the 2019 {AAAI/ACM} Conference on AI, Ethics, and Society (AIES)}, pp. \bibinfo{pages}{131--138}.
\bibitem[{Li et~al.(2021)Li, Nagarajan, Plumb and Talwalkar}]{Li.ICLR.2021}
\bibinfo{author}{Li, J.}, \bibinfo{author}{Nagarajan, V.}, \bibinfo{author}{Plumb, G.}, \bibinfo{author}{Talwalkar, A.}, \bibinfo{year}{2021}.
\newblock \bibinfo{title}{A learning theoretic perspective on local explainability}, in: \bibinfo{booktitle}{Proceedings of the 9th International Conference on Learning Representations (ICLR)}.
\bibitem[{Littman et~al.(1998)Littman, Goldsmith and Mundhenk}]{Littman.JAIR.1998}
\bibinfo{author}{Littman, M.}, \bibinfo{author}{Goldsmith, J.}, \bibinfo{author}{Mundhenk, M.}, \bibinfo{year}{1998}.
\newblock \bibinfo{title}{The computational complexity of probabilistic planning}.
\newblock \bibinfo{journal}{Journal of Artificial Intelligence Research} \bibinfo{volume}{9}, \bibinfo{pages}{1--36}.
\bibitem[{Lundberg and Lee(2017)}]{Lundberg.NeurIPS.2017}
\bibinfo{author}{Lundberg, S.M.}, \bibinfo{author}{Lee, S.}, \bibinfo{year}{2017}.
\newblock \bibinfo{title}{A unified approach to interpreting model predictions}, in: \bibinfo{booktitle}{Advances in Neural Information Processing Systems 30: Proceedings of the Annual Conference on Neural Information Processing Systems (NeurIPS)}, pp. \bibinfo{pages}{4765--4774}.
\bibitem[{Mallows(1957)}]{Mallows.Biometrika.1957}
\bibinfo{author}{Mallows, C.L.}, \bibinfo{year}{1957}.
\newblock \bibinfo{title}{Non-null ranking models. i}.
\newblock \bibinfo{journal}{Biometrika} \bibinfo{volume}{44}, \bibinfo{pages}{114--130}.
\bibitem[{Marden(1996)}]{Marden.Book.1996}
\bibinfo{author}{Marden, J.}, \bibinfo{year}{1996}.
\newblock \bibinfo{title}{Analyzing and modeling rank data}.
\newblock \bibinfo{publisher}{CRC Press}.
\bibitem[{Marques-Silva and Ignatiev(2022)}]{MarquesSilva.AAAI.2022}
\bibinfo{author}{Marques-Silva, J.}, \bibinfo{author}{Ignatiev, A.}, \bibinfo{year}{2022}.
\newblock \bibinfo{title}{Delivering trustworthy {AI} through formal {XAI}}.
\newblock \bibinfo{journal}{Proceedings of the 36th Annual AAAI Conference on Artificial Intelligence} , \bibinfo{pages}{12342--12350}.
\bibitem[{Miller(1956)}]{Miller.PR.1956}
\bibinfo{author}{Miller, G.A.}, \bibinfo{year}{1956}.
\newblock \bibinfo{title}{The magical number seven, plus or minus two: Some limits on our capacity for processing information}.
\newblock \bibinfo{journal}{Psychological Review} \bibinfo{volume}{63}, \bibinfo{pages}{81--97}.
\bibitem[{Molnar(2025)}]{Molnar.Book.2025}
\bibinfo{author}{Molnar, C.}, \bibinfo{year}{2025}.
\newblock \bibinfo{title}{Interpretable Machine Learning: A Guide for Making Black Box Models Explainable}.
\newblock \bibinfo{edition}{3rd} ed., \bibinfo{publisher}{leanpub.com}.
\bibitem[{Mukherjee and Basu(2017)}]{Mukherjee.arXiv.2017}
\bibinfo{author}{Mukherjee, A.}, \bibinfo{author}{Basu, A.}, \bibinfo{year}{2017}.
\newblock \bibinfo{title}{Lower bounds over {B}oolean inputs for deep neural networks with {R}e{LU} gates}.
\newblock \bibinfo{journal}{CoRR} \bibinfo{volume}{abs/1711.03073}.
\newblock \href{http://arxiv.org/abs/1711.03073}{\tt arXiv:1711.03073}.
\bibitem[{Narayanan et~al.(2018)Narayanan, Chen, He, Kim, Gershman and Doshi{-}Velez}]{Narayanan.arXiv.2018}
\bibinfo{author}{Narayanan, M.}, \bibinfo{author}{Chen, E.}, \bibinfo{author}{He, J.}, \bibinfo{author}{Kim, B.}, \bibinfo{author}{Gershman, S.}, \bibinfo{author}{Doshi{-}Velez, F.}, \bibinfo{year}{2018}.
\newblock \bibinfo{title}{How do humans understand explanations from machine learning systems? {A}n evaluation of the human-interpretability of explanation}.
\newblock \bibinfo{journal}{CoRR} \bibinfo{volume}{abs/1802.00682}.
\newblock \href{http://arxiv.org/abs/1802.00682}{\tt arXiv:1802.00682}.
\bibitem[{Natarajan(1995)}]{Natarajan.SJC.1995}
\bibinfo{author}{Natarajan, B.K.}, \bibinfo{year}{1995}.
\newblock \bibinfo{title}{Sparse approximate solutions to linear systems}.
\newblock \bibinfo{journal}{{SIAM} J. Comput.} \bibinfo{volume}{24}, \bibinfo{pages}{227--234}.
\bibitem[{Negahban et~al.(2009)Negahban, Yu, Wainwright and Ravikumar}]{Negahban.NeurIPS.2009}
\bibinfo{author}{Negahban, S.}, \bibinfo{author}{Yu, B.}, \bibinfo{author}{Wainwright, M.}, \bibinfo{author}{Ravikumar, P.}, \bibinfo{year}{2009}.
\newblock \bibinfo{title}{A unified framework for high-dimensional analysis of {M}-estimators with decomposable regularizers}, in: \bibinfo{booktitle}{Advances in Neural Information Processing Systems: Proceedings of the Annual Conference on Neural Information Processing Systems (NeurIPS)}.
\bibitem[{Ordyniak et~al.(2023)Ordyniak, Paesani and Szeider}]{Ordyniak.IJCAI.2023}
\bibinfo{author}{Ordyniak, S.}, \bibinfo{author}{Paesani, G.}, \bibinfo{author}{Szeider, S.}, \bibinfo{year}{2023}.
\newblock \bibinfo{title}{The parameterized complexity of finding concise local explanations}, in: \bibinfo{booktitle}{International Joint Conference on Artificial Intelligence (IJCAI)}, pp. \bibinfo{pages}{3312--3320}.
\bibitem[{Parberry(1996)}]{Parberry.MPNN.1996}
\bibinfo{author}{Parberry, I.}, \bibinfo{year}{1996}.
\newblock \bibinfo{title}{Circuit complexity and feedforward neural networks}, in: \bibinfo{editor}{Smolensky, P.}, \bibinfo{editor}{Mozer, M.}, \bibinfo{editor}{Rumelhart, D.} (Eds.), \bibinfo{booktitle}{Mathematical Perspectives on Neural Networks}, pp. \bibinfo{pages}{85--111}.
\bibitem[{Plumb et~al.(2018)Plumb, Molitor and Talwalkar}]{Plumb.NeurIPS.2018}
\bibinfo{author}{Plumb, G.}, \bibinfo{author}{Molitor, D.}, \bibinfo{author}{Talwalkar, A.}, \bibinfo{year}{2018}.
\newblock \bibinfo{title}{Model agnostic supervised local explanations}, in: \bibinfo{booktitle}{Advances in Neural Information Processing Systems 31. Proceedings of the Annual Conference on Neural Information Processing Systems (NeurIPS)}, pp. \bibinfo{pages}{2520--2529}.
\bibitem[{Ribeiro et~al.(2016)Ribeiro, Singh and Guestrin}]{Ribeiro.KDD.2016}
\bibinfo{author}{Ribeiro, M.T.}, \bibinfo{author}{Singh, S.}, \bibinfo{author}{Guestrin, C.}, \bibinfo{year}{2016}.
\newblock \bibinfo{title}{"why should {I} trust you?": Explaining the predictions of any classifier}, in: \bibinfo{booktitle}{{ACM} {SIGKDD} International Conference on Knowledge Discovery and Data Mining}, pp. \bibinfo{pages}{1135--1144}.
\bibitem[{Ribeiro et~al.(2018)Ribeiro, Singh and Guestrin}]{Ribeiro.AAAI.2018}
\bibinfo{author}{Ribeiro, M.T.}, \bibinfo{author}{Singh, S.}, \bibinfo{author}{Guestrin, C.}, \bibinfo{year}{2018}.
\newblock \bibinfo{title}{Anchors: High-precision model-agnostic explanations}, in: \bibinfo{booktitle}{Annual {AAAI} Conference on Artificial Intelligence (AAAI)}, pp. \bibinfo{pages}{1527--1535}.
\bibitem[{Rifkin and Klautau(2004)}]{Rifkin.JMLR.2004}
\bibinfo{author}{Rifkin, R.}, \bibinfo{author}{Klautau, A.}, \bibinfo{year}{2004}.
\newblock \bibinfo{title}{In defense of one-vs-all classification}.
\newblock \bibinfo{journal}{Journal of Machine Learning Research} \bibinfo{volume}{5}, \bibinfo{pages}{101--141}.
\bibitem[{Shalev-Shwartz and Ben-David(2014)}]{ShalevShwartz.Book.2014}
\bibinfo{author}{Shalev-Shwartz, S.}, \bibinfo{author}{Ben-David, S.}, \bibinfo{year}{2014}.
\newblock \bibinfo{title}{Understanding Machine Learning: From Theory to Algorithms}.
\newblock \bibinfo{publisher}{Cambridge University Press}.
\bibitem[{Shalev{-}Shwartz et~al.(2010)Shalev{-}Shwartz, Srebro and Zhang}]{ShalevShwartz.SJO.2010}
\bibinfo{author}{Shalev{-}Shwartz, S.}, \bibinfo{author}{Srebro, N.}, \bibinfo{author}{Zhang, T.}, \bibinfo{year}{2010}.
\newblock \bibinfo{title}{Trading accuracy for sparsity in optimization problems with sparsity constraints}.
\newblock \bibinfo{journal}{{SIAM} Journal of Optimization} \bibinfo{volume}{20}, \bibinfo{pages}{2807--2832}.
\bibitem[{Slack et~al.(2020)Slack, Hilgard, Jia, Singh and Lakkaraju}]{Slack.AIES.2020}
\bibinfo{author}{Slack, D.}, \bibinfo{author}{Hilgard, S.}, \bibinfo{author}{Jia, E.}, \bibinfo{author}{Singh, S.}, \bibinfo{author}{Lakkaraju, H.}, \bibinfo{year}{2020}.
\newblock \bibinfo{title}{Fooling {LIME} and {SHAP}: Adversarial attacks on post hoc explanation methods}, in: \bibinfo{booktitle}{Proceedings of the AAAI/ACM Conference on AI, Ethics, and Society (AIES)}, p. \bibinfo{pages}{180–186}.
\bibitem[{Subercaseaux et~al.(2025)Subercaseaux, Arenas and Meel}]{Subercaseaux.AAAI.2025}
\bibinfo{author}{Subercaseaux, B.}, \bibinfo{author}{Arenas, M.}, \bibinfo{author}{Meel, K.S.}, \bibinfo{year}{2025}.
\newblock \bibinfo{title}{Probabilistic explanations for linear models}, in: \bibinfo{booktitle}{Annual AAAI Conference on Artificial Intelligence}, pp. \bibinfo{pages}{20655--20662}.
\bibitem[{Suykens and Vandewalle(1999)}]{Suykens.NPL.1999}
\bibinfo{author}{Suykens, J.A.}, \bibinfo{author}{Vandewalle, J.}, \bibinfo{year}{1999}.
\newblock \bibinfo{title}{Least squares support vector machine classifiers}.
\newblock \bibinfo{journal}{Neural processing letters} \bibinfo{volume}{9}, \bibinfo{pages}{293--300}.
\bibitem[{Wainwright(2019)}]{Wainwright.CUP.2019}
\bibinfo{author}{Wainwright, M.}, \bibinfo{year}{2019}.
\newblock \bibinfo{title}{High-Dimensional Statistics: A Non-Asymptotic Viewpoint}.
\newblock \bibinfo{publisher}{Cambridge University Press}.
\bibitem[{W{\"{a}}ldchen et~al.(2021)W{\"{a}}ldchen, MacDonald, Hauch and Kutyniok}]{Waldchen.JAIR.2021}
\bibinfo{author}{W{\"{a}}ldchen, S.}, \bibinfo{author}{MacDonald, J.}, \bibinfo{author}{Hauch, S.}, \bibinfo{author}{Kutyniok, G.}, \bibinfo{year}{2021}.
\newblock \bibinfo{title}{The computational complexity of understanding binary classifier decisions}.
\newblock \bibinfo{journal}{Journal of Artificial Intelligence Research} \bibinfo{volume}{70}, \bibinfo{pages}{351--387}.
\bibitem[{Yuan et~al.(2017)Yuan, Li and Zhang}]{Yuan.JMLR.2017}
\bibinfo{author}{Yuan, X.}, \bibinfo{author}{Li, P.}, \bibinfo{author}{Zhang, T.}, \bibinfo{year}{2017}.
\newblock \bibinfo{title}{Gradient hard thresholding pursuit}.
\newblock \bibinfo{journal}{Journal of Machince Learning Research} \bibinfo{volume}{18}, \bibinfo{pages}{166:1--166:43}.
\bibitem[{Zhao et~al.(2021)Zhao, Huang, Huang, Robu and Flynn}]{Zhao.UAI.2021}
\bibinfo{author}{Zhao, X.}, \bibinfo{author}{Huang, W.}, \bibinfo{author}{Huang, X.}, \bibinfo{author}{Robu, V.}, \bibinfo{author}{Flynn, D.}, \bibinfo{year}{2021}.
\newblock \bibinfo{title}{{BayLIME}: Bayesian local interpretable model-agnostic explanations}, in: \bibinfo{booktitle}{Proceedings of the 37th International Conference on Uncertainty in Artificial Intelligence (UAI)}, pp. \bibinfo{pages}{887--896}.

\end{thebibliography}
\end{document}